\renewcommand{\epsilon}{\varepsilon}
\pgfplotsset{compat=1.15}
\title{Efficient Algorithms for Learning Monophonic Halfspaces in Graphs}
\newcommand{\scC}{\mathcal{C}} %
\newcommand{\scH}{\mathcal{H}} %
\newcommand{\scHm}{\mathcal{H}_{\textsc{mp}}} %
\newcommand{\scO}{\mathcal{O}}
\newcommand{\Cut}{\delta}
\newcommand{\Border}{\Gamma}
\newcommand{\MonInt}{I}
\newcommand{\algoname}[1]{\ensuremath{\textsc{#1}}\xspace}
\newcommand{\AlgoList}{\algoname{mh-fpt-list}}
\newcommand{\PolyDelay}{\algoname{mh-list}}
\newcommand{\PolyChecker}{\algoname{mh-check}}
\newcommand{\R}{\mathbb{R}}
\newcommand{\omgtil}{\tilde{\omega}}
\newcommand{\CC}{\mathrm{cc}} %
\newcommand{\satvar}{\varphi}
\DeclareMathOperator{\VC}{VC}
\DeclareMathOperator{\VS}{VS}
\DeclareMathOperator{\diam}{diam}
\DeclareMathOperator{\poly}{poly}
\DeclareMathOperator{\qc}{qc} %
\DeclareMathOperator{\conv}{conv}
\DeclareMathOperator*{\argmin}{arg\,min}
\newcounter{emmanuelbox}\setcounter{emmanuelbox}{0}
\begin{document}

\maketitle

\begin{abstract}%
We study the problem of learning a binary classifier on the vertices of a graph. In particular, we consider classifiers given by \emph{monophonic halfspaces}, partitions of the vertices that are convex in a certain abstract sense.
Monophonic halfspaces, and related notions such as geodesic halfspaces, have recently attracted interest, and several connections have been drawn between their properties (e.g., their VC dimension) and the structure of the underlying graph $G$.
We prove several novel results for learning monophonic halfspaces in the supervised, online, and active settings. Our main result is that a monophonic halfspace can be learned with near-optimal passive sample complexity in time polynomial in $n=|V(G)|$. This requires us to devise a polynomial-time algorithm for consistent hypothesis checking, based on several structural insights on monophonic halfspaces and on a reduction to $2$-satisfiability. We prove similar results for the online and active settings.
We also show that the concept class can be enumerated with delay $\poly(n)$, and that empirical risk minimization can be performed in time $2^{\omega(G)}\poly(n)$ where $\omega(G)$ is the clique number of $G$. These results answer open questions from the literature~\citep{gonzalez2020covering}, and show a contrast with geodesic halfspaces, for which some of the said problems are NP-hard~\citep{seiffarth2023maximal}.
\end{abstract}

\begin{keywords}%
  graph convexity, online learning, active learning, polynomial time
\end{keywords}

\section{Introduction}
We study the problem of binary classification of the vertices of a graph. With the advent of social networks and the use of graph-based techniques in machine learning, this problem has received considerable attention in all the most common learning settings, including supervised learning~\citep{hanneke2006analysis,pelckmans2007margin}, active learning~\citep{afshani2007complexity,guillory2009label,cesa2010active,dasarathy2015s2}, and online learning \citep{herbster2005online,cesa2013random,herbster2015online}.
Most results obtained so far rely on the homophily principle, that is, they assume that adjacent vertices tend to belong to the same class. We take a different perspective on this problem and assume that the ground truth classes are \emph{monophonic halfspaces}, a notion related to linear separability and convexity in Euclidean spaces. In this way, we hope to exploit the intuition and technical machinery behind classes of convex concepts, which are often at the heart of machine learning models (e.g., think of intervals, halfspaces, or polytopes).

Let us spend a few words to introduce the notion of graph convexity.
In $\R^d$, a set $C$ is convex if for all $x,y \in C$ the connecting segment $I(x,y)$ lies in $C$. One can port this intuition to graphs in the following way. Given two vertices $x$ and $y$ in a graph $G$, one defines the \emph{interval} $I(x,y)$. For instance, $I(x,y)$ can be the set of all nodes $z \in V(G)$ that lie on some shortest path between $x$ and $y$, that is, such that $d(x,y)=d(x,z)+d(z,y)$ where $d\colon V\times V\to \mathbb{N}$ is the shortest-path distance in $G$. In this way $I(x,y)$ becomes the graph analogue of a segment in $\R^d$. This yields the well-known notion of \emph{geodesic} convexity: a set $C \subseteq V$ is geodesically convex if the induced subgraph $G[C]$ is connected and $I(x,y) \subseteq C$ for all $x,y \in C$, and is a geodesic halfspace if $V \setminus C$ is geodesically convex too. Thus, $C$ is the graph analogue of a halfspace in~$\R^d$. Interestingly, several bounds that hold for halfspaces in $\R^d$ continue to hold: for example, the VC dimension of geodesic halfspaces is bounded by their Radon number. One can therefore port several learning-theoretical results to the realm of graphs.

Starting from decade-old classic results, in recent years there has been a revived interest in the study of graph halfspaces and their learning properties~\citep{duchet1983ensemble,chepoi1986some,chepoi1994separation,farber1986convexity,pelayo2013geodesic,thiessen2021active,bressan2021exact,chalopin2022unlabeled,chalopin2022first,seiffarth2023maximal}.
Most works, however, give purely existential bounds on the invariants (the Radon number, the Helly number, the VC dimension, etc.) and study the connection with properties of the graph (maximum degree, clique number, etc.). %
Little is known, instead, about the existence of efficient algorithms for learning a halfspace in the most common supervised learning settings. Note that the existence of efficient algorithms is not obvious: already for the class of geodesic halfspaces described above, even just deciding if there exists a hypothesis consistent with a given labeled sample of vertices is NP-hard \citep{seiffarth2023maximal}.
In this work we study efficient learning algorithms for another variant of abstract graph halfspaces, called \emph{monophonic halfspaces}, which are defined through what is called minimal convexity or induced-path convexity \citep{farber1986convexity,duchet1988convex}. 

Let us introduce some notation. Given two distinct vertices $x,y \in V$, the \emph{monophonic interval} $\MonInt(x,y)$ between $x$ and $y$ is the set of all vertices that lie on some induced path between $x$ and $y$. A set $C \subseteq V$ is \emph{monophonically convex} (m-convex) if $G[C]$ is connected and $\MonInt(x,y)\subseteq C$ for all $x,y\in C$. A set $H \subseteq V$ is a \emph{monophonic halfspace} (m-halfspace) if both $H$ and $\overline{H} = V \setminus H$ are m-convex. For instance, if $G$ is a tree, then the connected components left by deleting an edge are m-halfspaces; if $G$ is a clique, then any subset is a m-halfspace, see Figure~\ref{fig:halfspace} for another example. In real-world networks, communities and clusters often tend to be geodesically convex, e.g., in gene similarity networks \citep{zhou2002transitive}, protein-protein interaction networks \citep{li2013identification}, community detection benchmark datasets \citep{thiessen2021active}, and collaboration networks \citep{vsubelj2019convexity}. In the latter and many other cases, the set of  monophonic and geodesic convexity actually coincides \citep{malvestuto2012characteristic}.

Monophonic halfspaces are among the most studied graph halfspaces \citep{bandelt1989graphs,changat2005convexities,dourado2010complexity}, second only to geodesic halfspaces. Nonetheless, only a couple of facts are known; for instance, a connection between the Radon number of m-convex sets and the clique number $\omega(G)$, or that computing a $k$-partition of $V$ into m-convex subsets is hard for $k \ge 3$ \citep{gonzalez2020covering}. The present work fills this gap by providing several concrete results.

\subsection*{Our Contributions}

For a graph $G=(V,E)$, let $n=|V|$ and $m=|E|$, and let $\scHm=\scHm(G)$ denote the concept class consisting of all m-halfspaces of $G$.

\begin{enumerate}[leftmargin=.19in,itemsep=0pt,topsep=0pt]
\item In the realizable PAC setting, we give an algorithm that with probability $1-\delta$ learns a monophonic halfspace with accuracy $\epsilon$ by using $\scO\!\left(\epsilon^{-1}\left(\omega(G)\log(1/\varepsilon)+\log(1/\delta)\right)\right)$ labeled samples and $\poly(n)$ time. The heart of this result is a polynomial-time consistency checker, i.e., an algorithm that in time $\poly(n)$ finds a m-halfspace $H \in \scHm$ consistent with a given labeled subset of vertices, if one exists. This polytime consistency checker is nontrivial; it requires us to exploit heavily the structure of monophonic halfspaces, and to devise a polynomial-time reduction to 2-satisfiability by carefully constructing boolean formulas that express constraints on $G$.

\item In the agnostic PAC setting, we give an algorithm that with probability $1-\delta$ learns a monophonic halfspace with accuracy $\epsilon$ by using $\scO\!\left(\epsilon^{-2}\left(\omega(G)+\log(1/\delta)\right)\right)$ samples and $|\scHm|\poly(n) \le 2^{\omega(G)} \poly(n)$ time. To this end we list $\scHm$ in time $|\scHm|\poly(n)$ using the polynomial-time checker above; then, we prove that $|\scHm| \le \frac{4m \, 2^{\omega(G)}}{\omega(G)}+2$ by a careful analysis of a second listing algorithm tailored to the purpose.

\item In the realizable active setting, we give an algorithm that learns $H \in \scHm(G)$ in time $\poly(n)$ using $\scO(\omega(G)+\log\diam(G)+h(G))$ queries, where $\diam(G)$ and $h(G)$ are respectively the diameter and the ``monophonic hull number'' of~$G$. We leave open the problem of obtaining a similar result for agnostic active learning.

\item In the realizable online setting, we give an algorithm that learns $H \in \scHm(G)$ in time $\poly(n)$ by making $\scO(\omega(G)\log n)$ mistakes. We obtain this algorithm by showing a decomposition of m-halfspaces into a small disjunction of simpler concepts, and applying the Winnow algorithm. We give a similar result for the agnostic online setting. Additionally we achieve an improved mistake bound $\scO(\omega(G)+\log(n/\omega(G))$ using Halving and a runtime of $2^{\omega(G)}\poly(n)$.

\item As a byproduct of the results above, we resolve the open case $k=2$ of the partitioning problem of~\cite{gonzalez2020covering}, which showed that for $k \ge 3$ it is NP-hard to decide whether the vertex set $V$ of a graph $G$ admits a $k$-partition $V_1,\ldots,V_k$ where each $V_i$ is monophonically convex. We also prove that $|\scHm| \le 2^{\omega(G)} \poly(n)$, significantly improving on the only previously known and trivial bound $|\scHm| = \scO(n^d)$, where $d=\scO(\omega(G))$ is the VC dimension of $\scHm$.

\end{enumerate}

\bigskip\noindent
From a technical point of view, we rely on several technical results on the structure of m-halfspaces, including characterizations of their cutsets and efficient computation of so-called ``shadows''.
It should be noted that ours are among the few \emph{constructive} results on the efficiency of learning abstract halfspaces (or convex concepts) in graphs. 
Our work belongs to a line of research on graph hypothesis spaces, their learning properties, and fixed-parameter learnability~\citep{chepoi2007covering,chepoi2021labeled,chalopin_et_al:LIPIcs.MFCS.2022.31,le2023vc,brand2023parameterized}. Note that our runtime bounds for agnostic PAC learning and Halving are indeed polynomial for any family of graphs with bounded $\omega(G)$. This is for example the case for triangle-free graphs or planar graphs and a much weaker condition than more commonly used parameters such as treewidth \citep{thiessen2021active} and maximum \emph{clique-minor} size \citep{duchet1983ensemble,chalopin_et_al:LIPIcs.MFCS.2022.31,le2023vc} to achieve polynomial runtime or bounds on the VC dimension.

\paragraph{Organization of the manuscript.} Section~\ref{sec:related} reviews related work. Sections~\ref{sub:supervised}-\ref{sub:online} discuss our main results, their significance, and their technical ingredients. Section~\ref{sec:sparse} presents structural properties of monophonic halfspaces that are needed by our algorithms. Section~\ref{sec:polycheck} presents our polynomial-time consistency checker, and Section~\ref{sec:fpt} presents our empirical risk minimization (ERM) algorithm. All missing details and proofs can be found in the appendix.

\section{Further Related Work}\label{sec:related}
The complexity of consistency checking for monophonic halfspaces was unknown. \cite{gonzalez2020covering} show that deciding whether $V$ can be partitioned into $k \ge 3$ nonempty m-convex sets is NP-hard. They leave open the case $k=2$, i.e., deciding if there is a nontrivial m-halfspace of $G$, or in other words whether $\scHm(G) \ne \emptyset$. Our polynomial-time consistency checker proves that the case $k=2$ is in polynomial time: for every pair $\{u,v\} \in \binom{V}{2}$ use the checker to verify whether there is a halfspace that separates $u$ from $v$. This result was recently independently achieved by \citet{elaroussi2024half} and \citet{chepoi2024separation}. This result should be contrasted with consistency checking in geodesic halfspaces, which is known to be NP-hard~\citep{seiffarth2023maximal}. As monophonic halfspaces are a sub-family of geodesic halfspaces, our result can be seen as pushing the boundary of tractability of halfspace separation problems on graphs, see also \citet{chepoi2024separation} and \citet{elaroussi2024half}.

Empirical risk minimization can be reduced to listing the concept class $\scHm(G)$.
\cite{duchet1988convex} observes that, if $H$ is m-convex, then the vertices of $\overline H$ that are adjacent to $H$ form a clique. Therefore one can list $\scHm(G)$ by listing all pairs of cliques of $G$ and checking if the edges between them form a cut of $G$, for a running time of $n^{2\omega(G)}\poly(n)$. A better bound can be achieved if one is given a polynomial-time consistency checker: in that case, by a folklore algorithm one can list $\scHm(G)$ in time $|\scHm(G)| \poly(n)$. Our work gives both a polynomial-time consistency checker and a tight bound on $|\scHm(G)|$; neither one was known before. In particular, bounds on $|\scHm(G)|$ given by standard VC-dimension arguments suffer an exponential dependence on the cutsize $c$ (i.e., the number of edges) of the halfspace~\citep{kleinberg2004detecting}. In our case $c$ can be as large as $\Theta(\omega(G)^2)$, which yields $|\scHm| \le n^{\scO(\omega(G)^2)}$.
This is significantly beaten by our novel bound $|\scHm| \le \frac{4m \, 2^{\omega(G)}}{\omega(G)}+2$.
\citet{glantz2017finding} give polynomial time algorithms for enumerating geodesic halfspaces of bipartite and planar graphs, but do not have results for general graphs. By contrast, we can enumerate all m-halfspaces in optimal time $|\scHm|\poly(n)$ up to polynomial factors.

For active learning, \citet{thiessen2021active} give lower and upper bounds, but for \emph{geodesic} halfspaces. Their algorithm requires computing the geodesic hull number, which is APX-hard and without constant-approximation algorithms; our algorithm runs in polynomial time. \citet{bressan2021exact} also studied active learning on graphs under a geodesic convexity assumption. They achieved polynomial time however with additional assumptions on the convex sets, such as margin.
For online learning, \citet{thiessen2022online} give again results for geodesic halfspaces. Their algorithms, however, are computationally inefficient and/or loose in terms of mistakes. We instead rewrite monophonic halfspaces as a union of a small number of m-shadows, which enables an algorithm (Winnow) that is both computationally efficient and near-optimal in terms of mistakes.

\section{Preliminaries}\label{sec:prelim}

Let $G=(V,E)$ be a simple undirected graph and let $n=|V|$ and $m=|E|$.
Without loss of generality we assume $G$ is connected.\footnote{If $G$ is not connected then it has no m-halfspace, unless it consists of precisely two connected components, which are then the only two halfspaces.}
For any $v \in V$ let $N(v) = \{u \in V \,|\, \{u,v\} \in E\}$. The \emph{cutset} induced by $X \subseteq V$ is $\Cut(X) = \{\{u,v\} \in E \,|\, u \in X, v \notin X\}$. For an edge $\{u,v\} \in \Cut(X)$ we may write $uv \in \Cut(X)$ to specify that $u \in X$. The \emph{border} of $X$ is $\Border(X) = \{u \,|\, uv \in \Cut(X)\}$, the set of vertices of $X$ with a neighbour in $V\setminus X$. We denote by $\overline G$ or $\neg G$ the complement $(V,{V \choose 2} \setminus E)$ of $G$. We let $\omega(G)$ and $\alpha(G)$ be respectively the clique number and the independence number of $G$, and $\omgtil(G)=\max\{\omega(G),3\}$.
For any $X \subseteq V$ we denote by $G[X]$ the subgraph of $G$ induced by $X$.
If $P$ is a path in $G$, any edge in $E(G)\setminus E(P)$ is called a \emph{chord}; a path is induced if and only if it has no chords. Any shortest path is an induced path. We denote by $\diam_g(G)$ the diameter of $G$, and by $\diam_m(G)$ the maximum number of edges in any induced path. If $a,b \in V$ and $X \subseteq V$, then $X$ is an $(a,b)$-\emph{separator} if in $G$ every path between $a$ and $b$ intersects $X$. An algorithm is FPT (fixed-parameter tractable) with respect to some parameter $k$ (in our case $k=\omega(G)$) if its runtime is bounded by $f(k)\poly(n)$ for some computable function $f$.

\begin{figure}[h]
    \centering
    \colorlet{ffttww}{orange}
\definecolor{ududff}{rgb}{0.30196078431372547,0.30196078431372547,1}
\begin{tikzpicture}[line cap=round,line join=round,>=triangle 45,x=1cm,y=1cm,scale=.7]
\draw [line width=1pt] (-2,0)-- (0,-1);
\draw [line width=1pt] (-2,-0.89835)-- (0,-1);
\draw [line width=1pt] (-2,-2)-- (0,-1);
\draw [line width=1pt] (0,-1)-- (1.6941,-1.15245);
\draw [line width=1pt] (2.6742,-0.09975)-- (3.5454,-0.86205);
\draw [line width=1pt] (3.5454,-0.86205)-- (1.6941,-1.15245);
\draw [line width=1pt] (2.6742,-0.09975)-- (2.9646,-2.38665);
\draw [line width=1pt] (2.9646,-2.38665)-- (1.6941,-1.15245);
\draw [line width=1pt] (2.6742,-0.09975)-- (1.6941,-1.15245);
\draw [line width=1pt] (2.9646,-2.38665)-- (3.5454,-0.86205);
\draw [line width=1pt] (3.5454,-0.86205)-- (6,0);
\draw [line width=1pt] (6,0)-- (4.707,-1.47915);
\draw [line width=1pt] (4.707,-1.47915)-- (6,-2);
\draw [line width=1pt] (6,0)-- (6,-2);
\draw [line width=1pt] (6,-2)-- (8,-2);
\draw [line width=1pt] (6,0)-- (8,0);
\draw [line width=1pt] (8.9541,-0.24495)-- (8,0);
\draw [line width=1pt] (10,0)-- (8.9541,-0.24495);
\draw [line width=1pt] (10.0068,-0.68055)-- (8.9541,-0.24495);
\draw [line width=1pt] (8,0)-- (8,-2);
\draw [line width=1pt] (10,0)-- (10.0068,-0.68055);
\draw [line width=1pt] (8,-2)-- (8.8089,-1.58805);
\draw [line width=1pt] (8.8089,-1.58805)-- (10,-1.37025);
\draw [line width=1pt] (8.8089,-1.58805)-- (10,-2);
\draw [line width=1pt] (10,-1.37025)-- (10,-2);
\draw [line width=1pt] (3.5454,-0.86205)-- (4.707,-1.47915);
\draw [line width=1pt] (2.9646,-2.38665)-- (4.707,-1.47915);
\draw [line width=1pt] (2.6742,-0.09975)-- (6,0);
\draw [line width=1pt] (2.9646,-2.38665)-- (6,-2);
\draw [line width=1pt] (1.6941,-1.15245)-- (4.707,-1.47915);
\begin{scriptsize}
\draw [fill=ududff] (3.5454,-0.86205) ++(-5pt,0 pt) -- ++(5pt,5pt)--++(5pt,-5pt)--++(-5pt,-5pt)--++(-5pt,5pt);
\draw [fill=ududff] (2.6742,-0.09975) ++(-5pt,0 pt) -- ++(5pt,5pt)--++(5pt,-5pt)--++(-5pt,-5pt)--++(-5pt,5pt);
\draw [fill=ududff] (1.6941,-1.15245) ++(-5pt,0 pt) -- ++(5pt,5pt)--++(5pt,-5pt)--++(-5pt,-5pt)--++(-5pt,5pt);
\draw [fill=ududff] (2.9646,-2.38665) ++(-5pt,0 pt) -- ++(5pt,5pt)--++(5pt,-5pt)--++(-5pt,-5pt)--++(-5pt,5pt);
\draw [fill=ffttww]  (6,-2) ++(-5pt,0 pt) -- ++(5pt,5pt)--++(5pt,-5pt)--++(-5pt,-5pt)--++(-5pt,5pt);
\draw [fill=ffttww] (6,0) ++(-5pt,0 pt) -- ++(5pt,5pt)--++(5pt,-5pt)--++(-5pt,-5pt)--++(-5pt,5pt);
\draw [fill=ffttww] (4.707,-1.47915) ++(-5pt,0 pt) -- ++(5pt,5pt)--++(5pt,-5pt)--++(-5pt,-5pt)--++(-5pt,5pt);
\draw [fill=ududff] (3.5454,-0.86205) node (5pt) {};
\draw [fill=ududff] (2.6742,-0.09975) node (5pt) {};
\draw [fill=ududff] (1.6941,-1.15245) node (5pt) {};
\draw [fill=ududff] (2.9646,-2.38665) node (5pt) {};
\draw [fill=ffttww] (6,0) node (5pt) {};
\draw [fill=ffttww] (4.707,-1.47915) node (5pt) {};
\draw [fill=ffttww] (6,-2) node (5pt) {};
\draw [fill=ffttww] (8,-2) circle (5pt);
\draw [fill=ffttww] (8,0) circle (5pt);
\draw [fill=ffttww] (8.9541,-0.24495) circle (5pt);
\draw [fill=ffttww] (10,0) circle (5pt);
\draw [fill=ffttww] (10.0068,-0.68055) circle (5pt);
\draw [fill=ffttww] (8.8089,-1.58805) circle (5pt);
\draw [fill=ffttww] (10,-1.37025) circle (5pt);
\draw [fill=ffttww] (10,-2) circle (5pt);
\draw [fill=ududff] (0,-1) circle (5pt);
\draw [fill=ududff] (-2,-0.89835) circle (5pt);
\draw [fill=ududff] (-2,0) circle (5pt);
\draw [fill=ududff] (-2,-2) circle (5pt);
\end{scriptsize}
\end{tikzpicture}
    \caption{A toy graph $G$ whose vertex set $V$ is partitioned into a monophonic halfspace $H$ (in blue) and its complement $\bar H$ (in orange). The diamond-shaped vertices form the borders $\Border(H)$ and $\Border(\bar H)$, see below. Both $G[\Border(H)]$ and $G[\Border(\bar H)]$ are cliques, see Lemma~\ref{lem:H_charact}.}
    \label{fig:halfspace}
\end{figure}
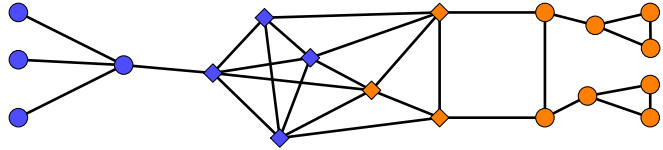

Let $V$ be a set and $\scC \subseteq 2^V$. The set system $(V,\scC)$ is a \emph{convexity space} if (i) $\emptyset, V \in \scC$ and (ii) $(\bigcap_{C\in\scC'}C)\in\scC$ for every $\scC'\subseteq\scC$.\footnote{When $V$ is not finite one needs additional constraints, see~\citet{van1993theory}; clearly this is not our case.} A set $C \subseteq V$ is said to be convex if $C \in \scC$. Convexity spaces abstract standard Euclidean convexity (see, e.g., \citet{van1993theory}). The \emph{convex hull} of $A \subseteq V$ is $\conv(A) = \bigcap_{C\in\scC:A\subseteq C} C\in\scC$. The \emph{hull number} of $(V,\scC)$ is the size of the smallest $A$ such that $\conv(A)=V$.
A set $R \subseteq V$ is \emph{Radon-independent} if $\conv(R')\cap\conv(R\setminus R')=\emptyset$ for all $R'\subseteq R$. The \emph{Radon number} of $(V,\scC)$ is the size of its largest Radon-independent set.
Many convexity spaces are defined by \emph{intervals}. A map $I\colon V\times V \rightarrow 2^V$ is an interval map if (i) $u,v\in I(u,v)$ and (ii) $I(u,v)=I(v,u)$ for all $u,v\in V$. An example is the geodesic interval of a metric space $(V,d)$ defined by $I_d(u,v) = \{z\in V\mid d(u,z)+d(z,v)=d(u,v)\}$. Any interval map $I$ defines a convexity space $(V,\scC)$ where $C \in \scC$ if and only if $I(u,v)\subseteq C$ for all $u,v\in C$. A \emph{graph convexity space} for $G=(V,E)$ is any convexity space $(V,\scC)$ where $G[C]$ is connected for all $C\in\scC$~\citep{pelayo2013geodesic}. We denote by $r(G)$ the Radon number and by $h(G)$ the hull number of $(V,\scC)$.

For any distinct $u,v \in V$ the \emph{monophonic interval} (m-interval) $\MonInt(u,v)$ between $u$ and $v$ is the set of all vertices that lie on some induced path between $u$ and $v$; while $\MonInt(u,u)=\emptyset$ for all $u\in V$. A set $X \subseteq V$ is \emph{monophonically convex} (m-convex) if $\MonInt(u,v) \subseteq X$ for all $u,v \in X$. A set $X \subseteq V$ is a \emph{monophonic halfspace} (m-halfspace), if both $X$ and its complement $\overline X = V \setminus X$ are m-convex. We denote by $\scHm(G)$ the set of all m-halfspaces of $G$. The \emph{monophonic shadow} (m-shadow) of $u$ with respect to $v$ is the set $u/v = \{z \in V : u \in \MonInt(z,v)\}$. It is known that computing $\MonInt$ is NP-hard in general~\citep{dourado2010complexity}; as $\MonInt(u,v) = \{z \in V : u\in z/v\}$, computing m-shadows is NP-hard, too.
Note that if $X \subseteq V$ is m-convex and $vu \in \Cut(X)$ then $u/v \subseteq \overline{X}$. The \emph{monophonic convex hull} (m-hull) of $X \subseteq V$, denoted $\conv(X)$, is the smallest m-convex set $X'$ such that $X \subseteq X' \subseteq V$. It is known that one can compute $\conv(X)$ in time $\poly(n)$ for any given $X$~\citep{dourado2010complexity}.
To discuss some tightness results of our achieved bounds we rely on so-called \emph{separation axioms}, which determine the separability of vertices by halfspaces.
\begin{definition}[\citet{van1993theory}]\label{def:sep_axioms}
	A convexity space $(X, \scC)$ is:
	\begin{itemize}[itemsep=2pt,parsep=0pt,topsep=4pt]
		\item $S_1$ if and only if each singleton $x \in X$ is convex.
		\item $S_2$ if and only if each pair of distinct elements $x,y \in X$ is halfspace separable.
		\item $S_3$ if and only if each convex set $C \in \scC$ and elements $x \in X\setminus C$ are halfspace separable.
		\item $S_4$ if and only if any two disjoint convex sets are halfspace separable.
	\end{itemize}
\end{definition}
By slightly abusing notation, we write $S_i$ for the family of graphs whose m-convexity space is $S_i$ (for $i=1,\dots,4$). All graphs are in $S_1$, and $S_2\supsetneq S_3 \supsetneq S_4$. The graph families $S_2$, $S_3$, and $S_4$ are exactly characterized \citep{jamison1981convexity,bandelt1989graphs}, see also \citet{chepoi1994separation,chepoi2024separation}.

\section{Supervised Learning}\label{sub:supervised}
We study the standard PAC (probably approximately correct) setting with a given finite instance space $V$ and a known hypothesis space $\scH\subseteq 2^V$. 
In our case $\scH=\scHm(G)$ is defined implicitly by the input graph $G$.
A \emph{labeled sample} is a pair $(S,y)$ where $S\subseteq V$ is a finite (multi)set of vertices and $y\colon S \to \{0,1\}$ is a binary labeling of the sample.
We say that it is possible to $(\varepsilon, \delta)$-PAC learn $(V, \scHm)$ if we can provide an m-halfspace $H=H(S,y)$ that has error bounded by $\varepsilon$ with probability at least $1-\delta$, for any $\varepsilon,\delta\in(0,1)$, under some unknown joint distribution over vertices and their labels.
The aim of PAC learning is to guarantee this property with the smallest sample complexity possible.
Note that, while standard PAC learning formulations typically measure the running time as a function of the sample size (see, e.g., \citet{valiant1984theory,shalev2014understanding}), we measure running time as a function of $n$. Note that a dependence on $n$ is unavoidable: even just to check whether two vertices $u,v \in V$ are separated by some $H \in \scHm(G)$, one needs to distinguish between $G$ being a path or a cycle, which takes time $\Omega(n)$.
We are in the \emph{realizable setting} if for any given sample $(S,y)$ there exists a $H\in\scHm$ such that for all $s\in S$ it holds that $y(s)=1$ if and only if $s\in H$. Our main result for realizable PAC learning is:
\begin{restatable}[Realizable PAC is poly-time]{newthm}{realizablePACpoly}\label{thm:PAC_realizable}
    There exists an algorithm that, for any $\varepsilon,\delta \in (0,1)$, can $(\varepsilon,\delta)$-PAC learn $(V,\scHm)$ in the realizable setting using $\scO\Bigl(\frac{\omega(G)\log(1/\varepsilon)+\log(1/\delta)}{\varepsilon}\Bigr)$ labeled samples and time $\poly(n)$.
\end{restatable}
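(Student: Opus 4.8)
The plan is to follow the textbook recipe for turning a polynomial-time consistency checker plus a VC-dimension bound into an efficient realizable PAC learner. The key inputs I would rely on are: (i) the polynomial-time consistency checker promised in the introduction (Section~\ref{sec:polycheck}), i.e., an algorithm that given a labeled sample $(S,y)$ runs in time $\poly(n)$ and returns some $H \in \scHm(G)$ consistent with $(S,y)$ whenever one exists; and (ii) the fact that $\VC(\scHm(G)) = \scO(\omega(G))$, which follows from the connection between VC dimension and the Radon number together with the known bound $r(G) = \scO(\omega(G))$ for monophonic convexity (the VC bound appears in item~5 of the contributions). I would cite both of these as results stated earlier in the paper.

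First, I would invoke the classical realizable PAC sample-complexity bound (e.g., the Blumer et al. / Vapnik–Chervonenkis bound as stated in \citet{shalev2014understanding}): for a hypothesis class of VC dimension $d$, drawing $M = \scO\bigl(\frac{d\log(1/\varepsilon)+\log(1/\delta)}{\varepsilon}\bigr)$ i.i.d.\ labeled examples guarantees that, with probability at least $1-\delta$, every hypothesis in the class consistent with the sample has error at most $\varepsilon$. Plugging in $d = \scO(\omega(G))$ gives exactly the stated sample bound $\scO\bigl(\frac{\omega(G)\log(1/\varepsilon)+\log(1/\delta)}{\varepsilon}\bigr)$. Second, the algorithm: draw this many samples $(S,y)$, feed them to the consistency checker, and output whatever consistent $H \in \scHm(G)$ it returns. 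In the realizable setting such an $H$ always exists (the ground-truth halfspace is consistent with $S$), so the checker succeeds and returns an empirical-risk-zero hypothesis; by the uniform-convergence bound above this $H$ has true error at most $\varepsilon$ with probability at least $1-\delta$. Third, the running time: drawing $M$ samples and a single call to the checker both take $\poly(n)$ time (here I would note that $M$ itself is bounded by a polynomial in $n$ only insofar as needed, but in fact the checker's runtime $\poly(n)$ already subsumes reading the sample, since $\omega(G) \le n$ and we may assume $\varepsilon,\delta$ are not exponentially small relative to $n$ — or, more cleanly, the runtime is $\poly(n, M)$, which is what "time $\poly(n)$" should be read as in this setting, consistent with the remark in the text that running time is measured as a function of $n$).

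The only genuinely substantive ingredient is the polynomial-time consistency checker, and that is deferred to Section~\ref{sec:polycheck} (and is, per the introduction, the technical heart of the whole paper, via the reduction to 2-SAT); so within the scope of this theorem it is assumed. Hence I do not expect any real obstacle in the proof of Theorem~\ref{thm:PAC_realizable} itself — it is a short reduction. The one point to be careful about is making sure the VC bound $\VC(\scHm(G)) = \scO(\omega(G))$ is properly attributed and stated (via Radon number $r(G)$ and the Radon–VC inequality $\VC \le \scO(r \log r)$ or the sharper linear bounds available for this setting), and that the constant absorbed into the $\scO(\cdot)$ in the sample bound is consistent with that. I would also briefly remark that the hypothesis output is genuinely an element of $\scHm(G)$ (not merely consistent on $S$), which is automatic since the checker returns an m-halfspace by construction.
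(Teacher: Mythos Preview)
Your proposal is correct and matches the paper's approach essentially exactly: the paper's proof is a two-line argument citing the \citet{blumer1989learnability} sample-complexity bound together with the VC bound $\VC(\scHm)\le\omgtil(G)$ from Proposition~\ref{pro:VC_mono}, and then invoking the polynomial-time consistency checker of Theorem~\ref{thm:polycheck} for the runtime. Your only slight over-elaboration is on the VC--Radon link (the paper uses the direct chain $\VC\le r(G)\le\omgtil(G)$ rather than any $r\log r$ version), but this does not affect correctness.
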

The bound on the sample complexity of our algorithm follows by a bound on the VC dimension of $\scHm$ in terms of $\omega(G)$ (Proposition~\ref{pro:VC_mono}), coupled with standard PAC learning bounds~\citep{blumer1989learnability}. The bound is near-optimal save possibly for the $\log(1/\varepsilon)$ factor (see below).
The bound on the running time is instead given by Theorem~\ref{thm:polycheck} (see below) which provides a polynomial-time algorithm for computing an m-halfspace consistent with a given realizable sample.
Similarly to the realizable case, we obtain the following result for agnostic PAC learning:
\begin{restatable}[Agnostic PAC is FPT]{newthm}{agnosticPACFPT}\label{thm:PAC_agnostic}
    There exists an algorithm that, for any $\varepsilon,\delta \in (0,1)$, can $(\varepsilon,\delta)$-PAC learn $(V,\scHm)$ in the agnostic setting using $\scO\Bigl(\frac{\omega(G)+\log(1/\delta)}{\varepsilon^2}\Bigr)$ labeled samples and time $|\scHm|\poly(n)\leq 2^{\omega(G)}\poly(n)$.
\end{restatable}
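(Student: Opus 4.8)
The plan is to follow the standard agnostic PAC recipe: bound the sample complexity via the VC dimension, then implement empirical risk minimization (ERM) within the claimed running time by enumerating the concept class.

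\textbf{Sample complexity.} By Proposition~\ref{pro:VC_mono} we have $\VC(\scHm) = \scO(\omega(G))$. The classical agnostic PAC bound (see~\citet{shalev2014understanding}) says that drawing $m_0 = \scO\!\left(\varepsilon^{-2}(\VC(\scHm) + \log(1/\delta))\right) = \scO\!\left(\varepsilon^{-2}(\omega(G) + \log(1/\delta))\right)$ i.i.d.\ labeled samples suffices so that, with probability $\ge 1-\delta$, every hypothesis $H \in \scHm$ has empirical error within $\varepsilon/2$ (or within $\varepsilon$, after rescaling constants) of its true error. Hence any hypothesis that minimizes the empirical error on the sample is $\varepsilon$-optimal with probability $\ge 1-\delta$. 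So it remains to compute an ERM solution in the claimed time.

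\textbf{Computing an ERM solution.} The idea is: (i) enumerate the whole concept class $\scHm(G)$, and (ii) for each enumerated $H$, evaluate its empirical error on the sample and keep the best. For step (i), we invoke the polynomial-time consistency checker of Theorem~\ref{thm:polycheck} inside a standard folklore enumeration-by-consistency procedure: maintaining partial labelings and branching, one lists all of $\scHm(G)$ with only $\poly(n)$ overhead per listed hypothesis, i.e.\ in time $|\scHm|\poly(n)$; this is discussed in the related-work section. (Alternatively one can appeal directly to the listing algorithm referenced in the ERM section, Section~\ref{sec:fpt}, which achieves the same $|\scHm|\poly(n)$ bound and also establishes $|\scHm| \le \frac{4m\,2^{\omega(G)}}{\omega(G)}+2 = 2^{\omega(G)}\poly(n)$.) For step (ii), given the sample $(S,y)$ with $|S| = m_0 = \poly(n)$ (we may assume $\varepsilon,\delta$ are such that $m_0$ is polynomial in $n$; otherwise the hypothesis-space-size term dominates anyway and the evaluation cost is subsumed), computing the empirical error of a given $H$ takes time $\scO(|S|) = \poly(n)$. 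Combining, the total running time is $|\scHm|\poly(n) \le 2^{\omega(G)}\poly(n)$, and the output is a hypothesis in $\scHm$ minimizing empirical error, which by the uniform-convergence argument above is $\varepsilon$-optimal with probability $\ge 1-\delta$.

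\textbf{Main obstacle.} The only nontrivial ingredient is the enumeration of $\scHm(G)$ in time $|\scHm|\poly(n)$, together with the bound $|\scHm| \le 2^{\omega(G)}\poly(n)$ — but both are supplied by Theorem~\ref{thm:polycheck} (via the folklore consistency-to-enumeration reduction) and by the dedicated listing/counting argument of Section~\ref{sec:fpt}, so here they are used as black boxes. One minor point to handle carefully is the edge case $\scHm(G) = \emptyset$ (or containing only the trivial halfspaces $\emptyset, V$): in that degenerate situation ERM is over a constant-size class and the guarantee is vacuous or trivial. Subject to that, the theorem follows by assembling the two standard pieces.
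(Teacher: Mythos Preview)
Your proposal is correct and follows essentially the same route as the paper: bound the sample complexity via the VC-dimension estimate of Proposition~\ref{pro:VC_mono} together with standard agnostic PAC bounds, then perform ERM by enumerating $\scHm(G)$ with the polynomial-delay lister built from \PolyChecker\ (Theorem~\ref{thm:polydelayversionspace}/Corollary~\ref{cor:erm-fpt}) and invoking the size bound of Theorem~\ref{thm:halfspaces_count}. The paper's proof is just a terser version of exactly this argument.
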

Similarly to the realizable case, the sample complexity bound follows by coupling VC dimension bounds with standard PAC bounds, and is near-optimal. The result follows by using the ERM from Corollary~\ref{cor:erm-fpt}, which uses the version space enumeration of Theorem~\ref{thm:polydelayversionspace} (see below).

To prove that the sample complexity bounds of Theorem~\ref{thm:PAC_realizable} and Theorem~\ref{thm:PAC_agnostic} are near-optimal, it suffices to note that the VC dimension of $\scHm$ is $\omega(G)$ for clique graphs. We prove that this is true for all $S_4$ graphs (see Section~\ref{sec:prelim}), and essentially optimal. Let $\omgtil(G)=\max\{\omega(G),3\}$. Then:
\begin{restatable}{prop}{monovc}\label{pro:VC_mono}\label{prop:vc_s4_lower}
If $G\in S_4$ then $\VC(\scHm(G))\geq\omega(G)$.
Moreover, $\VC(\scHm(G))\le\omgtil(G)$ for all $G$.
\end{restatable}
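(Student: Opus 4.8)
## Proof plan for Proposition~\ref{pro:VC_mono}

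\textbf{Lower bound.} The plan is to exhibit, for any $G \in S_4$, a shattered set of size $\omega(G)$. Let $K = \{v_1, \ldots, v_k\}$ be a clique of $G$ with $k = \omega(G)$. I claim $K$ is shattered by $\scHm(G)$. Fix any subset $A \subseteq K$; I need an m-halfspace $H$ with $H \cap K = A$. The natural move is to invoke the $S_4$ axiom: the induced subgraph on a clique is complete, so every subset of $K$ is m-convex \emph{within $G[K]$}, but more is needed globally. Instead, consider the two disjoint sets $\conv(A)$ and $\conv(K \setminus A)$ where $\conv$ is the m-hull in $G$. If I can show these two hulls are disjoint, then $S_4$ gives an m-halfspace $H \supseteq \conv(A)$ with $\overline H \supseteq \conv(K \setminus A)$, so $H \cap K = A$ as desired, shattering $K$. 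The key sublemma is therefore: \emph{for a clique $K$ and any partition $A, K\setminus A$, we have $\conv(A) \cap \conv(K \setminus A) = \emptyset$}. This should follow from the known relationship between the Radon number of the m-convexity and $\omega(G)$ — specifically, any clique is Radon-independent (or more directly, two vertices of a common clique cannot both lie in a monophonic interval forced by disjoint vertex sets, because an induced path between two cliques behaves rigidly). I would prove the sublemma by a direct argument on induced paths: if some $z \in \conv(A) \cap \conv(K\setminus A)$, trace back through the hull construction to find induced paths forcing a chord in $G[K]$, a contradiction. When $\omega(G) \le 2$ the bound is vacuous or trivial (a single edge graph has an m-halfspace separating its two endpoints).

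\textbf{Upper bound.} The plan is to show no set of size $\omgtil(G)+1$ can be shattered. Suppose $S = \{x_0, x_1, \ldots, x_t\}$ with $t \ge \omgtil(G)$ is shattered. The heart is the structural fact (attributed to \citet{duchet1988convex} in the excerpt): if $H$ is m-convex, the vertices of $\overline H$ adjacent to $H$ form a clique. I would use shattering to manufacture a large clique. Since $S$ is shattered, for each $i$ there is an m-halfspace $H_i$ with $H_i \cap S = \{x_i\}$ (the singleton pattern) and also one with $H_i \cap S = S \setminus \{x_i\}$. The idea: take the halfspace $H$ realizing the pattern $S \setminus \{x_0\}$, i.e.\ $x_0 \notin H$ but $x_1, \ldots, x_t \in H$; then each $x_i$ ($i \ge 1$) either lies in $\Border(H)$ or, using a second halfspace that peels $x_i$ off, one forces an induced path from $x_i$ to $x_0$ passing through a controlled region. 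More carefully, I expect the right argument runs: for each pair $i \ne j$ there is a halfspace separating exactly $\{x_i\}$ from the rest; combining the complement-m-convexity with the shadow containment $u/v \subseteq \overline H$ noted after the separation axioms, one derives that the $x_i$ are pairwise adjacent — giving a clique of size $t+1 > \omega(G)$, contradiction, unless $t+1 \le 3$, which is why $\omgtil$ appears. The exceptional small cases ($\omega(G) \in \{1,2\}$, where $\omgtil(G) = 3$) must be checked separately: e.g.\ a path $P_4$ or a triangle-free graph can still shatter a set of size $3$ but not $4$, which one verifies by the same clique-extraction reaching size $4$.

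\textbf{Main obstacle.} I expect the genuinely delicate step is the upper bound's clique extraction: turning "the $x_i$ are separable in all $2^{t+1}$ ways" into "the $x_i$ form a clique." The argument must carefully use that \emph{both} $H$ and $\overline H$ are m-convex for every shattering halfspace, and chain together the adjacency constraints from several different $H$'s (e.g.\ for each $i$, the halfspace isolating $x_i$ forces $x_i$'s neighbors-in-the-complement to be a clique containing the other $x_j$'s, or forces induced paths that would have chords otherwise). Getting the case analysis tight enough to land exactly at $\omgtil(G)$ rather than, say, $2\omega(G)$ or $\omega(G)+c$, and handling the $\omega \le 2$ corner cases, is where the real work lies; the lower bound, by contrast, is a fairly clean application of the $S_4$ axiom once the hull-disjointness sublemma is in hand.
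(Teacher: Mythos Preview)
Your lower-bound plan is exactly the paper's: show a maximum clique $K$ is Radon independent (your hull-disjointness sublemma), then invoke $S_4$ to separate any bipartition of $K$ by an m-halfspace. The paper simply cites these two facts rather than re-deriving them.

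For the upper bound, the paper takes a much shorter route than yours: it chains two known inequalities,
\[
\VC(\scHm(G)) \;\le\; r(G) \;\le\; \omgtil(G),
\]
where the first holds in any convexity space (a shattered set is Radon independent, since a Radon partition witnesses an unrealizable dichotomy), and the second is Duchet's classical bound on the monophonic Radon number. No combinatorial argument on $G$ is needed beyond these citations.

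Your proposed direct route---deduce from shattering that the $x_i$ are \emph{pairwise adjacent} and hence form a clique of size $>\omega(G)$---has a genuine gap: shattered sets need not be cliques. Take $G=K_4$ on $\{1,2,3,4\}$ with a pendant vertex $5$ adjacent only to $1$. Then $\omega(G)=4$, yet the non-clique set $\{2,3,4,5\}$ is shattered by $\scHm(G)$: for any $A\subseteq\{2,3,4\}$ the set $A$ (if $5\notin$ the target) or $A\cup\{1,5\}$ (if $5\in$ the target) is an m-halfspace realizing the desired trace, as one checks via Lemma~\ref{lem:H_charact}. So ``shattered $\Rightarrow$ clique'' is false, and your sketch gives no reason why the argument would begin to bite precisely above the threshold $\omgtil(G)$. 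What you would really need is Duchet's theorem itself (Radon independence of size $>\omgtil(G)$ is impossible), at which point you are back to the paper's two-line proof. I would abandon the clique-extraction plan and use $\VC\le r(G)\le\omgtil(G)$ directly.
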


\subsection{Polynomial-Time Consistency Checking and Version Space Enumeration}

The algorithms of Theorem~\ref{thm:PAC_realizable} and Theorem~\ref{thm:PAC_agnostic} rely on the standard empirical risk minimization (ERM) approach: given a labeled sample $(S,y)$, we find a hypothesis $H \in \scHm$ that has minimal disagreement with $(S,y)$. This, we need efficient algorithms for ERM. In the realizable case, where $(S,y)$ is realizable, we need an algorithm to find a hypothesis consistent with the sample. Formally, an m-halfspace $H\in\scHm$ is \emph{consistent} with a given labeled sample $(S,y)$  if $y^{-1}(0) \subseteq \overline H$ and $ y^{-1}(1) \subseteq H$. %
Our main technical contribution is $\PolyChecker$, an algorithm that gives:
\begin{restatable}[Consistency check is in polynomial time]{newthm}{polyconsistency}\label{thm:polycheck}
Let $G=(V,E)$ and $(S,y)$ a labeled sample. Then, $\PolyChecker(G, (S,y))$ (Algorithm~\ref{alg:polychecker}) runs in time $\poly(n)$ and returns an $H \in \scHm(G)$ consistent with $(S,y)$, or reports that no such $H$ exists.
\end{restatable}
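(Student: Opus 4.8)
The plan is to reduce consistency checking to 2‑SAT. Recall that a labeled sample $(S,y)$ partitions $S$ into $A = y^{-1}(1)$ and $B = y^{-1}(0)$, and we seek an m‑halfspace $H$ with $A \subseteq H$ and $B \subseteq \overline H$. First, observe that any such $H$ is determined once we decide, for every vertex $w \in V$, whether $w \in H$; so introduce a Boolean variable $x_w$ for each $w$, with $x_w = 1$ meaning $w \in H$. The constraints $A \subseteq H$ and $B \subseteq \overline H$ become unit clauses. The real work is to encode ``both $H$ and $\overline H$ are m‑convex'' using only clauses of width $2$. The key structural fact I would lean on (the kind of result promised in Section~\ref{sec:sparse}, and already hinted at by Duchet's observation) is that m‑convexity of $H$ is equivalent to a \emph{local} condition: $H$ is m‑convex iff for every edge $uv$ with $u\in H$, $v\notin H$, the shadow $u/v$ lies entirely in $\overline H$ — equivalently, m‑convexity fails iff some induced path has both endpoints inside $H$ but an interior vertex outside. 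I would first prove that it suffices to forbid induced paths of a bounded ``combinatorial type,'' reducing the infinitely‑many‑paths obstruction to finitely many implications between pairs of variables.

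Concretely, the heart of the argument is a claim of the form: $H$ (and $\overline H$) are both m‑convex if and only if a certain family $\mathcal F$ of implications $x_a \wedge x_b \Rightarrow x_c$ (and symmetrically for $\overline H$) all hold, where $\mathcal F$ has size $\poly(n)$ and can be computed in $\poly(n)$ time. Such implications are \emph{not} themselves 2‑clauses (a clause $\neg x_a \vee \neg x_b \vee x_c$ has width $3$), so the delicate step is to argue that the relevant implications actually collapse to width $2$: for instance, that the obstruction to m‑convexity can always be witnessed by fixing one endpoint of the bad induced path to a vertex \emph{forced} by the sample (a vertex of $A$ or $B$), so that one of $a,b$ is a constant and the clause degenerates to $x_a \Rightarrow x_c$ or $\neg x_c$. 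I would make this precise by showing that if $(S,y)$ is realizable then one may assume, w.l.o.g., that the consistent halfspace contains $\conv(A)$ and is disjoint from $\conv(\overline{\cdot})$‑type hulls, and then the ``propagation'' of membership/non‑membership through shadows $u/v$ of \emph{edges} — which by the positive result mentioned for edge‑shadows is computable in $\poly(n)$, unlike general m‑shadows — only ever involves one free variable at a time. This is where most of the structural lemmas of Section~\ref{sec:sparse} are spent.

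With the width‑2 encoding in hand, the algorithm is: build the implication digraph on $2n$ literals, run the linear‑time 2‑SAT solver (Aspvall–Plass–Tarjan) to decide satisfiability, and if satisfiable read off a satisfying assignment $\sigma$; output $H = \{w : \sigma(x_w)=1\}$. Correctness in one direction is immediate from the encoding (any consistent m‑halfspace yields a satisfying assignment); for the other direction I must verify that \emph{every} satisfying assignment yields a genuine m‑halfspace — i.e. that the finite clause family $\mathcal F$ really captures m‑convexity and not merely a necessary condition — which is exactly the content of the structural claim above and should be discharged there. Connectivity of $G[H]$ and $G[\overline H]$ (part of the definition of m‑convex) needs a separate check or a separate encoding argument; I would handle it by noting that a disconnected m‑convex‑in‑the‑interval‑sense set cannot arise once all shadow implications are satisfied and $G$ is connected, or failing that, add it as a post‑hoc verification and iterate. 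The running time is $\poly(n)$ throughout: $O(n^2)$ edges, each contributing $\poly(n)$ clauses from its computable shadow, plus linear‑time 2‑SAT.

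The main obstacle I anticipate is precisely the width‑$2$ collapse: a priori, ruling out a bad induced path $P$ from $u$ to $v$ with interior vertex $z$ leaves outside $H$ is a ternary constraint ``$u\in H \wedge v\in H \Rightarrow z \in \overline H$,'' and there is no reason in general for one of $u,v$ to be pinned. Making this work requires the careful structural insight that, for monophonic (induced‑path) convexity specifically, the obstructions can be reorganized so that the ``hard'' endpoint is always a sample vertex or a hull vertex — i.e. a constant — which is the non‑obvious combinatorial fact that distinguishes monophonic from geodesic halfspaces and explains why the latter are NP‑hard. Everything else (unit clauses for the sample, the 2‑SAT call, reading off $H$) is routine.
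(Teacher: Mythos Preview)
You have correctly identified the overall shape of the argument---a reduction to 2\nobreakdash-SAT over vertex variables, with the sample contributing unit clauses---and you have put your finger on the real difficulty: the naive encoding of ``no induced path with endpoints in $H$ and an interior vertex outside'' yields width\nobreakdash-3 clauses, and one must explain why width~2 suffices. But the mechanism you propose for the collapse is wrong, and this is a genuine gap.

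Your proposal is to pin one endpoint of each bad induced path to a vertex \emph{forced by the sample}, i.e., to a vertex of $\conv(A)$ or $\conv(B)$. This cannot work. Take $(S,y)$ to be the empty sample: then nothing is forced, yet the algorithm must still decide whether $\scHm(G)$ contains a nontrivial halfspace (this is precisely the open case of Gonz\'alez et~al.\ that Theorem~\ref{thm:polycheck} resolves). More generally, even a single labeled vertex does not force enough structure to reduce all convexity obstructions to implications with one free variable. The sample is simply too weak an anchor.

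The paper's mechanism is different and does not use the sample at all for the collapse. Instead, \PolyChecker\ runs an \emph{outer loop over all oriented edges $uv$}, guessing that $uv \in \Cut(H)$. Once $uv$ is fixed, a great deal is forced: every vertex in the set $A^u$ (roughly, vertices of $\square_{uv}$ closer to $u$ than to $v$) must lie in $H$, every vertex of $A^v$ in $\overline H$, and this extends to their monophonic hulls $\conv(A^u)$ and $\conv(A^v)$, which are computable in polynomial time. These hulls provide the constants. The remaining constraints---on $\triangle^-_{uv}$, on the connected components of $T = G \setminus E(G[\square_{uv} \cup \triangle^-_{uv}])$, and on short induced paths starting in $A^u$ or $A^v$ and ending in $\triangle^-_{uv}$---are then genuinely binary because one endpoint is always pinned to $\conv(A^u)$ or $\conv(A^v)$. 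The backward direction (every satisfying assignment is an m\nobreakdash-halfspace) is not automatic either: it is a careful case analysis showing that any minimal violating induced path must have one endpoint in $A^u$ and the other in $\triangle^-_{uv}$ (or symmetrically), have length at least~5, and then contradict one of the constraints $\satvar_{A^u}$ or $\satvar_{A^v}$. None of this structure is available without first guessing the cut edge.

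A minor point: your ``local condition'' is misstated (if $u\in H$ and $v\notin H$ then $u/v \subseteq H$, not $\overline H$), and in any case does not by itself yield a 2\nobreakdash-SAT encoding.
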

We note that \citet{elaroussi2024half} and \citet{chepoi2024separation} independently obtained an analogous result recently using different techniques.

The intuition behind \PolyChecker is to guess an edge in the cut of $H$, and then construct a set of constraints that reduces the search space to precisely those subsets $H \subseteq V$ that are m-halfspaces and that are consistent with $(S,y)$. It turns out that, by carefully choosing those constraints, one actually obtains in time $\poly(n)$ an instance of 2-satisfiability, which in turn is solvable in time $\poly(n)$. Choosing the constraints requires defining several specific subsets of vertices, as well as proving several structural results on m-halfspaces and their cutsets.
Another computational ingredient is showing that computing m-shadows over edges is easy, although hard in general.
\begin{restatable}{lem}{edgeShadowsPoly}\label{lem:shadows_in_P}
Let $\{z,v\} \in E$. Then, $z/v = \{x \in V \mid N(v) \setminus \{z\} \text{ is not a } (z,x)\text{-separator in } G\}$. As a consequence one can compute $z/v$ in time $\poly(n)$.
\end{restatable}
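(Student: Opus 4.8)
The plan is to reduce the set identity to a pointwise reachability statement and then read off the running time. Fix the edge $\{z,v\}\in E$ and set $G' := G-(N(v)\setminus\{z\})$; note $z\in V(G')$. I will show that for every $x\in V\setminus\{v\}$ (the trivial case $x=v$ aside),
$z\in\MonInt(x,v)$ holds if and only if $x$ lies in the connected component of $z$ in $G'$, i.e.\ if and only if $N(v)\setminus\{z\}$ fails to be a $(z,x)$-separator. Granting this equivalence, $z/v$ is precisely the component of $z$ in $G'$, which is found by deleting the at most $n$ vertices of $N(v)\setminus\{z\}$ and running a single BFS/DFS from $z$, in time $\scO(n+m)=\poly(n)$.

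For the forward direction, let $P = x=p_0,p_1,\dots,p_k=v$ be an induced path containing $z$, say $z=p_i$. Since $\{z,v\}=\{p_i,p_k\}\in E$ and $P$ has no chord, $i\le k-2$ is impossible; since $z\ne v$ we also have $i\ne k$; hence $i=k-1$, i.e.\ \emph{$z$ is forced to be the penultimate vertex of $P$}. By the same chord-freeness argument, no $p_j$ with $j\le k-2$ is adjacent to $v$, so the prefix $x=p_0,\dots,p_{k-1}=z$ is a path in $G$ all of whose vertices avoid $N(v)\setminus\{z\}$; this exhibits a $z$--$x$ path in $G'$, so $N(v)\setminus\{z\}$ is not a $(z,x)$-separator.

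For the backward direction, let $Q = x=q_0,\dots,q_\ell=z$ be a \emph{shortest} $x$--$z$ path in $G'$. Because $G'$ is an induced subgraph of $G$, two vertices of $Q$ are adjacent in $G'$ iff they are adjacent in $G$, and a shortest path is induced; hence $Q$ is an induced path of $G$. Appending $v$ gives $Q^+ = q_0,\dots,q_\ell=z,v$, which is still induced in $G$: $v$ is adjacent to $q_\ell=z$ and to no other $q_j$ (each $q_j\in V(G')$, so $q_j\notin N(v)\setminus\{z\}$, whence $q_j\in N(v)$ would force $q_j=z$), and $v\notin V(Q)$ (otherwise $v$ would have two neighbours on $Q$, at most one of which is $z$). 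So $Q^+$ is an induced $x$--$v$ path through $z$, i.e.\ $z\in\MonInt(x,v)$.

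I expect the only genuine insight to be the observation in the forward direction that $z$ must be the penultimate vertex of any induced $x$--$v$ path through it; this is exactly where the hypothesis $\{z,v\}\in E$ is used, and it is what collapses the (generally NP-hard) m-shadow computation to plain reachability. The rest — the induced-subgraph remark that makes the shortest $G'$-path induced in $G$, the chord bookkeeping for $Q^+$, the degenerate case $x=v$, and the $\scO(n+m)$ running-time accounting — is routine.
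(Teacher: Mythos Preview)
Your proof is correct and follows essentially the same approach as the paper's: both directions hinge on the observation that, because $\{z,v\}\in E$, the vertex $z$ must be the penultimate vertex of any induced $x$--$v$ path through it, which reduces the question to reachability avoiding $N(v)\setminus\{z\}$. The only cosmetic difference is that the paper takes an arbitrary $L$-avoiding path and passes to an induced subpath, whereas you take a shortest path in the induced subgraph $G'$; these are equivalent moves.
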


As an immediate byproduct of Theorem~\ref{thm:polycheck} we answer an open problem of \citet{gonzalez2020covering}, who proved that deciding if $V$ admits a proper $k$-partition into m-convex sets is NP-hard for $k \ge 3$.
\begin{corollary}
    In time $\poly(n)$ one can decide if $V$ admits a proper $2$-partition into m-convex sets in~$G$ (that is, if there exists $H\in\scHm$ such that $\emptyset\subsetneq H \subsetneq V$).
\end{corollary}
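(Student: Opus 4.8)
The plan is to reduce the decision problem to $O(n^2)$ invocations of the consistency checker $\PolyChecker$ of Theorem~\ref{thm:polycheck}. The key observation is that a proper $2$-partition of $V$ into m-convex sets is, by definition, a pair $(H,\overline H)$ with $H\in\scHm(G)$ and $\emptyset\subsetneq H\subsetneq V$; and $H$ is nonempty and proper precisely when it contains some vertex $u$ and excludes some vertex $v$. Hence it suffices to detect, for some pair of distinct vertices $u,v$, an m-halfspace that places $u$ on one side and $v$ on the other, which is exactly what a consistency check on a two-point sample does.

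Concretely, I would iterate over all unordered pairs $\{u,v\}\in\binom{V}{2}$ and, for each, call $\PolyChecker(G,(S,y))$ with $S=\{u,v\}$, $y(u)=1$, $y(v)=0$. If some call returns a hypothesis $H$, output ``yes''; otherwise output ``no''. Correctness in one direction is immediate: any $H$ returned by the checker lies in $\scHm(G)$ and satisfies $u\in H$ and $v\in\overline H$, so $\emptyset\subsetneq H\subsetneq V$ and $H$ witnesses a proper $2$-partition. For the converse, if some $H\in\scHm(G)$ with $\emptyset\subsetneq H\subsetneq V$ exists, pick any $u\in H$ and $v\in V\setminus H$; then $H$ is consistent with $(\{u,v\},y)$ for the labeling above, so by Theorem~\ref{thm:polycheck} the corresponding call does not report failure and returns some consistent m-halfspace, triggering the ``yes'' branch.

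For the running time, there are $\binom{n}{2}=O(n^2)$ pairs and each call to $\PolyChecker$ runs in $\poly(n)$ by Theorem~\ref{thm:polycheck}, so the overall procedure runs in $\poly(n)$. There is essentially no technical obstacle: the corollary is a direct consequence of Theorem~\ref{thm:polycheck}, and the only point that needs verification is that restricting attention to two oppositely-labeled vertices captures exactly the nonempty proper m-halfspaces, which is the content of the equivalence argued above. (The degenerate case of a disconnected $G$ is handled separately, as in the footnote of Section~\ref{sec:prelim}, so assuming $G$ connected is without loss of generality.)
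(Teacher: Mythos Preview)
Your proposal is correct and follows essentially the same approach as the paper: iterate over all pairs $\{u,v\}\in\binom{V}{2}$ and use $\PolyChecker$ to test whether some $H\in\scHm(G)$ separates $u$ from $v$. The paper states exactly this reduction (without spelling out the full argument you give).
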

A detailed description of \PolyChecker and a sketch of the proof of Theorem~\ref{thm:polycheck} are given in Section~\ref{sec:polycheck}.
Using \PolyChecker, we obtain an algorithm for ERM. For a given labeled sample $(S,y)$, the \emph{version space} $\VS((S,y),\scHm)$ consists of all $H \in \scHm$ consistent with $(S,y)$. Thus, one can find an empirical risk minimizer for $(S,y)$ by just listing $\VS=\VS((S,y),\scHm)$ and picking a m-halfspace with minimum disagreement. We say that an algorithm lists $\VS$ \emph{with delay $t$} if it spends time at most $t$ to output the next element of $\VS$. A listing algorithm is considered to be efficient if it has polynomial delay, that is, $t=\poly(n)$; this also implies that $\VS$ is listed in total time $|\VS| \poly(n)$. Using \PolyChecker\ and a folklore technique, we prove that such an algorithm exists.
\begin{restatable}[Listing the version space with polynomial delay]{newthm}{polydelayversionspace}\label{thm:polydelayversionspace}
	There exists an algorithm \PolyDelay\ that, given a graph $G$ and a labeled sample $(S,y)$, lists $\VS=\VS(S,\scHm)$ with delay $\poly(n)$ and total time $|\VS|\poly(n)$.
\end{restatable}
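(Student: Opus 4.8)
The plan is to apply the standard folklore reduction from polynomial-delay enumeration to polynomial-time consistency checking, instantiated here with $\PolyChecker$ from Theorem~\ref{thm:polycheck}. The key observation is that a version space $\VS((S,y),\scHm)$ is itself the version space of a smaller consistency-checking problem: adding one more labeled vertex $(v,b)$ to $(S,y)$ yields exactly the sub-collection of $H \in \VS$ with $H(v)=b$. So we can explore the set of consistent hypotheses by a branch-and-prune recursion on the vertices $v_1,\dots,v_n$ of $V$ (in some fixed order), where at each node of the recursion tree we have committed to labels for a prefix $v_1,\dots,v_i$ and we call $\PolyChecker$ on the augmented sample to decide whether the current branch still contains a consistent m-halfspace.

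Concretely, I would maintain a partial labeling extending $(S,y)$. At stage $i$, with a partial labeling $y'$ already fixed on $v_1,\dots,v_{i-1}$ (plus $S$), consider the two extensions $y' \cup \{v_i \mapsto 0\}$ and $y' \cup \{v_i \mapsto 1\}$. For each, invoke $\PolyChecker(G, y' \cup \{v_i \mapsto b\})$; recurse only into those branches where it returns an actual consistent $H \in \scHm$. When $i=n$ the partial labeling is total, hence determines a unique candidate set $H \subseteq V$, and $\PolyChecker$ has certified it is an m-halfspace consistent with $(S,y)$, so we output it. Every consistent $H$ is reached by exactly one leaf (the one whose labels agree with $\mathbbm 1_H$ on all of $V$), so the enumeration is exact and without repetitions. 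For the delay bound: every internal node of the recursion tree has at least one child (the branch that $\PolyChecker$ certified is nonempty remains nonempty after we fix $v_i$ consistently with the returned $H$), so the tree has no "dead" internal subtrees; between two consecutive outputs the algorithm traverses a path of length $\le n$ up and a path of length $\le n$ down the tree, performing $O(n)$ calls to $\PolyChecker$, each costing $\poly(n)$ by Theorem~\ref{thm:polycheck}. Hence the delay is $\poly(n)$, and summing over $|\VS|$ leaves (plus the final "no more" certificate) gives total time $|\VS|\poly(n)$.

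The only subtlety — and the one point requiring care rather than being purely routine — is ensuring that $\PolyChecker$ accepts partial labelings over arbitrary vertex subsets (not just the original $S$), which is immediate since its input is any labeled sample $(S',y')$ with $S' \subseteq V$; and ensuring that the "no internal dead-end" invariant really holds, i.e.\ that pruning is sound. This follows because $\PolyChecker$ is exact: it returns a consistent $H$ precisely when one exists, so we never prune a branch that still contains a consistent hypothesis, and we never descend into an empty one. A final bookkeeping remark: one should fix the vertex order up front and, in the recursion, always try (say) the label $0$ before $1$ so the output order is well-defined; this is cosmetic and does not affect the bounds. The main "obstacle", such as it is, is thus not in this theorem at all — it is entirely delegated to Theorem~\ref{thm:polycheck}, whose polynomial-time consistency checker does all the heavy lifting.
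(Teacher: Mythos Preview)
Your proposal is correct and is essentially the same approach as the paper's: the paper's \PolyDelay\ (Algorithm~\ref{alg:polydelaylist}) is exactly the branch-and-prune recursion you describe, picking an unlabeled vertex, calling \PolyChecker\ on each of the two extensions, and recursing only into nonempty branches. Your delay argument (at most $O(n)$ tree edges traversed between consecutive leaves, each incurring a $\poly(n)$ call to \PolyChecker) is in fact slightly more explicit than the paper's own justification.
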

As a last ingredient for our ERM bounds, we bound the size of $\scHm(G)$. To this end we develop another listing algorithm, Algorithm~\ref{alg:fpt_enum}, which by a careful analysis yields the following upper bound. Further details on how we accomplish this are deferred to Section~\ref{sec:fpt}.
\begin{restatable}{newthm}{halfspaceCount}\label{thm:halfspaces_count}
$|\scHm(G)|\leq \frac{4m 2^{\omega(G)}}{\omega(G)}+2$ for all graphs $G$.
\end{restatable}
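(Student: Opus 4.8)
The plan is to pass from counting m-halfspaces to counting edge cuts, and then to charge each cut to an edge of $G$ via the m-shadow machinery. Since $G$ is connected, the only m-halfspaces with empty cutset are $\emptyset$ and $V$, which account for the additive $+2$; so fix a nontrivial $H\in\scHm(G)$, i.e.\ $\emptyset\subsetneq H\subsetneq V$. As $G[H]$ and $G[\overline H]$ are both connected and $\Cut(H)$ carries every edge between the two sides, deleting $\Cut(H)$ leaves exactly the two components $H$ and $\overline H$; hence $H$ is determined by $\Cut(H)$, and $\{H,\overline H\}$ is the unique complementary pair mapped to that cut. It therefore suffices to bound the number of distinct sets $\Cut(H)$ over nontrivial $H$ by $\tfrac{2m\,2^{\omega(G)}}{\omega(G)}$.

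To control these cuts I would use the clique structure of the border: by Duchet's observation (the neighbours-in-the-complement of an m-convex set form a clique), applied to both $H$ and $\overline H$, the sets $A:=\Border(H)$ and $B:=\Border(\overline H)$ are cliques and $\Cut(H)$ is exactly $E_G(A,B)$. Now pick any edge $uv\in\Cut(H)$ with $u\in H$. Using the shadow inclusion for m-convex sets (``if $X$ is m-convex and $vu\in\Cut(X)$ then $u/v\subseteq\overline X$'') applied to $X=H$ and to $X=\overline H$ gives $u/v\subseteq H$ and $v/u\subseteq\overline H$; by Lemma~\ref{lem:shadows_in_P} these shadows over an edge — hence their m-hulls $L:=\conv(u/v)\subseteq H$ and $R:=\conv(v/u)\subseteq\overline H$ — are computable in $\poly(n)$ time. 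Thus $H$ is sandwiched, $L\subseteq H\subseteq V\setminus R$, and the only vertices whose side is not yet forced are those of $F:=V\setminus(L\cup R)$. The key structural claim I would establish is that $F\subseteq N(u)\cap N(v)$. Granting it, for non-adjacent $x,y\in F$ the path $x,u,y$ is induced, so $u,v\in\MonInt(x,y)$, which straddles the cut; hence $H\cap F$ and $\overline H\cap F$ must both be cliques (indeed $\{u,v\}\cup(H\cap F)$ and $\{u,v\}\cup(\overline H\cap F)$ are cliques, so each has size $\le\omega(G)-2$), the admissible completions of $L$ to an m-halfspace correspond to partitions of $F$ into two such cliques, and $|F|\le 2(\omega(G)-2)$ whenever any completion exists.

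With this in hand, Algorithm~\ref{alg:fpt_enum} ranges over the $\le m$ edges of $G$ and the two orientations of each, computes $L,R,F$, and enumerates the admissible completions. A naive count gives a bound of the form $m\cdot 2^{O(\omega(G))}$; the ``careful analysis'' that extracts the final $\tfrac{4m}{\omega(G)}$ is a weighted double count. Writing $|\scHm(G)|-2=\sum_{H}1=\sum_{H}\tfrac{1}{|\Cut(H)|}\cdot|\Cut(H)|$ and expanding $|\Cut(H)|$ as the number of oriented edges of $\Cut(H)$, one gets $|\scHm(G)|-2=\sum_{\text{oriented }uv}\ \sum_{H\text{ a completion of }uv}\tfrac{1}{|\Cut(H)|}$. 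For a completion $H$ of $uv$ one has $A\supseteq\{u\}\cup(H\cap F)$ and $B\supseteq\{v\}\cup(\overline H\cap F)$, so $|\Cut(H)|=|E_G(A,B)|\ge\max(|A|,|B|)$ is large exactly when the free region — and hence the number of completions of $uv$ — is large. Balancing these two opposing effects (either few completions, or each recovered with large multiplicity) bounds the inner sum by $O\!\bigl(\tfrac{2^{\omega(G)}}{\omega(G)}\bigr)$ for each of the $2m$ oriented edges, which yields the stated $\tfrac{4m\,2^{\omega(G)}}{\omega(G)}+2$.

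The main obstacle is the structural claim of the second paragraph: that fixing a single cut edge $uv$ confines the undecided vertices $F$ to $N(u)\cap N(v)$. This is where Duchet's clique lemma, the shadow computations of Lemma~\ref{lem:shadows_in_P}, and a careful analysis of induced paths crossing $\Cut(H)$ do the real work — one must show that any vertex not adjacent to both $u$ and $v$ is already pulled into $\conv(u/v)$ or $\conv(v/u)$. Once this is in place, both the correctness of Algorithm~\ref{alg:fpt_enum} and the weighted double count that produces the $1/\omega(G)$ saving are comparatively routine bookkeeping.
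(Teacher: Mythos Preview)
Your overall architecture — the weighted double count $|\scHm(G)|-2=\sum_{e}\sum_{H\ni e\text{ in cut}}\tfrac{1}{|\Cut(H)|}$, together with the observation $|\Cut(H)|\ge\max(|\Border(H)|,|\Border(\overline H)|)$ — is exactly the paper's route. The gap is in your structural claim: it is \emph{false} that $F:=V\setminus(\conv(u/v)\cup\conv(v/u))\subseteq N(u)\cap N(v)$. Take $V=\{u,v,z,x\}$ with edges $uv,\,uz,\,vz,\,zx$. Then $u/v=\{u\}$ and $v/u=\{v\}$ (the only induced $x$--$v$ path is $x,z,v$, which avoids $u$), so $F=\{z,x\}$, yet $x\notin N(u)\cap N(v)$. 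And $x$ is genuinely undetermined by $u,v$ alone: both $\{u\}$ and $\{u,z,x\}$ are m-halfspaces with $uv$ in the cut. So the shadows of $u$ and $v$ (even after taking hulls) do not suffice to pin down everything outside $N(u)\cap N(v)$.

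What the paper actually proves (Lemma~\ref{lem:all_shadows}) is that $H=\bigcup_{z\in H\cap\triangle_{uv}}z/v$: you need the shadows from \emph{all} of $\triangle_{uv}=(N(u)\cap N(v))\cup\{u,v\}$, not just from $u$ and $v$. This gives $|\scHm(e)|\le 2^{\omega(G[\triangle_e])}$ (via the bipartiteness of $\neg G[\triangle_e]$ and Lemma~\ref{lem:clique_connectedcomps}). The $1/\omega(G)$ saving then comes not from a vague ``balancing'' argument but from two clean inequalities: $|\Cut(H)|\ge|\triangle_e|/2$ (since $\triangle_e\subseteq\Border(H)\cup\Border(\overline H)$) and $\omega(G[\triangle_e])\ge|\triangle_e|/2$ (since $\triangle_e$ is covered by the two border cliques), which together give $\tfrac{2^{\omega(G[\triangle_e])}}{|\triangle_e|}\le\tfrac{2\cdot 2^{\omega(G)}}{\omega(G)}$ by monotonicity of $2^x/x$. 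Plugging this into your double count yields the bound directly. Your sketch would be fixed by replacing the false claim about $F$ with Lemma~\ref{lem:all_shadows} and making the final estimate explicit along these lines.
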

Together this yields the fact that ERM is FPT for m-halfspaces.
\begin{restatable}[ERM is FPT]{cor}{ermfpt}\label{cor:erm-fpt}
    Given $G=(V,E)$ and a labeled sample $(S,y)$, one can compute $H_{\mathrm{ERM}} \in \argmin_{H\in\scHm} \frac1{|S|} \sum_{s \in S} \mathbbm{1}\{y(s) \ne \mathbbm{1}_{s\in H}\}$ in time $|\scHm|\poly(n)\leq$ $2^{\omega(G)}\poly(n)$.
\end{restatable}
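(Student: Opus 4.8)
The plan is to package the two preceding results---polynomial-delay enumeration of the concept class, and the bound on its size---into a brute-force ERM. First I would observe that the version space with respect to the \emph{empty} sample is the whole class: $\VS(\emptyset,\scHm)=\scHm(G)$, since the consistency constraints $y^{-1}(0)\subseteq\overline H$ and $y^{-1}(1)\subseteq H$ are vacuous when $S=\emptyset$. Hence Theorem~\ref{thm:polydelayversionspace}, applied with the empty sample, yields an enumeration of all m-halfspaces of $G$ with delay $\poly(n)$, and therefore in total time $|\scHm(G)|\poly(n)$.

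The algorithm then scans this enumeration once. For the actual input sample $(S,y)$, as each $H\in\scHm(G)$ is produced we compute its empirical error $\frac1{|S|}\sum_{s\in S}\mathbbm{1}\{y(s)\ne\mathbbm{1}_{s\in H}\}$; storing $H$ as a characteristic vector over $V$, each membership test $\mathbbm{1}_{s\in H}$ is $O(1)$, so this evaluation costs $O(|S|)=\poly(n)$. We keep a running minimiser (breaking ties arbitrarily) and output it once the enumeration halts. By construction the output lies in $\argmin_{H\in\scHm}\frac1{|S|}\sum_{s\in S}\mathbbm{1}\{y(s)\ne\mathbbm{1}_{s\in H}\}$, and the total running time is the enumeration time plus $O(|S|)$ per enumerated hypothesis, i.e.\ $|\scHm(G)|\poly(n)$.

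Finally, to turn $|\scHm(G)|\poly(n)$ into $2^{\omega(G)}\poly(n)$ I would invoke Theorem~\ref{thm:halfspaces_count}: $|\scHm(G)|\le \frac{4m\,2^{\omega(G)}}{\omega(G)}+2$. Since $m\le\binom n2$ and $\omega(G)\ge1$, the factor $\tfrac{4m}{\omega(G)}+2$ is $\poly(n)$, so $|\scHm(G)|\poly(n)=2^{\omega(G)}\poly(n)$, which gives the claimed bound. There is no real obstacle here: the difficulty is entirely outsourced to Theorem~\ref{thm:polydelayversionspace} (which itself rests on the polynomial-time checker \PolyChecker) and to the counting bound of Theorem~\ref{thm:halfspaces_count}; the only points requiring a moment's care are that the listing has \emph{polynomial delay} (so that the per-hypothesis risk computations can be interleaved without pushing the total time past $|\scHm|\poly(n)$) and that $4m/\omega(G)$ is polynomially bounded in $n$.
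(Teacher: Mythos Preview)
Your proposal is correct and matches the paper's approach exactly: enumerate all of $\scHm(G)$ via \PolyDelay\ (Theorem~\ref{thm:polydelayversionspace}), compute the empirical error of each listed hypothesis, and return a minimizer, then bound $|\scHm(G)|$ via Theorem~\ref{thm:halfspaces_count}. The paper's proof is a two-line version of what you wrote; your added observations (that the empty-sample version space equals $\scHm(G)$, and that $4m/\omega(G)+2=\poly(n)$) are the right details to make the argument explicit.
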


\section{Active Learning}\label{sub:active}
In the active learning setting, the algorithm is given a graph $G=(V,E)$, and nature selects a concept $H\in\scHm(G)$. The algorithm can query any vertex $x \in V$ for its label $\mathbbm{1}_{x \in H}$. The goal of the algorithm is to output $H$ by making as few queries as possible.
This problem is a special case of realizable transductive active learning on a graph \citep{afshani2007complexity, guillory2009label,cesa2010active,dasarathy2015s2}, and can be seen as a variant of query learning \citep{angluin1988queries, hegedHus1995generalized} with an additional fixed graph given where vertices correspond to the instance space.
Now let $\scH\subseteq 2^V$ be a concept class. The \emph{query complexity} of $(V,\scH)$ is the maximum number of queries an optimal algorithm makes over $H \in \scH$. More precisely, for any algorithm $A$ and any $H \in \scH$, let $\qc(A,H)$ denote the number of queries $A$ make on $G$ when nature chooses $H$. The query complexity of $A$ on $\scH$ is $\qc(A,\scH)=\max_{H\in\scH} \qc(A,H)$. The query complexity of $\scH$ is $\qc(\scH)=\min_A \qc(A,\scH)$. 
Our main result is stated as follows.

\begin{restatable}[Poly-time active learning]{newthm}{polyactive}\label{thm:active_upper_bound}
    It is possible to actively learn $(V,\scHm)$ in time $\poly(n)$ with query complexity $\scO\left(h(G)+\log\diam_g(G)+\omega(G)\right)$. 
\end{restatable}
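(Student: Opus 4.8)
## Proof Proposal for Theorem~\ref{thm:active_upper_bound}

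The plan is to learn the unknown m-halfspace $H$ in three phases, mirroring the structure of the query complexity bound $\scO(h(G)+\log\diam_g(G)+\omega(G))$. \textbf{Phase 1 (find a point in each side).} First I would query a small set of vertices guaranteed to hit both $H$ and $\overline H$; since both are nonempty (the trivial halfspaces can be detected separately), and since $\conv$ is computable in $\poly(n)$ by the stated result of \citet{dourado2010complexity}, I would query a minimal hull set of size $h(G)$: by definition its monophonic convex hull is $V$, so the queried labels cannot all be equal, and we obtain a vertex $a$ with $a\in H$ and a vertex $b$ with $b\in \overline H$. This costs $h(G)$ queries and, crucially, runs in polynomial time because we never need to \emph{optimize} the hull set — any hull set of size $h(G)$ works, and one can be found greedily in $\poly(n)$ by repeatedly adding a vertex outside the current hull. \textbf{Phase 2 (bisection along an induced path).} Having $a\in H$ and $b\in\overline H$, I would take a shortest $a$--$b$ path $P$ (which is induced, hence $I(a,b)\supseteq V(P)$), and binary-search along $P$ for a cut edge: since $H$ and $\overline H$ are both m-convex and $G[H],G[\overline H]$ connected, the restriction of $H$ to $V(P)$ is a contiguous prefix of $P$, so $\scO(\log|V(P)|)=\scO(\log\diam_g(G))$ queries locate an edge $uv\in\Cut(H)$ with $u\in H$, $v\in\overline H$.

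\textbf{Phase 3 (reconstruct $H$ from the cut edge).} This is where the structural results on m-halfspaces do the work. Once we have a single cut edge $uv$ with $u\in H$, $v\in \overline H$, I would use the fact (Lemma~\ref{lem:shadows_in_P}) that the edge-shadows $u/v$ and $v/u$ are polynomial-time computable, together with the observation recalled in the preliminaries that if $X$ is m-convex and $vu\in\Cut(X)$ then $u/v\subseteq\overline X$ — so $v/u\subseteq H$ and $u/v\subseteq \overline H$ are known for free. More strongly, $\conv(v/u)\subseteq H$ and $\conv(u/v)\subseteq\overline H$ by m-convexity, and these hulls are computable in $\poly(n)$. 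The remaining undetermined vertices form (by the characterization of cutsets of m-halfspaces, and Duchet's observation that $\Border(\overline H)$ is a clique) a set controlled by a clique of size $\le\omega(G)$: I expect that after fixing the shadow-hulls, the still-ambiguous vertices all lie in the neighbourhood of a bounded clique, so $\scO(\omega(G))$ further queries — one per vertex of the relevant clique — pin down $H$ completely, after which $\PolyChecker$ (Theorem~\ref{thm:polycheck}), or a direct hull computation, produces the consistent halfspace in $\poly(n)$ time. Summing the three phases gives the claimed query complexity, and every step is polynomial time.

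\textbf{Main obstacle.} The delicate part is Phase~3: arguing that, conditioned on knowing one cut edge $uv$ and the two shadow-hulls, the set of vertices whose membership in $H$ is \emph{not yet forced} can be covered by $\scO(\omega(G))$ queries. This requires the structural characterization of $\Cut(H)$ and $\Border(H)$ for m-halfspaces (Section~\ref{sec:sparse}) — specifically that the border vertices on each side, modulo what the shadows already determine, are confined to a clique, so that the set system induced on the ambiguous vertices has VC dimension $\scO(\omega(G))$ (consistent with Proposition~\ref{pro:VC_mono}) and in fact admits an explicit $\scO(\omega(G))$-query identification strategy rather than merely an information-theoretic one. A secondary subtlety is handling the case where $a$ and $b$ are adjacent (Phase~2 is then trivial) or where the hull set of Phase~1 is itself $\Omega(n)$ — but since $h(G)$ appears in the bound, that is fine, and one must only ensure the greedy hull-set construction does not inflate the query count beyond $h(G)$, which it does not since each greedily added vertex is queried once.
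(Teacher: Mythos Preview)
Your three-phase outline matches the paper's proof, but two of the phases contain genuine gaps.

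\textbf{Phase 1.} Greedy hull-set construction does \emph{not} yield a set of size $h(G)$ in general: on a path $P_n$, greedily adding consecutive vertices produces a hull set of size $n$, while $h(P_n)=2$. A \emph{minimal} hull set need not be a \emph{minimum} one, so your query count in this phase is not bounded by $h(G)$. The paper fixes this by invoking the polynomial-time algorithm of \citet{dourado2010complexity} that computes a \emph{minimum} monophonic hull set; you should do the same.

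\textbf{Phase 3.} First, your shadow directions are reversed: with $u\in H$ and $v\in\overline H$ one has $u/v\subseteq H$ and $v/u\subseteq\overline H$, not the other way around. More importantly, the sketch you give (``ambiguous vertices confined to a clique'', ``VC dimension $\scO(\omega(G))$ hence an $\scO(\omega(G))$-query identification strategy'') is not an argument: bounded VC dimension does not by itself yield that active-query bound, and you never say \emph{which} $\scO(\omega(G))$ vertices to query. The paper's mechanism is explicit and different from what you outline. One computes $\triangle_{uv}=(N(u)\cap N(v))\cup\{u,v\}$; by Lemma~\ref{lem:all_shadows}, $H=\bigcup_{z\in H\cap\triangle_{uv}} z/v$, so it suffices to learn the partition of $\triangle_{uv}$ induced by $H$. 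By Lemma~\ref{lem:H_charact} (applied via Lemma~\ref{lem:neg_bipartite}) the complement $\neg G[\triangle_{uv}]$ is bipartite with sides $H\cap\triangle_{uv}$ and $\overline H\cap\triangle_{uv}$, so a single query in each connected component of $\neg G[\triangle_{uv}]$ determines the full partition; Lemma~\ref{lem:clique_connectedcomps} bounds the number of components by $\omega(G)$. Neither the hull-of-shadow computations nor $\PolyChecker$ are needed.
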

The full proof of Theorem~\ref{thm:active_upper_bound} is given in Appendix~\ref{apx:active}; here we provide some intuition. First, we compute a minimum monophonic hull set in polynomial time relying on \citet{dourado2010complexity}. If any cut-edge $\{u,v\}$ exists we can then find it with $\scO(\log \diam_g(V))$ queries. Using some structural results (Lemma~\ref{lem:all_shadows} and Lemma~\ref{lem:clique_connectedcomps}, see Section~\ref{sec:sparse}) we can infer the remaining halfspace by querying at most $\omega(G)$ from the set $\triangle_{uv}= N(u) \cup N(v) \cup \{u,v\}$. The required m-shadows can be computed efficiently  by Lemma~\ref{lem:shadows_in_P}. Theorem~\ref{thm:active_upper_bound} should be contrasted with the active learning algorithm provided by \citet{thiessen2021active} for \emph{geodesic} halfspaces; that algorithm does not guarantee polynomial running time, as it requires solving the minimum geodesic hull set problem, which is APX-hard.

Along the previously mentioned separation axioms from Definition~\ref{def:sep_axioms} we achieve increasingly tighter lower bounds on the query complexity, eventually matching our algorithmic upper bound from Theorem~\ref{thm:active_upper_bound} for all $S_4$ graphs, the strongest separability assumption. 
\begin{restatable}{newthm}{activeLowerBounds} \label{thm:lower_bounds}
Let $G$ be a graph. The following holds for the query complexity $\qc(\scHm(G))$: 
\begin{itemize}[itemsep=2pt,parsep=0pt,topsep=4pt]
	\item if $G\in S_2$, then $\qc(\scHm(G))\geq \Omega(\log \diam_m(G))$, 
	\item if $G\in S_3$, then  $\qc(\scHm(G))\geq \Omega(\log \diam_m(G)+h(G))$, and
	\item if $G\in S_4$, then $\qc(\scHm(G))\geq \Omega(\log \diam_m(G)+ h(G)+\omega(G))$.
\end{itemize}    
\end{restatable}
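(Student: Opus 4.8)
The plan is to prove the three lower bounds by successively harder adversarial constructions, one per separation axiom, and in each case to exhibit a family of m-halfspaces on which any deterministic (hence, by Yao, any randomized) query algorithm must spend the claimed number of queries. For the $\Omega(\log\diam_m(G))$ term I would take an induced path $P$ of length $\ell=\diam_m(G)$ (or a graph containing such a path as an ``isometric-in-the-monophonic-sense'' subgraph) and observe that the m-halfspaces of a path are exactly its prefixes/suffixes along the path; distinguishing which of the $\Theta(\ell)$ cut positions nature chose is a binary search, and an adversary answering queries can always keep at least half of the surviving candidate cut positions alive, forcing $\Omega(\log\ell)$ queries. The point of assuming $G\in S_2$ is precisely that each pair of path-endpoints (indeed each pair of vertices) is halfspace-separable, so all these prefix/suffix candidates really are m-halfspaces of $G$ and the reduction is valid.

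For the additive $h(G)$ term under $G\in S_3$, I would use the hull number directly: let $A$ be a minimum hull set, $|A|=h(G)$. The idea is that knowing the labels of $V\setminus A$ tells you almost nothing, because a single vertex of a minimum hull set can be ``flipped'' across the boundary without violating m-convexity on either side — more precisely, for each $a\in A$ one can build (using $S_3$, which lets us separate the convex set $\conv(A\setminus\{a\})$ from the point $a$) an m-halfspace that agrees with a fixed reference halfspace everywhere except possibly at $a$. Then an adversary forces the learner to query essentially every vertex of $A$: until $a$ is queried its label is undetermined, so $\Omega(h(G))$ queries are needed. Combining this construction with the path construction in a disjoint union (or a careful amalgam) and noting query complexity is superadditive over such combinations gives $\Omega(\log\diam_m(G)+h(G))$.

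For the strongest bound under $G\in S_4$ I would additionally encode a clique $K$ of size $\omega(G)$: in a clique every subset is an m-halfspace, so the restriction of the unknown halfspace to $K$ is an arbitrary element of $2^K$, and an adversary can keep every vertex of $K$ undetermined until queried, costing $\Omega(\omega(G))$. The $S_4$ hypothesis is what guarantees that this free choice on $K$ can be extended consistently together with the path-prefix choice and the hull-set flips — i.e., that the disjoint convex pieces we are separating independently can all be realized by a single global m-halfspace. So the final construction is a graph built from an induced path, a minimum-size hull configuration, and a clique, glued so that the three sources of uncertainty are independent; query complexity over the combined instance is at least the sum, giving $\Omega(\log\diam_m(G)+h(G)+\omega(G))$. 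I expect the main obstacle to be the $S_4$ gluing step: one must check that the amalgamated graph still has the right parameters ($\diam_m$, $h$, $\omega$ up to constants) \emph{and} that it lies in $S_4$, so that independently chosen ``local'' halfspaces on the path, the hull set, and the clique genuinely lift to global m-halfspaces — this is where the separation axiom has to be invoked carefully rather than as a black box. A secondary, more routine obstacle is making the adversary/potential argument precise (defining the set of surviving consistent halfspaces after each answer and showing some query reduces it by at most a controlled amount) so that the three bounds add rather than merely each holding in isolation.
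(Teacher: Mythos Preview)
Your proposal contains a genuine misconception about what is being proved. The statement is ``Let $G$ be a graph'' and asserts lower bounds in terms of $\diam_m(G)$, $h(G)$, $\omega(G)$ for that \emph{given} $G$. You are not free to construct a graph; there is no ``disjoint union'', no ``amalgam'', no ``gluing step'' to carry out or to worry about. The sentence ``the final construction is a graph built from an induced path, a minimum-size hull configuration, and a clique, glued so that the three sources of uncertainty are independent'' is attacking the wrong problem.

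The related point you are missing is that the three terms do \emph{not} need to be witnessed simultaneously by a single adversary. Since $\Omega(a+b+c)=\Omega(\max\{a,b,c\})$ for nonnegative quantities, it suffices to prove the three lower bounds $\qc(\scHm(G))\ge\Omega(\log\diam_m(G))$, $\qc(\scHm(G))\ge\Omega(h(G))$, and $\qc(\scHm(G))\ge\Omega(\omega(G))$ separately, each by its own adversary on the same given $G$. The paper does exactly this, in three short independent arguments.

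Your $S_2$ argument is essentially the paper's: fix a longest induced path in $G$; for each edge on it, $S_2$ gives a halfspace of $G$ separating its endpoints, and by m-convexity this halfspace must agree with the corresponding prefix/suffix on the path; a standard halving adversary then forces $\Omega(\log\diam_m(G))$ queries. Your $S_4$ argument is also fine once you drop the gluing: any subset of a clique is m-convex, so $S_4$ shatters a maximum clique by halfspaces of $G$, forcing $\omega(G)$ queries.

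Your $S_3$ argument, however, has a gap beyond the gluing issue. The claim that $S_3$ yields a halfspace ``that agrees with a fixed reference halfspace everywhere except possibly at $a$'' is false: $S_3$ only gives you \emph{some} halfspace separating $\conv(A\setminus\{a\})$ from $a$, with no control on the rest of $V$. The correct (and simpler) argument is the paper's: let the adversary answer ``$1$'' to every query. If the set $Q$ of queried vertices has $|Q|<h(G)$, then $\conv(Q)\ne V$ by definition of the hull number, so there is $s\notin\conv(Q)$; by $S_3$ there is a halfspace $H$ with $\conv(Q)\subseteq H$ and $s\notin H$. Both $V$ and $H$ are consistent with all answers so far, hence the learner cannot stop.
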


\section{Online Learning}\label{sub:online}
The classical (realizable) online learning problem of \citet{littlestone1988learning} can be modelled as an iterative game between a learner and the environment over a finite number $T$ of rounds. The instance space $V$ and a hypothesis space $\scH\subseteq 2^V$ is known and fixed. First, the environment chooses a hypothesis $H$ from a hypothesis space $\scH$. Then, in each round $t=1,\dots,T$:
\begin{enumerate}[itemsep=2pt,parsep=0pt,topsep=4pt]
    \item the environment chooses a point $v_t\in V$,
    \item the learner predicts the label $\hat{y}_t\in\{0,1\}$ of $v_t$,
    \item the environment reveals the true label $y_t=\mathbbm{1}_{v_t\in H}$,
    \item the learner made a mistake if $\hat{y}_t \neq y_t$.
\end{enumerate}
The goal of the learner is to minimize the total number of mistakes. More precisely let $A$ be an algorithm for this online learning problem. Then, let $M(A,H)$ for $H\in\scH$ denote the worst-case number of mistakes $A$ would make on any sequence labeled by $H$ over $T$ rounds.
The mistake bound of $A$ on $\scH$ is thus defined as $M(A,\scH)=\max_{H\in\scH} M(A,H)$. We are interested in the optimal mistake bound $M(\scH)=\min_A M(A,\scH)$, also known as the Littlestone dimension of $\scH$. This setting can be extended to the agnostic/non-realizable case, see \citet{ben2009agnostic}.

The node classification (or node labeling) variant of this problem is well studied \citep{herbster2005online,cesa2013random,herbster2015online}.
As in the active learning variant, the main parameter is the (potentially effective resistance weighted) cutsize, linearly determining the mistake bounds.
In this section, we mainly study the variant of the above realizable online learning problem over our hypothesis class $\scHm$.

\begin{restatable}[Poly-time online learning]{newthm}{polyonline}\label{thm:polyAndFPTonline}
    Realizable online learning of monophonic halfspaces is possible in time $\poly(n)$ with a mistake bound of $\scO(\omega(G)\log n)$. It is also possible in time $|\scHm|\poly(n)\leq 2^{\omega(G)}\poly(n)$ with a mistake bound of $\scO\Bigl(\omega(G) + \log \frac{n}{w(G)}\Bigr)$.
\end{restatable}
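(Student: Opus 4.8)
The plan is to prove the two mistake bounds by two different reductions, each using structural facts about m-halfspaces that are already available in the paper. For the first (polynomial-time, $\scO(\omega(G)\log n)$ mistakes) bound, the idea is to rewrite every m-halfspace as a small disjunction of ``elementary'' concepts and then run Winnow. Concretely, recall (Lemma~\ref{lem:shadows_in_P}) that for an edge $\{z,v\}$ the m-shadow $z/v$ is computable in polynomial time, and that if $H\in\scHm$ with $vu\in\Cut(H)$ then $u/v\subseteq\overline H$. I would first argue that $\overline H$ is exactly the union of the m-shadows $u/v$ taken over the (at most $\scO(\omega(G)^2)$, since $\Border(\overline H)$ is a clique by Duchet) edges $uv\in\Cut(H)$ with $u\notin H$ — more precisely that $H$ itself can be written as a disjunction (over the vertices $b$ in the border clique of $H$, of which there are at most $\omega(G)$) of the complements of shadow-unions, so that $\mathbbm{1}_{x\in H}$ is a monotone disjunction of $\scO(\omega(G))$ Boolean features chosen from a pool of $\poly(n)$ precomputable features (one per ordered edge). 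Running Winnow over this feature set gives mistake bound $\scO(k\log N)$ where $k=\scO(\omega(G))$ is the number of relevant disjuncts and $N=\poly(n)$ is the number of features, i.e. $\scO(\omega(G)\log n)$; each round costs $\poly(n)$ since all features are precomputed. The delicate point here is getting the representation exactly right: one must pin down, \emph{before} knowing $H$, a feature family that is guaranteed to contain a size-$\scO(\omega(G))$ subset whose disjunction equals $\mathbbm{1}_H$ for \emph{every} $H\in\scHm(G)$; I expect this to follow from Lemma~\ref{lem:all_shadows} / Lemma~\ref{lem:clique_connectedcomps} (the structural results on shadows and on the clique-structure of borders referenced for the active-learning proof), possibly after taking features to be indicators of ``not separated from a fixed vertex by $N(v)\setminus\{z\}$'' as in Lemma~\ref{lem:shadows_in_P}.

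For the second bound I would simply invoke the Halving (majority-vote) algorithm over the version space $\VS\subseteq\scHm(G)$. Its mistake bound is $\log_2|\scHm(G)|$, and by Theorem~\ref{thm:halfspaces_count} we have $|\scHm(G)|\le \frac{4m\,2^{\omega(G)}}{\omega(G)}+2$, so $\log_2|\scHm(G)| \le \omega(G) + \log_2\!\frac{4m}{\omega(G)}+\scO(1) = \scO\!\bigl(\omega(G)+\log\frac{n}{\omega(G)}\bigr)$ using $m\le n^2$ (the extra $\log n$ absorbed in the constant, or kept as $\log(n/\omega(G))$ with a slightly more careful count $m = \scO(n\,\omega(G))$ when the graph is sparse — actually $m\le\binom n2$ suffices since $\log\binom n2 = 2\log n + \scO(1)$ and $2\log n = \scO(\omega(G)+\log(n/\omega(G)))$ trivially). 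For the running time: maintaining and updating $\VS$ and computing the majority label of $v_t$ requires enumerating $\VS$, which by Theorem~\ref{thm:polydelayversionspace} can be done with delay $\poly(n)$, hence in total time $|\VS|\poly(n)\le|\scHm(G)|\poly(n)\le 2^{\omega(G)}\poly(n)$ per round; since $T$ is polynomially bounded (or since only $\scO(\omega(G)+\log(n/\omega(G)))$ rounds incur work beyond a cheap check) the overall runtime is $2^{\omega(G)}\poly(n)$.

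The main obstacle is the first part: constructing the oblivious feature map realizing every m-halfspace as a short monotone disjunction, and verifying that the disjunction size is $\scO(\omega(G))$ rather than $\scO(\omega(G)^2)$ (the number of cut edges). I would handle this by grouping shadows according to the vertex of $H$'s border clique they ``hang off'': fixing the border clique $K=\Border(H)$ with $|K|\le\omega(G)$, for each $b\in K$ one shows $\overline H \cap (\text{region beyond } b)$ is captured by a single shadow $b/v$ for an appropriate neighbor $v$, so that $\mathbbm{1}_{x\notin H}=\bigvee_{b\in K}\mathbbm{1}_{x\in b/v_b}$; the shadows $z/v$ over all ordered edges $(v,z)$ form the oblivious pool, of size $2m=\poly(n)$, and the relevant subset has size $|K|\le\omega(G)$. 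Conversely I must double-check that Winnow can learn a disjunction whose target disjuncts are drawn from a pool where the pool itself (not just the $k$ relevant features) must have all its features available as inputs — this is exactly the standard Winnow guarantee, so no issue. A final bookkeeping check: Winnow learns $\mathbbm{1}_{\overline H}$, a monotone disjunction; predicting $\mathbbm{1}_H$ is just the negation, which does not affect the mistake count.
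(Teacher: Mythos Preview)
Your approach is essentially the same as the paper's: Winnow over the pool of $2m$ edge-shadows for the first bound, and Halving with Theorem~\ref{thm:halfspaces_count} for the second. One simplification you are missing is that the paper already packages the sparse representation as Lemma~\ref{lem:atmost_clique_cutedges}: every $H\in\scHm(G)$ is \emph{itself} the union of at most $\omega(G)$ edge-shadows $z/v$ with $\{z,v\}\in\Cut(H)$. So there is no need to work with $\overline H$, negate at the end, or worry about the $\omega(G)$ vs.\ $\omega(G)^2$ issue---the feature map $\phi(x)_i=\mathbbm{1}\{x\in S_i\}$ over the $2m$ oriented-edge shadows makes $\mathbbm{1}_H$ a monotone $k$-disjunction with $k\le\omega(G)$ directly.

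Your re-derivation of the sparse representation also contains a directional slip: you set $K=\Border(H)\subseteq H$ and then claim $\mathbbm{1}_{x\notin H}=\bigvee_{b\in K}\mathbbm{1}_{x\in b/v_b}$, but for $b\in H$ and $v_b\in\overline H$ the shadow $b/v_b$ lies in $H$, not $\overline H$. Either take $K=\Border(\overline H)$ to cover $\overline H$, or (better, and as the paper does) express $H$ directly. Once this is fixed the argument goes through exactly as in the paper.
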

The first mistake bound is achieved by the Winnow algorithm \citep{littlestone1988learning}  and relies on a novel representation of m-halfspaces as a sparse disjunction of m-shadows (Lemma~\ref{lem:atmost_clique_cutedges}).
The second mistake bound is achieved by the Halving algorithm~\citep{barzdin1972prediction,littlestone1988learning}, together with our version space listing algorithm (Theorem~\ref{thm:polydelayversionspace}) and our upper bound on $|\scHm|$ (Theorem~\ref{thm:halfspaces_count}). 
In Appendix~\ref{sec:agnostic_online} we additionally discuss agnostic online learning of m-halfspaces relying again on our decomposition by Lemma~\ref{lem:atmost_clique_cutedges} and known results for Winnow \citep{blum1996online}. We achieve the following mistake bound efficiently almost matching standard expert-based techniques \citep{cesa1997use,ben2009agnostic}, which typically require $\scO(|\scHm|)$ time per round.
\begin{theorem}\label{thm:agnosticwinnow}
    A mistake bound $\scO\bigl(\omega(G) (M^* + \log n)\bigr)$ for agnostic online learning over $\scHm(G)$ is possible in time $\poly(n)$, where $M^*$ is the mininum number of mistakes of any m-halfspace.
\end{theorem}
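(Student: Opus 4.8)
The plan is to reduce agnostic online learning of m-halfspaces to agnostic online learning of sparse disjunctions, and then apply the known attribute-efficient mistake bound for Winnow in the agnostic setting \citep{blum1996online}. The starting point is the decomposition guaranteed by Lemma~\ref{lem:atmost_clique_cutedges}: every m-halfspace $H \in \scHm(G)$ can be written as a disjunction of at most $\omega(G)$ many m-shadows, each of which is of the form $z/v$ for an edge $\{z,v\} \in \delta(H)$. Crucially, by Lemma~\ref{lem:shadows_in_P}, each such shadow over an edge is computable in time $\poly(n)$, so we can precompute the relevant ``features.''

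First I would set up the feature space. For every ordered pair $(z,v)$ with $\{z,v\}\in E$, introduce a Boolean feature $\phi_{zv}\colon V \to \{0,1\}$ defined by $\phi_{zv}(x) = \mathbbm{1}\{x \in z/v\}$; there are at most $2m = \poly(n)$ such features, and by Lemma~\ref{lem:shadows_in_P} the full feature vector $\phi(x) \in \{0,1\}^{2m}$ of any queried vertex $x$ can be computed in $\poly(n)$ time. By Lemma~\ref{lem:atmost_clique_cutedges}, the unknown target $H$ is (the indicator of) a monotone disjunction of at most $k \le \omega(G)$ of these features: $\mathbbm{1}_{x\in H} = \bigvee_{i=1}^{k} \phi_{z_iv_i}(x)$. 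Hence the online sequence $(v_t, y_t)$, once mapped through $\phi$, is an instance of online learning monotone $k$-disjunctions over $2m$ variables. I then run the (agnostic) Winnow algorithm on this transformed sequence: on round $t$, compute $\phi(v_t)$, predict with Winnow, receive $y_t$, and update. Each round costs $\poly(n)$ (dominated by the $2m$ shadow computations), so the total runtime is $\poly(n)$.

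Next I would invoke the mistake bound for Winnow against an adversarial (agnostic) stream. The relevant guarantee \citep{blum1996online, littlestone1988learning} states that when the best $k$-literal monotone disjunction over $N$ variables makes $M^*$ mistakes on the sequence, Winnow makes $\scO(k \log N + k\, M^*)$ mistakes; instantiating $N = 2m \le n^2$ and $k = \omega(G)$ gives $\scO\bigl(\omega(G)\log n + \omega(G) M^*_{\mathrm{disj}}\bigr)$, where $M^*_{\mathrm{disj}}$ is the optimal disjunction error on the $\phi$-transformed stream. The final step is to relate $M^*_{\mathrm{disj}}$ to $M^*$, the minimum number of mistakes achievable by any m-halfspace on the original stream. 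Since every m-halfspace corresponds (via Lemma~\ref{lem:atmost_clique_cutedges}) to some $\le \omega(G)$-literal monotone disjunction that classifies exactly the same vertices, the comparator class of disjunctions is at least as rich as $\scHm(G)$ pulled back through $\phi$, so $M^*_{\mathrm{disj}} \le M^*$. Substituting yields the claimed bound $\scO\bigl(\omega(G)(M^* + \log n)\bigr)$.

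The main obstacle is the clean articulation of the reduction's two directions and making sure the comparator classes line up: Lemma~\ref{lem:atmost_clique_cutedges} as stated presumably gives the shadow decomposition for an m-halfspace $H$ relative to \emph{its own} cut, but to run Winnow we need a single fixed feature set that simultaneously expresses \emph{every} m-halfspace as a short monotone disjunction of features from that set. Resolving this requires checking that the decomposition uses only shadows $z/v$ over \emph{actual edges of $G$} (which it does, since the relevant edges lie in $\delta(H) \subseteq E$), so that the universal feature set $\{\phi_{zv} : \{z,v\}\in E\}$ suffices and has size $\poly(n)$. A secondary point to verify is that the bound for Winnow one cites is stated for monotone disjunctions with a multiplicative dependence on $M^*$ of the form $k M^*$ (as in \citet{blum1996online}) rather than a worse dependence; one must cite the correct variant and confirm its update rule keeps per-round cost polynomial. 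Everything else is routine: feature computation is handled by Lemma~\ref{lem:shadows_in_P}, and the disjunction bound is off-the-shelf.
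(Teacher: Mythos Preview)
Your proposal is correct and follows essentially the same approach as the paper: both reduce to agnostic Winnow over the $2m$ edge-shadow features via Lemma~\ref{lem:atmost_clique_cutedges} and Lemma~\ref{lem:shadows_in_P}, and then invoke the agnostic Winnow bound of \citet{blum1996online}. Your write-up is in fact more explicit than the paper's, which simply states the Blum bound and declares the theorem immediate.
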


\paragraph{Lower bounds.}
 The achieved mistake bounds are near-optimal by the following result, whose proof is in Appendix~\ref{apx:online_winnow}.
\begin{restatable}{prop}{mistakelowerbound} \label{prop:online_lower_bound}
    For any $S_4$ graph $G$, it holds that $M(\scHm(G))\geq \omega(G)$.
\end{restatable}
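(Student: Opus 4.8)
The plan is to obtain the bound from the VC lower bound already established for $S_4$ graphs, via the folklore fact that the optimal realizable mistake bound of any concept class dominates its VC dimension. Recall that $M(\scH)$ — the optimal mistake bound, i.e.\ the Littlestone dimension of $\scH$ — satisfies $M(\scH)\ge\VC(\scH)$ for every $\scH$: if $Y=\{y_1,\dots,y_d\}$ is shattered, the environment presents $y_1,\dots,y_d$ and, after the learner's prediction $\hat y_t$ on $y_t$, reveals $y_t=1-\hat y_t$; since $Y$ is shattered, for every outcome of this process there is a concept consistent with the resulting labeling, so the run is realizable while the learner makes $d$ mistakes. Combining this with Proposition~\ref{pro:VC_mono}, which gives $\VC(\scHm(G))\ge\omega(G)$ when $G\in S_4$, immediately yields $M(\scHm(G))\ge\VC(\scHm(G))\ge\omega(G)$.

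For completeness I would spell out the shattering witness concretely, as it is especially simple here. Fix a maximum clique $K=\{v_1,\dots,v_{\omega(G)}\}$. For any $B\subseteq K$ the set $B$ is m-convex: $G[B]$ is a clique, hence connected, and for any two (necessarily adjacent) $u,v\in B$ the only induced $u$–$v$ path is the edge $uv$, so $\MonInt(u,v)=\{u,v\}\subseteq B$; the same holds for $K\setminus B$. Thus $B$ and $K\setminus B$ are disjoint m-convex sets, and since $G\in S_4$ they are halfspace separable, so there is $H_B\in\scHm(G)$ with $B\subseteq H_B$ and $K\setminus B\subseteq\overline{H_B}$, whence $H_B\cap K=B$. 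Hence $K$ is shattered by $\scHm(G)$, and running the adversary above on $v_1,\dots,v_{\omega(G)}$ forces any learner to make $\omega(G)$ mistakes on a sequence realizable by some $H_B\in\scHm(G)$; as this holds for every learner, $M(\scHm(G))\ge\omega(G)$.

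The only nontrivial ingredient is that a maximum clique is shattered in $S_4$ graphs, which is exactly the lower-bound half of Proposition~\ref{pro:VC_mono}; the rest is the textbook reduction from shattering to mistake lower bounds, so I do not expect any real obstacle. The single point requiring care is that m-convexity of $B$ and of $K\setminus B$ does not by itself produce an m-halfspace — one must invoke the $S_4$ axiom to obtain a single set $H_B$ whose complement is also m-convex and which satisfies $H_B\cap K=B$ — but this is precisely where the hypothesis $G\in S_4$ enters, so nothing further is needed.
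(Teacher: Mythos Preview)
Your proof is correct and follows essentially the same approach as the paper: invoke the VC lower bound $\VC(\scHm(G))\ge\omega(G)$ for $S_4$ graphs from Proposition~\ref{pro:VC_mono}, then use the standard fact that the optimal mistake bound dominates the VC dimension. Your version is simply more detailed, spelling out both the adversary argument for $M(\scH)\ge\VC(\scH)$ and the clique-shattering witness, whereas the paper compresses this into a single sentence with a citation to Littlestone.
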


\section{Structural Lemmas regarding Monophonic Halfspaces}\label{sec:sparse}
This section provides some structural results on monophonic halfspaces used by our algorithms. Loosely speaking, the main message is that monophonic halfspaces can be expressed as unions of a small number of ``simpler'' sets, whose labels can be inferred from a small number of labeled vertices.
First, however, we give a basic result about the border of the cut.
\begin{restatable}{lem}{borderAreCliques}\label{lem:H_charact}
    $H \in \scHm(G)$ if and only if $G[\Border(H)]$ and $G[\Border(\overline H)]$ are cliques.
\end{restatable}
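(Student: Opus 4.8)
The plan is to prove both directions, with the forward direction being essentially a reference to the known fact of \citet{duchet1988convex} cited earlier (if $H$ is m-convex, then the vertices of $\overline H$ adjacent to $H$ form a clique), applied to both $H$ and $\overline H$. Concretely, if $H \in \scHm(G)$ then both $H$ and $\overline H$ are m-convex; the vertices of $\overline H$ adjacent to $H$ are exactly $\Border(\overline H)$, so by Duchet's observation $G[\Border(\overline H)]$ is a clique, and symmetrically (applying the observation to the m-convex set $\overline H$) the vertices of $H$ adjacent to $\overline H$, namely $\Border(H)$, induce a clique. For completeness I would also reprove Duchet's observation: if $a,b \in \Border(\overline H)$ are nonadjacent, pick neighbours $a',b' \in H$; then there is an induced path from $a$ to $b$ through $a', \dots, b'$ inside $H \cup \{a,b\}$ (take a shortest, hence induced, $a'$–$b'$ path in $G[H]$, which exists since $G[H]$ is connected, and prepend $a$, append $b$; minimality plus nonadjacency of $a,b$ makes it chordless after possibly shortcutting), contradicting m-convexity of $\overline H$ since this induced $a$–$b$ path leaves $\overline H$.

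For the converse, suppose $G[\Border(H)]$ and $G[\Border(\overline H)]$ are both cliques; I must show $H$ and $\overline H$ are m-convex. By symmetry it suffices to show $H$ is m-convex, i.e., $\MonInt(u,v) \subseteq H$ for all $u,v \in H$, and that $G[H]$ is connected. Take $u,v \in H$ and $z \in \MonInt(u,v)$; suppose toward a contradiction $z \notin H$, so $z \in \overline H$. Fix an induced path $P = (u = p_0, p_1, \dots, p_k = v)$ passing through $z$. Since $P$ starts and ends in $H$ but visits $z \in \overline H$, it crosses the cut at least twice: there are indices $i < j$ with $p_i, p_{i+1}$ an edge of $\Cut(H)$ with $p_i \in H$, and $p_j, p_{j+1}$ an edge of $\Cut(H)$ with $p_j \in \overline H$ — more carefully, walking along $P$ from $p_i$ (last vertex in $H$ before $z$) and from the first vertex in $H$ after $z$. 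Then $p_i \in \Border(H)$ and $p_{i+1} \in \Border(\overline H)$, and similarly for the crossing after $z$: let $p_\ell$ ($\ell > $ index of $z$) be the first vertex of $P$ back in $H$, so $p_{\ell-1} \in \Border(\overline H)$ and $p_\ell \in \Border(H)$. Now $p_{i+1}$ and $p_{\ell-1}$ both lie in $\Border(\overline H)$, hence are equal or adjacent; and $p_i, p_\ell$ both lie in $\Border(H)$, hence are equal or adjacent. This produces a chord of $P$ (e.g.\ the edge $p_i p_\ell$, or $p_{i+1}p_{\ell-1}$, unless these are trivial) — one has to check the indices are far enough apart that the putative chord is a genuine chord, which holds because the segment of $P$ strictly between these vertices contains $z \notin \{p_i, p_{i+1}, p_{\ell-1}, p_\ell\}$, so the indices differ by at least $2$. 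This contradicts $P$ being induced. Hence $z \in H$, so $H$ is m-convex; connectivity of $G[H]$ follows since $G$ is connected and a shortest $u$–$v$ path in $G$ between two vertices of $H$ is induced, hence contained in $H$ by what we just proved.

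The main obstacle I anticipate is the bookkeeping in the converse direction: correctly identifying the two "crossing points" of the induced path around the excursion into $\overline H$, and verifying that the edge between border vertices on the two sides is an actual chord (i.e.\ connects two non-consecutive vertices of $P$) rather than a degenerate case. The degenerate cases to rule out are: the excursion into $\overline H$ has length so short that $p_i = p_\ell$ or $p_{i+1} = p_{\ell-1}$ — but this is impossible because $z$ lies strictly between them and $z \notin H$, so $z \neq p_i, p_\ell$, and if $z$ itself is one of $p_{i+1}$ or $p_{\ell-1}$ then still the two are distinct vertices of $\overline H$ separated along $P$ by at least the vertex $z$ when $z$ is interior to the excursion, or by a careful case split when $z$ is the first or last $\overline H$-vertex. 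A clean way to handle this uniformly is: let $p_i$ be the last vertex of $P$ in $H$ that occurs before $z$ along $P$, and $p_\ell$ the first vertex of $P$ in $H$ that occurs after $z$; then $i+1 \le (\text{index of } z) \le \ell-1$ with $i+1 < \ell$ strictly (as $z\notin H$ forces $z\neq p_i,p_\ell$ and the index of $z$ is strictly between), $p_i,p_\ell\in\Border(H)$, $p_{i+1},p_{\ell-1}\in\Border(\overline H)$, and since $\ell \ge i+2$ the edge $p_i p_\ell$ (present because $\Border(H)$ is a clique, if $p_i \ne p_\ell$) or the edge $p_{i+1}p_{\ell-1}$ (if those are distinct) is a chord of $P$ — at least one of these two edges is a genuine chord because $\ell - i \ge 2$ and not both "$p_i=p_\ell$" and "$p_{i+1}=p_{\ell-1}$" can hold when $\ell-i\ge 2$. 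This is the delicate point and deserves the most care in the full write-up.
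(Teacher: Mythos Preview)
Your plan is correct and matches the paper's approach in both directions: for the forward direction the paper also builds an induced path through the other side by picking neighbours $y,y'$ at minimum distance in $G[\overline H]$ (your ``minimality plus shortcutting'' is exactly this device), and for the converse it also restricts the induced path to the minimal excursion into $\overline H$ and uses that its two endpoints lie in $\Border(H)$.

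One remark: the bookkeeping you flag as ``the delicate point'' is in fact not delicate. Once you set $p_i$ to be the last $H$-vertex before $z$ and $p_\ell$ the first $H$-vertex after $z$, you have $i < \ell$ on a simple path (so $p_i \ne p_\ell$) and $\ell - i \ge 2$ (since the index of $z$ lies strictly between). Hence the single edge $\{p_i,p_\ell\}$, guaranteed by the clique $\Border(H)$, is already a chord; you never need to look at $p_{i+1},p_{\ell-1}$ or worry about degenerate cases. The paper's write-up is correspondingly a two-line argument.
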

The statement follows by \citet[Proposition 4.2]{duchet1988convex} applied to both $H$ and $\overline{H}$; a self-contained proof can be found in Appendix~\ref{apx:fpt}. Figure~\ref{fig:halfspace} shows an example.
For an edge $uv \in E$ let $\triangle_{uv} = (N(u) \cap N(v)) \cup \{u,v\}$. 
Our first result shows that every halfspace $H\in\scHm$ can be expressed as the union of m-shadows over $H \cap \triangle_{uv}$, for every $uv \in \Cut(H)$.
\begin{restatable}{lem}{allShadows}\label{lem:all_shadows}
Let $H \in \scHm(G)$, let $uv \in \Cut(H)$, and let $\triangle_{uv}=(N(u)\cap N(v)) \cup \{u,v\}$. Then, $H$ consists precisely of those vertices whose label can be inferred from $u,v$, and a vertex in $H \cap \triangle_{uv}$. Formally:
\[
H = \bigcup_{z \in H \cap \triangle_{uv}} z/v \enspace.
\]
\end{restatable}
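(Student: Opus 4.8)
The plan is to prove the two inclusions $H \supseteq \bigcup_{z \in H \cap \triangle_{uv}} z/v$ and $H \subseteq \bigcup_{z \in H \cap \triangle_{uv}} z/v$ separately. The easy direction is $\supseteq$: fix any $z \in H \cap \triangle_{uv}$ and any $w \in z/v$, so that $z \in \MonInt(w,v)$, i.e., $z$ lies on some induced $w$--$v$ path $P$. Since $v \in \overline H$ and $z \in H$, the path $P$ must cross $\Cut(H)$; if $w \in \overline H$ then $P$ crosses $\Cut(H)$ at least twice, and I would argue this forces a chord (using Lemma~\ref{lem:H_charact}: the border vertices on the $\overline H$-side form a clique, and two distinct vertices of $P$ lying in $\Border(\overline H)$ would be adjacent, contradicting that $P$ is induced, unless they are consecutive on $P$ — a small case analysis settles this). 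Hence $w \in H$, giving $z/v \subseteq H$.

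For the hard direction $\subseteq$, take an arbitrary $w \in H$; I must exhibit some $z \in H \cap \triangle_{uv}$ with $w \in z/v$, i.e., an induced path from $w$ to $v$ passing through a vertex of $H \cap \triangle_{uv}$. The natural approach: take a shortest (hence induced) path $Q$ from $w$ to $v$ in $G$; it crosses $\Cut(H)$, say at an edge $ab$ with $a \in H$, $b \in \overline H$, and we may take $ab$ to be the \emph{last} such crossing so that the $b$--$v$ subpath of $Q$ lies entirely in $\overline H$. Now $a \in \Border(H)$ and, by Lemma~\ref{lem:H_charact}, $\Border(H)$ is a clique, so $a$ is adjacent to $u$ (since $u \in \Border(H)$); similarly $b$ is adjacent to $v$. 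I want to ``reroute'' $Q$ so that it goes through $\triangle_{uv}$. The key subtlety is that $a$ need not be in $\triangle_{uv}$ — it is adjacent to $u$ but maybe not to $v$. The idea is to use the edge $uv$: consider replacing the initial segment of $Q$ by $w \leadsto a \to u \to v$ or a suitable contraction, and then the vertex where this modified walk first touches $\overline H$ will be adjacent to both $u$ and $v$; after cleaning up chords (taking an induced subpath of the resulting walk, and using that the $H$-border and $\overline H$-border are cliques to control which shortcuts are available), one lands on an induced $w$--$v$ path whose crossing edge has its $H$-endpoint $z$ in $N(u)\cap N(v)$, i.e., $z \in \triangle_{uv}$, and clearly $z \in H$. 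Then $z \in \MonInt(w,v)$ along that path, so $w \in z/v$.

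The main obstacle is precisely this rerouting argument: showing that every $w \in H$ admits an induced $w$--$v$ path whose \emph{last} crossing edge is ``anchored'' at $u$ and $v$ simultaneously. One has to be careful that shortcutting through $u$ and $v$ does not create chords with the rest of $Q$, and that after passing to an induced subpath one does not lose the crossing vertex in $\triangle_{uv}$. I expect the clean way is to argue about the convex hull: since $z/v \subseteq \overline H$ whenever $zv' \in \Cut(H)$ for some setup, one can phrase the statement dually as ``$\overline H = \bigcup_{z} (v / z)$-type shadows'' and invoke that $\overline H$ is m-convex together with $v \in \overline H$ and the clique structure of the borders from Lemma~\ref{lem:H_charact}. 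Concretely, I would first prove the auxiliary claim that $\overline{H} = \bigcup_{b \in \overline H \cap \triangle_{uv}} b/u$ is equivalent to the stated identity (by complementation and swapping the roles of $u,v$), then prove whichever of the two forms is more convenient, leveraging that $\triangle_{uv} = \triangle_{vu}$ is symmetric and that both $\Border(H)$ and $\Border(\overline H)$ are cliques containing $u$ and $v$ respectively.
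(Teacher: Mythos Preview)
Your proposal is correct, though you take a different route from the paper and leave some details unfinished that actually work out cleanly.

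For the easy direction ($\supseteq$), your argument via Lemma~\ref{lem:H_charact} is sound but roundabout. The paper uses one line: if $w \in z/v$ and $w \in \overline H$, then $w,v \in \overline H$ and $z \in \MonInt(w,v)$, so m-convexity of $\overline H$ forces $z \in \overline H$, a contradiction.

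For the hard direction ($\subseteq$), the paper's approach is genuinely different from yours. The paper takes induced paths $\pi_u$ from $u$ to $x$ and $\pi_v$ from $v$ to $x$, combines them with the edge $uv$ into a cycle, and iteratively shortcuts chords until the cycle is induced; a case analysis on its length (length $\geq 5$ violates halfspace convexity, length $4$ forces $x \in u/v \cup v/u$) shows it must be a triangle, yielding the desired $z \in N(u)\cap N(v)$.

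Your single-path rerouting idea also works, and your worries about chord cleanup are unfounded once you note two facts. First, the shortest path $Q=(q_0,\ldots,q_\ell)$ from $w$ to $v$ crosses $\Cut(H)$ exactly once (otherwise $Q$ itself would witness a convexity violation), so the prefix $q_0,\ldots,q_k=a$ lies entirely in $H$. Second, in the case $a \notin \triangle_{uv}$ (so $a \in N(u)\setminus N(v)$ and $a \neq u$), no $q_j$ with $j \leq k$ is adjacent to $v$: the only $Q$-neighbor of $v$ is $q_{\ell-1}$, and here $\ell - 1 > k$; this also forces $u \notin V(Q)$ since $u \in N(v)$. Now let $i^*$ be the least index with $q_{i^*} \in N(u)$ (it exists as $q_k = a \in N(u)$); then $(q_0,\ldots,q_{i^*},u,v)$ is an induced $w$--$v$ path through $u \in H \cap \triangle_{uv}$, so $w \in u/v$. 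No further case analysis or dual reformulation is needed. In this sense your approach is arguably more direct than the paper's cycle argument, using only that $\Border(H)$ is a clique (Lemma~\ref{lem:H_charact}) rather than properties of long induced cycles.
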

Note that Lemma~\ref{lem:all_shadows} does not yet give a \emph{sparse} representation, in the sense that it does not bound the number of m-shadows $|H \cap \triangle_{uv}|$ used to obtain $H$. Combining Lemma~\ref{lem:all_shadows} with Lemma~\ref{lem:H_charact}, however, we can prove that $H$ can be expressed as the union of at most $\omega(G)$ such m-shadows.
\begin{lemma}\label{lem:atmost_clique_cutedges}
    Let $H \in \scHm(G)$. Then, there exists a subset $C\subseteq \Cut(H)$ with $|C|\leq \omega(G)$ such that \[H = \bigcup_{\{z,v\}\in C} z/v \enspace.\]
\end{lemma}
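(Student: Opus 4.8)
The statement to prove is Lemma~\ref{lem:atmost_clique_cutedges}: every m-halfspace $H$ can be written as a union of at most $\omega(G)$ m-shadows $z/v$ with $\{z,v\}\in\Cut(H)$. The plan is to combine Lemma~\ref{lem:all_shadows} with the clique structure of the border given by Lemma~\ref{lem:H_charact}. Fix any cut edge $uv\in\Cut(H)$ with $u\in H$, $v\in\overline H$, and recall from Lemma~\ref{lem:all_shadows} that
\[
H \;=\; \bigcup_{z\in H\cap\triangle_{uv}} z/v,
\qquad \triangle_{uv}=(N(u)\cap N(v))\cup\{u,v\}.
\]
So it suffices to show that $|H\cap\triangle_{uv}|\le\omega(G)$. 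But every $z\in H\cap\triangle_{uv}$ with $z\ne u$ is a common neighbour of $u$ and $v$; in particular $z\in H$ has the neighbour $v\in\overline H$, so $z\in\Border(H)$, and moreover $z$ is adjacent to $u$. Hence $(H\cap\triangle_{uv})\subseteq\Border(H)$, and since all these vertices are mutually adjacent — they all lie in $N(u)\cup\{u\}$ and $G[\Border(H)]$ is a clique by Lemma~\ref{lem:H_charact} — the set $H\cap\triangle_{uv}$ induces a clique in $G$. Therefore $|H\cap\triangle_{uv}|\le\omega(G)$.

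Taking $C=\{\,\{z,v\}:z\in H\cap\triangle_{uv}\,\}$ finishes the argument: each such $z$ is adjacent to $v\notin H$, so $\{z,v\}\in\Cut(H)$; we have $|C|\le|H\cap\triangle_{uv}|\le\omega(G)$; and by the display from Lemma~\ref{lem:all_shadows},
\[
H \;=\; \bigcup_{z\in H\cap\triangle_{uv}} z/v \;=\; \bigcup_{\{z,v\}\in C} z/v .
\]

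The only delicate point — and the step I would double-check most carefully — is verifying that $H\cap\triangle_{uv}$ genuinely induces a clique, i.e.\ that $u$ together with the common neighbours of $u$ and $v$ that lie in $H$ are pairwise adjacent. For $u$ itself this is immediate since every $z\in\triangle_{uv}\setminus\{u\}$ is by definition in $N(u)$. For two such common neighbours $z,z'$ it requires knowing they are adjacent: this is exactly where Lemma~\ref{lem:H_charact} enters, since $z,z'\in\Border(H)$ and $G[\Border(H)]$ is a clique. (One should also handle the trivial edge case where $\Cut(H)=\emptyset$, i.e.\ $H\in\{\emptyset,V\}$, separately — then the empty union suffices and $C=\emptyset$ works.) No serious obstacle remains; the lemma is essentially a one-line consequence of the two preceding structural lemmas once the clique observation is spelled out.
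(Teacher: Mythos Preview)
Your argument is correct and is exactly the approach the paper indicates: combine Lemma~\ref{lem:all_shadows} with Lemma~\ref{lem:H_charact} to observe that $H\cap\triangle_{uv}\subseteq\Border(H)$ is a clique, hence has size at most $\omega(G)$, and take $C=\{\{z,v\}:z\in H\cap\triangle_{uv}\}$. One tiny correction to your edge-case remark: for $H=V$ the empty union does \emph{not} work (it gives $\emptyset$, not $V$), so the lemma as stated tacitly concerns nontrivial halfspaces---the paper glosses over this too, and it is immaterial for the applications.
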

Lemma~\ref{lem:atmost_clique_cutedges} will be crucial to enable efficient online learning using Winnow.
We shall also note the following fact, used by our bounds on $|\scHm|$ and by our polynomial-time active learning algorithm.
\begin{restatable}{lem}{cliqueConnectedComps}\label{lem:clique_connectedcomps}
    Let $H\in\scHm$ and $uv$ be a cut-edge of $H$. Then, the number $N_{uv}$ of connected components in $\neg G[\triangle_{uv}]$ satisfies $N_{uv} \le \omega(G)$.
\end{restatable}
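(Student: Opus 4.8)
\textbf{Proof plan for Lemma~\ref{lem:clique_connectedcomps}.}

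The plan is to relate the connected components of $\neg G[\triangle_{uv}]$ to the structure of the border of $H$ and $\overline H$, and then invoke Lemma~\ref{lem:H_charact} to conclude. Recall $\triangle_{uv} = (N(u)\cap N(v))\cup\{u,v\}$, and that $uv\in\Cut(H)$ with, say, $u\in H$ and $v\in\overline H$. First I would partition $\triangle_{uv}$ into $\triangle_{uv}\cap H$ and $\triangle_{uv}\cap\overline H$. The key claim is that within $G$, the set $\triangle_{uv}\cap H$ is contained in $\Border(H)$ and $\triangle_{uv}\cap\overline H$ is contained in $\Border(\overline H)$: indeed, any $z\in\triangle_{uv}\cap H$ other than $u$ is a common neighbour of $u$ and $v$, hence adjacent to $v\in\overline H$, so $z\in\Border(H)$; and $u$ itself is adjacent to $v$, so $u\in\Border(H)$; symmetrically for the other side. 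By Lemma~\ref{lem:H_charact}, $G[\Border(H)]$ and $G[\Border(\overline H)]$ are cliques, so $G[\triangle_{uv}\cap H]$ and $G[\triangle_{uv}\cap\overline H]$ are both cliques (induced subgraphs of cliques).

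Now I pass to the complement. In $\neg G[\triangle_{uv}]$, the vertex sets $\triangle_{uv}\cap H$ and $\triangle_{uv}\cap\overline H$ each induce an \emph{independent} set (since they induce cliques in $G$). Therefore every edge of $\neg G[\triangle_{uv}]$ goes between $\triangle_{uv}\cap H$ and $\triangle_{uv}\cap\overline H$; that is, $\neg G[\triangle_{uv}]$ is bipartite with these two sides. A connected component of a bipartite graph that is not an isolated vertex must contain at least one vertex from each side. Let $N_{uv}$ be the number of components; among them, let $a$ be the number of components that are isolated vertices lying in $\triangle_{uv}\cap H$, and $b$ the number of isolated-vertex components in $\triangle_{uv}\cap\overline H$, and $c$ the number of remaining (non-singleton) components. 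Each of the $c$ non-singleton components uses up at least one vertex of each side, while each singleton uses one vertex of its own side. Hence $|\triangle_{uv}\cap H|\ge a+c$ and $|\triangle_{uv}\cap\overline H|\ge b+c$, so $N_{uv}=a+b+c\le\min\{|\triangle_{uv}\cap H|,|\triangle_{uv}\cap\overline H|\}+c$. This alone is not quite $\omega(G)$, so I refine: actually $a+c\le|\triangle_{uv}\cap H|$ and $b+c\le|\triangle_{uv}\cap\overline H|$ give $N_{uv}=a+b+c\le |\triangle_{uv}\cap H| + |\triangle_{uv}\cap \overline H| - c$, which is too weak. The right bound comes instead from observing that $N_{uv}\le\max\{|\triangle_{uv}\cap H|,|\triangle_{uv}\cap\overline H|\}$: indeed in a bipartite graph the number of connected components is at most the size of the larger side (each component, singleton or not, contains at least one vertex of the larger side, by a matching-type / Hall-type argument — more simply, contract each component to a point; this gives a forest whose vertex set surjects onto... ). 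Let me instead argue directly: the number of components of any graph on vertex set $X\sqcup Y$ is at most $|X|$ when every component meets $X$; this holds here if both sides are nonempty and no component is a singleton in $Y$ — but singletons in $Y$ violate it. I would therefore bound $N_{uv}$ by noting each side induces a clique in $G$, so $|\triangle_{uv}\cap H|\le\omega(G)$ and $|\triangle_{uv}\cap\overline H|\le\omega(G)$, and then argue $N_{uv}\le\omega(G)$ by a more careful component count.

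The cleanest route, and the one I would actually carry out: since $\neg G[\triangle_{uv}]$ is bipartite with sides $A=\triangle_{uv}\cap H$ and $B=\triangle_{uv}\cap\overline H$, and $\{u,v\}$ is an edge of $\neg G[\triangle_{uv}]$ (as $u\in A$, $v\in B$, and $uv\notin E(\neg G[\triangle_{uv}])$... wait, $uv\in E(G)$ so $uv\notin E(\neg G)$) — so in fact $u$ and $v$ are \emph{not} adjacent in the complement; I must be careful. Let me restart this count: each component $K$ of $\neg G[\triangle_{uv}]$ with $|K|\ge 2$ contains a vertex of $A$ and a vertex of $B$ (bipartiteness plus connectivity); pick one representative vertex of $A$ in each such $K$, and for each singleton component use the singleton itself. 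Singleton components in $A$ contribute to $A$; singleton components in $B$ are the only ones not yet charged to $A$. So $N_{uv}\le|A| + \#\{\text{singletons in }B\}\le |A|+|B|$. To get $\omega(G)$ I then need the stronger structural input that $|A|+|B|\le\omega(G)+1$ or that singletons in $B$ cannot occur, which is where I expect the real obstacle: I would show that an isolated vertex $z\in B$ of $\neg G[\triangle_{uv}]$ is adjacent in $G$ to all of $\triangle_{uv}$, in particular to $u$ and $v$, making $\triangle_{uv}\cap H\cup\{z\}$... hmm, $z\in\overline H$. The honest main obstacle is precisely pinning down why the two clique-bounds on $A$ and $B$ combine to $\omega(G)$ rather than $2\omega(G)$: I expect this uses that $A\cup B' $ or $ B\cup A'$ forms a clique in $G$ for suitable representatives, i.e. a vertex that is isolated in the complement restricted to one side is $G$-adjacent to the whole opposite side, so it can be added to that side's clique — giving $|A| + (\text{complement-isolated vertices of } B) \le \omega(G)$, and hence $N_{uv}\le\omega(G)$. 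That adjacency-to-the-whole-other-side claim, proved via Lemma~\ref{lem:H_charact} applied appropriately, is the crux; the rest is bookkeeping.
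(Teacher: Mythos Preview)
Your plan can be completed, but it is far more elaborate than necessary, and the paper's proof reveals why: the hypothesis $H\in\scHm$ is never used. The paper's entire argument is the one-line chain
\[
N_{uv}\;\le\;\alpha(\neg G[\triangle_{uv}])\;=\;\omega(G[\triangle_{uv}])\;\le\;\omega(G),
\]
where the first inequality holds for any graph whatsoever (one vertex per connected component is an independent set), and the equality is the definition of complement. None of the border/clique structure of halfspaces enters.

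Your detour through the bipartite decomposition $A=\triangle_{uv}\cap H$, $B=\triangle_{uv}\cap\overline H$ is not wrong, just superfluous. The ``crux'' you flag at the end is in fact easy and does not require re-invoking Lemma~\ref{lem:H_charact}: a vertex $z\in B$ that is isolated in $\neg G[\triangle_{uv}]$ is, by definition of the complement, $G$-adjacent to all of $\triangle_{uv}\setminus\{z\}$, hence to all of $A$; and any two such $z,z'$ are $G$-adjacent because you already showed $B$ is a clique. So $A\cup\{\text{isolated vertices of }B\}$ is a clique in $G$, and your inequality $N_{uv}\le|A|+\#\{\text{isolated vertices of }B\}\le\omega(G)$ follows. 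But observe what you have really done: you have selected one vertex per component (a vertex of $A$ when the component meets $A$, the unique vertex otherwise) and argued that this selection is a clique in $G$. That is exactly the general inequality $N_{uv}\le\alpha(\neg G[\triangle_{uv}])=\omega(G[\triangle_{uv}])$, derived through an unnecessary case split.

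Minor slip: you write ``isolated in the complement restricted to one side'', but every vertex of $B$ is isolated in the complement \emph{restricted to $B$}; what you need is isolated in the full graph $\neg G[\triangle_{uv}]$.
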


\section{A Polynomial-Time Consistency Checker}
\label{sec:polycheck}
This section describes our main technical contribution: \PolyChecker\ (Algorithm~\ref{alg:polychecker}), an algorithm to find an m-halfspace consistent with a given a labeled sample if one exists.
We show that such an algorithm runs in polynomial time; this proves Theorem~\ref{thm:polycheck}, which we restate for convenience.
\polyconsistency*

Let us illustrate the idea behind \PolyChecker. Let $(S,y)$ be the labeled sample, and suppose a halfspace $H$ consistent with $(S,y)$ exists. For every $uv \in E$, we guess that $uv$ in is the cut $\Cut(H)$, and we compute the set $\triangle_{uv} = (N(u) \cap N(v)) \cup \{u,v\}$. By exploiting the results of Section~\ref{sec:sparse}, we know that from $\triangle_{uv} \cap H$ one can infer the rest of $H$. Unfortunately, there seems to be no easy way of learning $\triangle_{uv} \cap H$, apart from exhaustively guessing all subsets of $\triangle_{uv}$. Thus, we need a more sophisticate strategy. We use the results of Section~\ref{sec:sparse}, as well as the closure properties of monophonic halfspaces, to construct a set of constraints on the space of all possible subsets $H \subseteq V$. The hope is that those constraints identify precisely the subsets $H \subseteq V$ that satisfy $H \in \scHm(G)$ and are consistent with $(S,y)$.

Formally, we will perform a reduction to 2-SAT.
For every orientation $uv$ of every edge of $G$, we construct in time $\poly(|V|)$ a 2-SAT formula $\satvar_{uv}$ whose literals are in the form ``$x \in H$'' or ``$x \notin H$'', where $x \in V$. Any $H \subseteq V$ yields an evaluation of $\satvar_{uv}$ in the obvious way. We can prove that the solutions to $\satvar_{uv}$ are precisely the m-halfspaces $H\in\scHm(G)$ such that $uv \in \Cut(H)$. Constructing the 2-SAT formula $\satvar_{uv}$ in polynomial time is the challenging part of the algorithm. Once we have computed $\satvar_{uv}$, it is straightforward to extend it to a 2-SAT formula $\satvar_{uv}(S,y)$ that is satisfied precisely by the solutions to $\satvar_{uv}$ that are consistent with $(S,y)$:
\begin{align} \label{eq:satvat_uv_consistent}
	\satvar_{uv}(S,y) = \satvar_{uv} \; \land \; \bigwedge_{x \in y^{-1}(0)} (x \notin H) \; \land \; \bigwedge_{x \in y^{-1}(1)}(x \in H) \enspace.
\end{align}

Algorithm~\ref{alg:polychecker} gives the pseudocode of \PolyChecker. The rest of this section describes the construction of $\satvar_{uv}$ and sketches the proof of Theorem~\ref{thm:polycheck}. All missing details can be found in Appendix~\ref{apx:polycheck}.
\begin{algorithm2e}
	\DontPrintSemicolon
	\caption{\PolyChecker}
	\label{alg:polychecker}
	\KwInput{$G=(V,E)$, and a labeled sample $(S,y)$}
	\KwOutput{$H \in \scHm(G)$ consistent with $(S,y)$, or failure if no such $H$ exists}
	\For{$uv \in E$}{
		compute the constraints given in Appendix~\ref{apx:polycheck}, and their conjunction $\satvar_{uv}$\;
		compute $\satvar_{uv}(S,y)$ as given by Equation~\eqref{eq:satvat_uv_consistent}\;%
		compute a solution $H$ to $\satvar_{uv}(S,y)$ and return it, if one exists
	}
	\textbf{return} ``$H$ does not exist''
\end{algorithm2e}

\vskip1em
\noindent\textbf{Constructing $\satvar_{uv}$.} To construct $\satvar_{uv}$, we start by computing the following subsets:
\begin{itemize}[itemsep=2pt,parsep=0pt]
	\item $\triangle^-_{uv}=N(u)\cap N(v)$. Note that $u,v \notin \triangle^-_{uv}$.
	\item $\square_{uv} = \{x \in V : x$ appears in a $4$-cycle having $uv$ as an edge$\}$. Note that $u,v \in \square_{uv}$.
	\item $A = \square_{uv} \setminus \triangle^-_{uv} = \{x \in \square_{uv} : d(x,u) \ne d(x,v) \}$.
	\item $A^u = \{x \in A : d(x,u)<d(x,v)\}$.
	\item $A^v = \{x \in A : d(x,u)>d(x,v)\} = A \setminus A^u$.
	\item $T = G \setminus E(G[\square_{uv} \cup \triangle^-_{uv}])$.
\end{itemize}
Note that all those sets are computable in time $\poly(|V|)$.  

\medskip
\noindent\textbf{A set of constraints.} As said, $\satvar_{uv}$ is a conjunction of several constraints (i.e., 2-SAT formulas). Each constraint is based on some of the sets defined above. We describe some of those constraints to convey the idea. The complete list is given in Appendix~\ref{apx:polycheck}, where the full proof of Theorem~\ref{thm:polycheck} checks that all of them are satisfied if and only if $H \in \scHm(G)$. 

The first constraint we present here, given by the formula $\satvar_{\conv}$, ensures that $A^u$ and $A^v$ are closed under monophonic convex hulls:
\begin{align}
	\satvar_{\conv} &= \bigwedge_{x \in \conv(A^u)} \!\!\!\! (x \in H) \quad \wedge \; \bigwedge_{x \in \conv(A^v)} \!\!\!\! (x \notin H) \enspace. \label{constr:conv}
\end{align}
Note that, as computing monophonic convex hulls in an $n$-vertex graph takes time $\poly(n)$~\citep{dourado2010complexity}, we can then compute $\satvar_{\conv}$ in time $\poly(n)$, too.

The next constraint, $\satvar_{\triangle^-_{uv}}$, ensures that $G[\triangle^-_{uv} \cap H] \subseteq G[\Border(H)]$ and $G[\triangle^-_{uv} \cap \overline H] \subseteq G[\Border(\overline H)]$ are both cliques, as required by Lemma~\ref{lem:H_charact}. In fact it ensures a slightly stronger condition: that the edge complement of $G[\triangle^-_{uv}]$ is a bipartite graph where adjacent vertices have opposite labels under $H$, as imposed by Lemma~\ref{lem:neg_bipartite}:
\begin{align}
    \satvar_{\triangle^-_{uv}} &= \bigwedge_{\{x,y\} \in E(\neg{G[\triangle^-_{uv}]})} (x \in H \vee y \in H) \wedge (x \notin H \vee y \notin H) \enspace.
\end{align}

Denote by $\CC(T)$ the set of connected components of $T$. It is not hard to prove that every $T_i \in \CC(T)$ is either the m-shadow $z/v$ or in the m-shadow $z/u$ for some $z \in \triangle_{uv}$. Therefore, either $T_i \subseteq H$ or $T_i \subseteq \overline H$. This is captured by the next constraint:
\begin{align}
	\satvar_{T} &= \bigwedge_{T_i \in \CC(T)} \; \bigwedge_{x,y \in V(T_i)} (x \in H \vee y \notin H) \wedge (x \notin H \vee y \in H) \enspace.
\end{align}

Finally, we compute constraints ensuring that there is no induced path starting from $A^u$ or $\conv(A_u)$ that violates monophonic convexity. To compute them, we first need to define what follows: for $X,Y \subseteq V$ and $k \in \{3,4\}$ denote by $\Pi_k(X,Z)$ the set of all induced paths on $k$ vertices that connect $X$ to $Y$. We report only one of such constraints them as an example:
\begin{align}
	\satvar_{u,3} &= \bigwedge_{\substack{\pi=(x,y,z)\\\pi \in \Pi_3(A^u,\triangle^-_{uv})}} (y \in H \vee z \notin H) \enspace.
\end{align}
It is easy to see that $\satvar_{\triangle^-_{uv}} $, $\satvar_{T}$, and $\satvar_{u,3}$ can be computed in time $\poly(n)$, too.

\section{Bounding the Number of Monophonic Halfspaces}\label{sec:fpt}
This section describes an algorithm to enumerate all m-halfspaces in FPT time and, through an additional charging argument, a near-tight bound on $|\scHm(G)|$.
The first intuition behind our algorithm comes from Lemma~\ref{lem:H_charact}, as follows. First, we guess each edge $uv$ of $G$ as a cut edge of some $H \in \scHm$. Starting from the endpoints $u,v$, we then try to find the vertices in $U=\Border(H)\cup\Border(\overline H)$. Since by Lemma~\ref{lem:H_charact} both $G[\Border(H)]$ and $G[\Border(\overline H)]$ are cliques, then $|U| \le 2\omega(G)$. At that point we exhaustively guess the partition of $U$ into $\Border(H)$ and $\Border(\overline H)$, and we check if that partition is indeed an m-halfspace. This would result in a total running time of $2^{2\omega(G)} \poly(n)$ per edge and, therefore, in total.

Unfortunately, this idea does not work straight away. The reason is that there is no efficient way to find all of $U$. However, by Lemma~\ref{lem:all_shadows}, we can show that this approach still works if instead of $U$ we look at the set $\triangle_{uv} = (N(u) \cap N(v)) \cup \{u,v\}$. Thus, the idea is to compute $\triangle_{uv}$, iterate over all possible labelings of it, and for each such labeling use Lemma~\ref{lem:all_shadows} to infer $H$ and check whether $H$ is a halfspace. This still does not yield the desired bound, as $\triangle_{uv}$ could be much larger than $\omega(G)$; in fact we could have $|\triangle_{uv}|=\Omega(n)$ even though $\omega(G)=\scO(1)$. Thus, we first check if $\triangle_{uv}$ is a subset of $U$, which can be done by just testing if $\neg G[\triangle_{uv}]$ is bipartite. If this is the case, then we already know $|\triangle_{uv}| \le 2\omega(G)$. By a careful analysis of $\neg G[\triangle_{uv}]$ we can then show that we need only to test certain labelings of $\triangle_{uv}$, whose number can be bounded by roughly $2^{\omega(G)}$.
The pseudocode of \AlgoList is given in Appendix~\ref{apx:fpt_list}, together with the proof of:
\begin{theorem}\label{thm:algo_listing}
$\AlgoList(G)$ lists $\scHm(G)$ in time $2^{\omega(G)} \poly(n)$.
\end{theorem}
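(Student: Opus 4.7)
The plan is to verify both correctness and the claimed runtime bound. The algorithm first outputs the two trivial halfspaces $\emptyset$ and $V$. Then, for every edge $uv \in E$, it computes $\triangle_{uv}=(N(u)\cap N(v))\cup\{u,v\}$, tests whether $\neg G[\triangle_{uv}]$ is bipartite (skipping $uv$ if not), and enumerates every proper 2-coloring $(C_1,C_2)$ of $\neg G[\triangle_{uv}]$. For each such 2-coloring it builds the candidate $H=\bigcup_{z\in C_1} z/v$, using the polynomial-time edge m-shadow computation from Lemma~\ref{lem:shadows_in_P}, and verifies $H \in \scHm(G)$ by invoking the polytime m-hull computation of \citet{dourado2010complexity} to check $\conv(H)=H$ and $\conv(\overline H)=\overline H$.

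For correctness, fix a non-trivial $H \in \scHm(G)$ and any $uv \in \Cut(H)$ with $u\in H$, $v\in\overline H$. By Lemma~\ref{lem:H_charact}, both $\Border(H)$ and $\Border(\overline H)$ are cliques in $G$. Every vertex in $\triangle_{uv}\cap H$ has neighbor $v\in\overline H$ and therefore lies in $\Border(H)$, and symmetrically $\triangle_{uv}\cap\overline H\subseteq\Border(\overline H)$. Hence $(\triangle_{uv}\cap H,\,\triangle_{uv}\cap\overline H)$ is a bipartition of $\triangle_{uv}$ into two cliques of $G$, equivalently a proper 2-coloring of $\neg G[\triangle_{uv}]$. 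In particular $\neg G[\triangle_{uv}]$ is bipartite and this specific coloring is enumerated; when it is, setting $C_1 = \triangle_{uv} \cap H$ and applying Lemma~\ref{lem:all_shadows} recovers $H$ exactly, which then passes verification.

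For the running time, the only potentially non-polynomial step is the 2-coloring loop, so I must bound its size. When $\neg G[\triangle_{uv}]$ is bipartite with $N_{uv}$ connected components, the number of proper 2-colorings is exactly $2^{N_{uv}}$. I claim $N_{uv}\le\omega(G)$: consider the 2-coloring that places all isolated components of $\neg G[\triangle_{uv}]$ in the same class $C_1$ and, for each non-isolated component, picks any of its two 2-colorings (each of which contributes at least one vertex to $C_1$); this yields $|C_1|\ge N_{uv}$. But $C_1$ is an independent set of $\neg G[\triangle_{uv}]$ and hence a clique of $G$, so $|C_1|\le\omega(G)$, proving $N_{uv}\le\omega(G)$. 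Consequently the loop contributes at most $2^{\omega(G)}$ iterations per edge, each requiring $\poly(n)$ work, for a total runtime of $\scO(m)\cdot 2^{\omega(G)}\cdot\poly(n) = 2^{\omega(G)}\poly(n)$.

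The main obstacle is precisely the bound $N_{uv}\le\omega(G)$ in the bipartite case; without it, the naive estimate $|\triangle_{uv}|\le 2\omega(G)$ would only yield $4^{\omega(G)}$ labelings per edge. This claim can be viewed as a mild strengthening of Lemma~\ref{lem:clique_connectedcomps}, where the hypothesis ``$uv$ is a cut-edge of some $H \in \scHm$'' is relaxed to the algorithmically verifiable condition ``$\neg G[\triangle_{uv}]$ is bipartite''.
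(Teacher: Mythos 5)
Your proposal is correct and follows essentially the same route as the paper's proof: bipartiteness of $\neg G[\triangle_{uv}]$ via the clique structure of the borders (Lemma~\ref{lem:H_charact}), reconstruction of $H$ from $\triangle_{uv}\cap H$ via Lemma~\ref{lem:all_shadows}, and the bound $N_{uv}\le\omega(G)$ on the number of components to control the $2^{N_{uv}}$ labelings per edge; your re-derivation of that bound under the weaker, algorithmically checkable hypothesis is a sound refinement, since the paper cites Lemma~\ref{lem:clique_connectedcomps} whose stated hypothesis (that $uv$ is a cut-edge) its own proof never actually uses. The only point you omit is the mechanism (the set $B$ and the check $\Border(X)\cap B=\emptyset$) ensuring each halfspace is printed exactly once, which the paper addresses but which does not affect the runtime bound.
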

The proof of Theorem~\ref{thm:algo_listing} immediately implies $|\scHm(G)| \le 2^{\omega(G)} 2m$.
However, starting from the analysis of \AlgoList, and using a charging argument that takes into account that an edge $uv$ can appear in the cut $\delta(H)$ of many m-halfspaces, we can prove the improved bound stated in Theorem~\ref{thm:halfspaces_count} (see Appendix~\ref{apx:halfspaces_count} for the complete proof).

\acks{MB and EE acknowledge the financial support from the FAIR (Future Artificial Intelligence Research) project, funded by the NextGenerationEU program within the PNRR-PE-AI scheme and the the EU Horizon CL4-2022-HUMAN-02 research and innovation action under grant agreement 101120237, project ELIAS (European Lighthouse of AI for Sustainability).
MT acknowledges support from a DOC fellowship of the Austrian academy of sciences (ÖAW) and partial support from FFG through the AI4SAR project (885324) in
the ASAP programme.
}

\bibliography{references}

\begin{thebibliography}{55}
\providecommand{\natexlab}[1]{#1}
\providecommand{\url}[1]{\texttt{#1}}
\expandafter\ifx\csname urlstyle\endcsname\relax
  \providecommand{\doi}[1]{doi: #1}\else
  \providecommand{\doi}{doi: \begingroup \urlstyle{rm}\Url}\fi

\bibitem[Afshani et~al.(2007)Afshani, Chiniforooshan, Dorrigiv, Farzan,
  Mirzazadeh, Simjour, and Zarrabi-Zadeh]{afshani2007complexity}
Peyman Afshani, Ehsan Chiniforooshan, Reza Dorrigiv, Arash Farzan, Mehdi
  Mirzazadeh, Narges Simjour, and Hamid Zarrabi-Zadeh.
\newblock On the complexity of finding an unknown cut via vertex queries.
\newblock In \emph{Computing and Combinatorics: 13th Annual International
  Conference}, 2007.

\bibitem[Angluin(1988)]{angluin1988queries}
Dana Angluin.
\newblock Queries and concept learning.
\newblock \emph{Machine learning}, 2:\penalty0 319--342, 1988.

\bibitem[Bandelt(1989)]{bandelt1989graphs}
Hans-J{\"u}rgen Bandelt.
\newblock Graphs with intrinsic {$s_3$} convexities.
\newblock \emph{Journal of graph theory}, 13\penalty0 (2):\penalty0 215--228,
  1989.

\bibitem[Barzdin(1972)]{barzdin1972prediction}
Ja~M. Barzdin.
\newblock On the prediction of general recursive functions.
\newblock In \emph{Soviet Mathematics Doklady}, volume~13, pages 1224--1228,
  1972.

\bibitem[Ben-David et~al.(2009)Ben-David, P{\'a}l, and
  Shalev-Shwartz]{ben2009agnostic}
Shai Ben-David, D{\'a}vid P{\'a}l, and Shai Shalev-Shwartz.
\newblock Agnostic online learning.
\newblock In \emph{COLT}, volume~3, page~1, 2009.

\bibitem[Blum(1996)]{blum1996online}
Avrim Blum.
\newblock On-line algorithms in machine learning.
\newblock In \emph{Online Algorithms, The State of the Art}, volume 1442 of
  \emph{Lecture Notes in Computer Science}, pages 306--325. Springer, 1996.

\bibitem[Blumer et~al.(1989)Blumer, Ehrenfeucht, Haussler, and
  Warmuth]{blumer1989learnability}
Anselm Blumer, Andrzej Ehrenfeucht, David Haussler, and Manfred~K. Warmuth.
\newblock Learnability and the {V}apnik-{C}hervonenkis dimension.
\newblock \emph{Journal of the ACM (JACM)}, 36\penalty0 (4):\penalty0 929--965,
  1989.

\bibitem[Brand et~al.(2023)Brand, Ganian, and Simonov]{brand2023parameterized}
Cornelius Brand, Robert Ganian, and Kirill Simonov.
\newblock A parameterized theory of {PAC} learning.
\newblock In \emph{AAAI}, 2023.

\bibitem[Bressan et~al.(2021)Bressan, Cesa-Bianchi, Lattanzi, and
  Paudice]{bressan2021exact}
Marco Bressan, Nicol{\`o} Cesa-Bianchi, Silvio Lattanzi, and Andrea Paudice.
\newblock Exact recovery of clusters in finite metric spaces using oracle
  queries.
\newblock In \emph{COLT}, 2021.

\bibitem[Cesa{-}Bianchi and Lugosi(2006)]{cesabianchi2006prediction}
Nicol{\`{o}} Cesa{-}Bianchi and G{\'{a}}bor Lugosi.
\newblock \emph{Prediction, learning, and games}.
\newblock Cambridge University Press, 2006.

\bibitem[Cesa-Bianchi et~al.(1997)Cesa-Bianchi, Freund, Haussler, Helmbold,
  Schapire, and Warmuth]{cesa1997use}
Nicol{\`o} Cesa-Bianchi, Yoav Freund, David Haussler, David~P. Helmbold,
  Robert~E. Schapire, and Manfred~K. Warmuth.
\newblock How to use expert advice.
\newblock \emph{Journal of the ACM}, 44\penalty0 (3):\penalty0 427--485, 1997.

\bibitem[Cesa-Bianchi et~al.(2010)Cesa-Bianchi, Gentile, Vitale, and
  Zappella]{cesa2010active}
Nicol{\`o} Cesa-Bianchi, Claudio Gentile, Fabio Vitale, and Giovanni Zappella.
\newblock Active learning on trees and graphs.
\newblock In \emph{COLT}, 2010.

\bibitem[Cesa-Bianchi et~al.(2013)Cesa-Bianchi, Gentile, Vitale, and
  Zappella]{cesa2013random}
Nicol{\`o} Cesa-Bianchi, Claudio Gentile, Fabio Vitale, and Giovanni Zappella.
\newblock Random spanning trees and the prediction of weighted graphs.
\newblock \emph{Journal of Machine Learning Research}, 14\penalty0
  (1):\penalty0 1251--1284, 2013.

\bibitem[Chalopin et~al.(2022{\natexlab{a}})Chalopin, Changat, Chepoi, and
  Jacob]{chalopin2022first}
J{\'e}r{\'e}mie Chalopin, Manoj Changat, Victor Chepoi, and Jeny Jacob.
\newblock First-order logic axiomatization of metric graph theory.
\newblock \emph{arXiv preprint arXiv:2203.01070}, 2022{\natexlab{a}}.

\bibitem[Chalopin et~al.(2022{\natexlab{b}})Chalopin, Chepoi, Mc~Inerney,
  Ratel, and Vax\`{e}s]{chalopin_et_al:LIPIcs.MFCS.2022.31}
J\'{e}r\'{e}mie Chalopin, Victor Chepoi, Fionn Mc~Inerney, S\'{e}bastien Ratel,
  and Yann Vax\`{e}s.
\newblock Sample compression schemes for balls in graphs.
\newblock In \emph{47th International Symposium on Mathematical Foundations of
  Computer Science (MFCS 2022)}, 2022{\natexlab{b}}.

\bibitem[Chalopin et~al.(2022{\natexlab{c}})Chalopin, Chepoi, Moran, and
  Warmuth]{chalopin2022unlabeled}
J{\'e}r{\'e}mie Chalopin, Victor Chepoi, Shay Moran, and Manfred~K Warmuth.
\newblock Unlabeled sample compression schemes and corner peelings for ample
  and maximum classes.
\newblock \emph{Journal of Computer and System Sciences}, 127:\penalty0 1--28,
  2022{\natexlab{c}}.

\bibitem[Changat et~al.(2005)Changat, Mulder, and
  Sierksma]{changat2005convexities}
Manoj Changat, Henry~Martyn Mulder, and Gerard Sierksma.
\newblock Convexities related to path properties on graphs.
\newblock \emph{Discrete Mathematics}, 290\penalty0 (2-3):\penalty0 117--131,
  2005.

\bibitem[Chepoi(1986)]{chepoi1986some}
VD~Chepoi.
\newblock Some properties of domain finite convexity structures (in russian),
  res.
\newblock \emph{Algebra, Geometry and Appl.(Moldova State University)}, pages
  142--148, 1986.

\bibitem[Chepoi(1994)]{chepoi1994separation}
Victor Chepoi.
\newblock Separation of two convex sets in convexity structures.
\newblock \emph{Journal of Geometry}, 50:\penalty0 30--51, 1994.

\bibitem[Chepoi(2024)]{chepoi2024separation}
Victor Chepoi.
\newblock Separation axiom {$S_3$} for geodesic convexity in graphs.
\newblock \emph{arXiv preprint arXiv:2405.07512}, 2024.

\bibitem[Chepoi et~al.(2007)Chepoi, Estellon, and Vaxes]{chepoi2007covering}
Victor Chepoi, Bertrand Estellon, and Yann Vaxes.
\newblock Covering planar graphs with a fixed number of balls.
\newblock \emph{Discrete \& Computational Geometry}, 37:\penalty0 237--244,
  2007.

\bibitem[Chepoi et~al.(2021)Chepoi, Knauer, and Philibert]{chepoi2021labeled}
Victor Chepoi, Kolja Knauer, and Manon Philibert.
\newblock Labeled sample compression schemes for complexes of oriented
  matroids.
\newblock \emph{arXiv preprint arXiv:2110.15168}, 2021.

\bibitem[Dasarathy et~al.(2015)Dasarathy, Nowak, and Zhu]{dasarathy2015s2}
Gautam Dasarathy, Robert Nowak, and Xiaojin Zhu.
\newblock S2: An efficient graph based active learning algorithm with
  application to nonparametric classification.
\newblock In \emph{COLT}, 2015.

\bibitem[Dourado et~al.(2010)Dourado, Protti, and
  Szwarcfiter]{dourado2010complexity}
Mitre~C. Dourado, F{\'a}bio Protti, and Jayme~L. Szwarcfiter.
\newblock Complexity results related to monophonic convexity.
\newblock \emph{Discrete Applied Mathematics}, 158\penalty0 (12):\penalty0
  1268--1274, 2010.

\bibitem[Duchet(1988)]{duchet1988convex}
Pierre Duchet.
\newblock Convex sets in graphs, {II}. {M}inimal path convexity.
\newblock \emph{Journal of Combinatorial Theory, Series B}, 44\penalty0
  (3):\penalty0 307--316, 1988.

\bibitem[Duchet and Meyniel(1983)]{duchet1983ensemble}
Pierre Duchet and Henry Meyniel.
\newblock Ensemble convexes dans les graphes i: Th{\'e}or{\`e}mes de helly et
  de radon pour graphes et surfaces.
\newblock \emph{European Journal of Combinatorics}, 4\penalty0 (2):\penalty0
  127--132, 1983.

\bibitem[Elaroussi et~al.(2024)Elaroussi, Nourine, and
  Vilmin]{elaroussi2024half}
Mohammed Elaroussi, Lhouari Nourine, and Simon Vilmin.
\newblock Half-space separation in monophonic convexity.
\newblock \emph{arXiv preprint arXiv:2404.17564}, 2024.

\bibitem[Farber and Jamison(1986)]{farber1986convexity}
Martin Farber and Robert~E. Jamison.
\newblock Convexity in graphs and hypergraphs.
\newblock \emph{SIAM Journal on Algebraic Discrete Methods}, 7\penalty0
  (3):\penalty0 433--444, 1986.

\bibitem[Gentile et~al.(2013)Gentile, Herbster, and
  Pasteris]{gentile2013online}
Claudio Gentile, Mark Herbster, and Stephen Pasteris.
\newblock Online similarity prediction of networked data from known and unknown
  graphs.
\newblock In \emph{COLT}, 2013.

\bibitem[Glantz and Meyerhenke(2017)]{glantz2017finding}
Roland Glantz and Henning Meyerhenke.
\newblock On finding convex cuts in general, bipartite and plane graphs.
\newblock \emph{Theoretical Computer Science}, 695:\penalty0 54--73, 2017.

\bibitem[Gonz{\'a}lez et~al.(2020)Gonz{\'a}lez, Grippo, Safe, and dos
  Santos]{gonzalez2020covering}
Luc{\'\i}a~M. Gonz{\'a}lez, Luciano~N. Grippo, Mart{\'\i}n~D. Safe, and
  Vinicius~F. dos Santos.
\newblock Covering graphs with convex sets and partitioning graphs into convex
  sets.
\newblock \emph{Information Processing Letters}, 158, 2020.

\bibitem[Guillory and Bilmes(2009)]{guillory2009label}
Andrew Guillory and Jeff~A. Bilmes.
\newblock Label selection on graphs.
\newblock In \emph{NIPS}, 2009.

\bibitem[Hanneke(2006)]{hanneke2006analysis}
Steve Hanneke.
\newblock An analysis of graph cut size for transductive learning.
\newblock In \emph{Proceedings of the 23rd International Conference on Machine
  Learning}, pages 393--399, 2006.

\bibitem[Heged{\H{u}}s(1995)]{hegedHus1995generalized}
Tibor Heged{\H{u}}s.
\newblock Generalized teaching dimensions and the query complexity of learning.
\newblock In \emph{COLT}, 1995.

\bibitem[Herbster et~al.(2005)Herbster, Pontil, and Wainer]{herbster2005online}
Mark Herbster, Massimiliano Pontil, and Lisa Wainer.
\newblock Online learning over graphs.
\newblock In \emph{ICML}, 2005.

\bibitem[Herbster et~al.(2015)Herbster, Pasteris, and
  Ghosh]{herbster2015online}
Mark Herbster, Stephen Pasteris, and Shaona Ghosh.
\newblock Online prediction at the limit of zero temperature.
\newblock \emph{NIPS}, 2015.

\bibitem[Jamison-Waldner(1981)]{jamison1981convexity}
Robert~E. Jamison-Waldner.
\newblock \emph{Convexity and block graphs}.
\newblock Clemson University. Department of Mathematical Sciences, 1981.

\bibitem[Kleinberg(2004)]{kleinberg2004detecting}
Jon Kleinberg.
\newblock Detecting a network failure.
\newblock \emph{Internet Mathematics}, 1\penalty0 (1):\penalty0 37--55, 2004.

\bibitem[Le and Wulff-Nilsen(2024)]{le2023vc}
Hung Le and Christian Wulff-Nilsen.
\newblock {VC} set systems in minor-free (di) graphs and applications.
\newblock In \emph{Proceedings of the 2024 Annual ACM-SIAM Symposium on
  Discrete Algorithms (SODA)}, pages 5332--5360, 2024.

\bibitem[Li et~al.(2013)Li, You, Chen, Zhang, Zhang, Li, Huang, Kong, and
  Cai]{li2013identification}
Bi-Qing Li, Jin You, Lei Chen, Jian Zhang, Ning Zhang, Hai-Peng Li, Tao Huang,
  Xiang-Yin Kong, and Yu-Dong Cai.
\newblock Identification of lung-cancer-related genes with the shortest path
  approach in a protein-protein interaction network.
\newblock \emph{BioMed research international}, 2013\penalty0 (1):\penalty0
  267375, 2013.

\bibitem[Littlestone(1988)]{littlestone1988learning}
Nick Littlestone.
\newblock Learning quickly when irrelevant attributes abound: A new
  linear-threshold algorithm.
\newblock \emph{Machine learning}, 2:\penalty0 285--318, 1988.

\bibitem[Malvestuto et~al.(2012)Malvestuto, Mezzini, and
  Moscarini]{malvestuto2012characteristic}
Francesco~M Malvestuto, Mauro Mezzini, and Marina Moscarini.
\newblock Characteristic properties and recognition of graphs in which geodesic
  and monophonic convexities are equivalent.
\newblock \emph{Discrete Mathematics, Algorithms and Applications}, 4\penalty0
  (04):\penalty0 1250063, 2012.

\bibitem[Moran and Yehudayoff(2020)]{moran2020weak}
Shay Moran and Amir Yehudayoff.
\newblock On weak $\epsilon$-nets and the {R}adon number.
\newblock \emph{Discrete \& Computational Geometry}, 64\penalty0 (4):\penalty0
  1125--1140, 2020.

\bibitem[Pelayo(2013)]{pelayo2013geodesic}
Ignacio~M. Pelayo.
\newblock \emph{Geodesic convexity in graphs}.
\newblock Springer Briefs in Mathematics. Springer, 2013.
\newblock \doi{10.1007/978-1-4614-8699-2}.

\bibitem[Pelckmans et~al.(2007)Pelckmans, Shawe-Taylor, Suykens, and
  De~Moor]{pelckmans2007margin}
Kristiaan Pelckmans, John Shawe-Taylor, Johan~AK Suykens, and Bart De~Moor.
\newblock Margin based transductive graph cuts using linear programming.
\newblock In \emph{Artificial Intelligence and Statistics}, pages 363--370,
  2007.

\bibitem[Seiffarth et~al.(2023)Seiffarth, Horv{\'a}th, and
  Wrobel]{seiffarth2023maximal}
Florian Seiffarth, Tam{\'a}s Horv{\'a}th, and Stefan Wrobel.
\newblock Maximal closed set and half-space separations in finite closure
  systems.
\newblock \emph{Theoretical Computer Science}, 973:\penalty0 114105, 2023.

\bibitem[Shalev-Shwartz and Ben-David(2014)]{shalev2014understanding}
Shai Shalev-Shwartz and Shai Ben-David.
\newblock \emph{Understanding machine learning: From theory to algorithms}.
\newblock Cambridge university press, 2014.

\bibitem[{\v{S}}ubelj et~al.(2019){\v{S}}ubelj, Fiala, Ciglari{\v{c}}, and
  Kronegger]{vsubelj2019convexity}
Lovro {\v{S}}ubelj, Dalibor Fiala, Tadej Ciglari{\v{c}}, and Luka Kronegger.
\newblock Convexity in scientific collaboration networks.
\newblock \emph{Journal of Informetrics}, 13\penalty0 (1):\penalty0 10--31,
  2019.

\bibitem[Talagrand(1994)]{talagrand1994sharper}
Michel Talagrand.
\newblock Sharper bounds for gaussian and empirical processes.
\newblock \emph{The Annals of Probability}, pages 28--76, 1994.

\bibitem[Thiessen and G{\"a}rtner(2021)]{thiessen2021active}
Maximilian Thiessen and Thomas G{\"a}rtner.
\newblock Active learning of convex halfspaces on graphs.
\newblock \emph{NeurIPS}, 2021.

\bibitem[Thiessen and G{\"a}rtner(2022)]{thiessen2022online}
Maximilian Thiessen and Thomas G{\"a}rtner.
\newblock Online learning of convex sets on graphs.
\newblock In \emph{ECMLPKDD}, 2022.

\bibitem[Valiant(1984)]{valiant1984theory}
Leslie~G. Valiant.
\newblock A theory of the learnable.
\newblock \emph{Communications of the ACM}, 27\penalty0 (11):\penalty0
  1134--1142, 1984.

\bibitem[van~de Vel(1993)]{van1993theory}
Marcel L.~J. van~de Vel.
\newblock \emph{Theory of convex structures}, volume~50 of \emph{North-Holland
  Mathematical Library}.
\newblock North Holland, 1993.

\bibitem[Vapnik and Chervonenkis(1974)]{vapnik1974theory}
Vladimir Vapnik and Alexey~Ya. Chervonenkis.
\newblock \emph{Theory of pattern recognition}.
\newblock Nauka, Moscow, 1974.

\bibitem[Zhou et~al.(2002)Zhou, Kao, and Wong]{zhou2002transitive}
Xianghong Zhou, Ming-Chih~J Kao, and Wing~Hung Wong.
\newblock Transitive functional annotation by shortest-path analysis of gene
  expression data.
\newblock \emph{Proceedings of the National Academy of Sciences}, 99\penalty0
  (20):\penalty0 12783--12788, 2002.

\end{thebibliography}

\clearpage
\appendix
\section{Proofs of Auxiliary Lemmas}%
\label{apx:fpt}
In this section, we provide the proofs of some structural properties that are stated in the main body, and are crucial in proving the main results of this work.
We restate each lemma for convenience.

\allShadows*
{\renewcommand{\proofname}{Proof of Lemma~\ref{lem:all_shadows}.}
\begin{proof}
	We first show that $\bigcup_{z \in H \cap \triangle_{uv}} z/v\subseteq H$. This follows from the fact that for $z\in H$ and $v\in \overline H$ it holds $z/v\subseteq H$, as $H$ is a monophonic halfspace. 
	
	For the other direction, we have to show that for all $x\in H$ there exists a $z\in H\cap \triangle_{uv}$ such that $x\in z/v$. If $x\in u/v$ we are done. Also, $x\notin v/u$, as $v/u\subseteq \overline H$. Hence, we can assume that  $x\notin u/v\cup v/u$. Take an induced path $\pi_u$ from $u$ to $x$ and an induced path $\pi_v$ from $v$ to $x$. Denote the closest vertex to $u$ where $\pi_u$ and $\pi_v$ meet as $z$ ($z=x$ is possible). Let $C_{uvz}$ denote the cycle given by the subpaths going from $u$ and $v$ to $z$ and the edge $\{u,v\}$. Assume that $C_{uvz}$ is not an induced cycle. Then there is a chord $ab$ in $C_{uvz}$ going from $\pi_u$ to $\pi_v$ (or vice versa) and we could modify $\pi_u$ (respectively $\pi_v$) to meet $\pi_v$ already in $b$. We can iterate this process until we are left with an induced cycle $C_{uvz}$. Assume that $C_{uvz}$ consists of at least 5 vertices. Then either $u,v\in V(C_{uvz})\subseteq H$ or $u,v\in V(C_{uvz})\subseteq \overline H$ a contradiction to $u\in H$ and $v\in \overline H$. Also, $C_{uvz}$ cannot consist of exactly 4 vertices as in this case $x\in u/v \cup v/u$. Thus, $C_{uvz}$ is an induced triangle and $z\in (N(u)\cap N(v))\setminus\{u,v\}$. By construction $z$ is along an induced path from $u$ to $x$ and from $v$ to $x$ and hence $x\in z/v$ as required. 
\end{proof}
}

\edgeShadowsPoly*
{\renewcommand{\proofname}{Proof of Lemma~\ref{lem:shadows_in_P}.}
\begin{proof}
	Let $L = N(v) \setminus \{z\}$ and $S = \{x \in V \mid L \text{ is not a } (z,x)\text{-separator}\}$. It is straightforward that $S$ can be computed in polynomial time, hence we only need to show that $z/v=S$.
	
	Suppose first that $x \in S$. By definition of $S$ there exists a path $\pi$ between $x$ and $z$ such that $V(\pi) \cap L = \emptyset$. Thus $v$ is not adjacent to $V(\pi)$, and this holds obviously for any subpath $\pi'$ of $\pi$. Take in particular an induced subpath $\pi'$ of $\pi$ between $x$ and $z$, and observe that extending it to $v$ yields an induced path between $x$ and $v$ that contains $z$. Hence $x \in z/v$.
	
	Suppose now that $x \in z/v$. By definition of $z/v$ there exists an induced path $\pi$ between $x$ and $v$ that contains $z$. Clearly $\pi$ does not contain any $y \in N(v) \setminus \{z,v\}$, and thus its prefix $\pi'$ between $x$ and $z$ does not contain any $y \in N(v) \setminus \{z\} = L$. It follows that $L$ is not a $(z,x)$-separator.
\end{proof}
}

\borderAreCliques*
{\renewcommand{\proofname}{Proof of Lemma~\ref{lem:H_charact}.}
 \begin{proof}
         Let $H \in \scHm(G)$. Suppose $\{x,x'\} \notin E$ for some distinct $x,x' \in \Border(H)$. Choose $y \in N(x) \setminus H$ and $y' \in N(x') \setminus H$ that have smallest distance in $G[\overline H]$. (Note that $N(x) \setminus H \subseteq \overline H$ and $N(y') \setminus H \subseteq \overline H$, hence $y,y'$ exist). Let $\pi$ be a shortest path between $y$ and $y'$ in $G[\overline H]$. Observe that the concatenation of $x,\pi,x'$ is an induced path between $x$ and $x'$ that intersects $\overline H$. This contradicts $H \in \scHm(G)$. We conclude that $\{x,x'\} \in E$ for all distinct $x,x' \in \Border(H)$, hence $G[\Border(H)]$ is a clique. The same holds with $\overline H$ in place of $H$.

Now let $H \subseteq V$ and suppose $G[\Border(H)]$ and $G[\Border(\overline H)]$ are cliques; we show that both $H$ and $\overline H$ are monophonically convex. Suppose for contradiction that an induced path between distinct $x,x' \in H$ intersects $\overline H$. Without loss of generality we may assume $x,x'$ are the endpoints of the path, and thus by definition in $\Border(H)$ by definition. This implies $\{x,x'\} \in E$, contradicting the fact that the path is induced. The same holds with $\overline H$ in place of $H$.
 \end{proof}
 }

\cliqueConnectedComps*
{\renewcommand{\proofname}{Proof of Lemma~\ref{lem:clique_connectedcomps}.}
\begin{proof}
 Observe that
$
	N_{uv} \le \alpha(\neg G[\triangle_{uv}]) = \omega(G[\triangle_{uv}]) \le \omega(G)
$.
\end{proof}
}

\section{Missing Details for Section~\ref{sec:polycheck}}
\label{apx:polycheck}

\subsection{Full list of constraints for $\satvar_{uv}$}
\begin{align}
	\satvar_{\triangle^-_{uv}} &= \bigwedge_{\{x,y\} \in E(\neg{G[\triangle^-_{uv}]})} (x \in H \vee y \in H) \wedge (x \notin H \vee y \notin H) \\[8pt]
	\satvar_{\conv} &= \bigwedge_{x \in \conv(A^u)} \!\!\!\! (x \in H) \quad \wedge \; \bigwedge_{x \in \conv(A^v)} \!\!\!\! (x \notin H)  \label{apx:constr:conv} \\[8pt]
\satvar_{T} &= \bigwedge_{T_i \in \CC(T)} \; \bigwedge_{x,y \in V(T_i)} (x \in H \vee y \notin H) \wedge (x \notin H \vee y \in H) \\[8pt]
    \satvar_{u,3} &= \bigwedge_{\substack{\pi=(x,y,z)\\\pi \in \Pi_3(A^u,\triangle^-_{uv})}} (y \in H \vee z \notin H)\;, \qquad
    \satvar_{v,3} = \bigwedge_{\substack{\pi=(x,y,z)\\\pi \in \Pi_3(A^v,\triangle^-_{uv})}} (y \notin H \vee z \in H) \label{constr:u3}\\[8pt]
	\satvar_{u,4} &= \bigwedge_{\substack{\pi=(x,w,y,z)\\\pi \in \Pi_4(A^u,\triangle^-_{uv})}} (y \in H\vee z\notin H)\;, \qquad
	\satvar_{v,4} = \bigwedge_{\substack{\pi=(x,w,y,z)\\\pi \in \Pi_4(A^v,\triangle^-_{uv})}} (y\notin H \vee z\in H) \\[8pt]
    \satvar_{A^u} &= \bigwedge_{\substack{\{x,y\} \in E\\
    x\in\conv(A^u)}} \bigwedge_{z \in \triangle^-_{uv} \setminus N(x)} \!\!\!(y \in H \vee z \notin H)\;, \quad
    \satvar_{A^v} = \bigwedge_{\substack{\{x,y\} \in E\\ x\in\conv(A^v)}} \bigwedge_{z \in \triangle^-_{uv} \setminus N(x)} \!\!\!(y \notin H \vee z \in H)  \label{constr:Au}
\end{align}

\subsection{Proof of Theorem~\ref{thm:polycheck}}
Before proving Theorem~\ref{thm:polycheck} we need another technical lemma.
\begin{lemma}\label{lem:sq_cup_tr}\label{lem:square_labels}\label{lem:ext_cc}
	Let $H \in \scHm(G)$ and $uv \in \Cut(H)$.  Then, (i) $A^u \subseteq H$ and $A^v \subseteq \overline H$, and (ii) $T_i \subseteq H$ or $T_i \subseteq \overline H$, for all $T_i \in \CC(T)$.
\end{lemma}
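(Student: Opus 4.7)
The plan is to handle (i) and (ii) separately; both arguments reduce to short combinatorial observations once we know that $u\in\Border(H)$ and $v\in\Border(\overline H)$ and that these borders are cliques (Lemma~\ref{lem:H_charact}).

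For (i), I first want to show that every $x\in A^u$ satisfies $x\in N(u)\setminus N(v)$. Since $x\in\square_{uv}$, there is a $4$-cycle through $uv$ passing through $x$, so $x$ is adjacent to $u$ or to $v$; since $x\notin\triangle^-_{uv}=N(u)\cap N(v)$, exactly one of $\{x,u\},\{x,v\}$ is an edge. The condition $d(x,u)<d(x,v)$ forces $d(x,u)=1$ and $d(x,v)\ge 2$, i.e.\ $x\in N(u)\setminus N(v)$. Now suppose for contradiction that $x\in\overline H$. Then $x,u,v$ is a path of length two whose chord $\{x,v\}$ is absent, so it is an induced path between two vertices of $\overline H$. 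Monophonic convexity of $\overline H$ forces $u\in\overline H$, contradicting $uv\in\Cut(H)$. Hence $A^u\subseteq H$, and $A^v\subseteq\overline H$ follows by the symmetric argument.

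For (ii), it suffices to show that no edge of $T$ is a cut edge of $H$; this ensures that within each $T_i\in\CC(T)$ the label under $H$ is constant. Let $\{x,y\}\in\Cut(H)$ with $x\in H$, $y\in\overline H$. The goal is to prove $\{x,y\}\in E(G[\square_{uv}\cup\triangle^-_{uv}])$, i.e., both endpoints lie in $\square_{uv}\cup\triangle^-_{uv}$. Because $x,u\in\Border(H)$ and $y,v\in\Border(\overline H)$, Lemma~\ref{lem:H_charact} gives $x=u$ or $\{x,u\}\in E$, and $y=v$ or $\{y,v\}\in E$. I would split into the four resulting cases: if $x=u$ and $y=v$ then $\{x,y\}=\{u,v\}\subseteq\square_{uv}$; if $x=u$ and $y\neq v$ then $y\in N(u)\cap N(v)=\triangle^-_{uv}$ while $u\in\square_{uv}$; the case $x\neq u$, $y=v$ is symmetric; and if $x\neq u$ and $y\neq v$ then $\{u,v\},\{v,y\},\{y,x\},\{x,u\}$ is a $4$-cycle through $uv$ showing $x,y\in\square_{uv}$.

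The main obstacle is not a delicate step but rather making sure the case analysis in (ii) is exhaustive and that, in the mixed cases, one genuinely lands in $\triangle^-_{uv}$ (where common neighbors of $u$ and $v$ live) as opposed to just $\square_{uv}$. Once Lemma~\ref{lem:H_charact} is in hand this is essentially forced: the clique structure on the borders is what converts an abstract cut edge into the concrete geometric configuration (common neighbor of $uv$ or vertex of a $4$-cycle with $uv$) that the definitions of $\triangle^-_{uv}$ and $\square_{uv}$ are tailored to capture.
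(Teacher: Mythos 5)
Your proof is correct and follows essentially the same route as the paper's: part (i) via the induced path $(x,u,v)$ with endpoints on the wrong side, and part (ii) by showing that every edge of $\Cut(H)$ has both endpoints in $\square_{uv}\cup\triangle^-_{uv}$, hence is deleted when forming $T$, so each component of $T$ is monochromatic. The only nit is that your intermediate claim $A^u\subseteq N(u)\setminus N(v)$ fails for $x=u$ (which does belong to $A^u$), but that case is vacuous since $u\in H$ by assumption.
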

\begin{proof}
	(i) Let $x \in A^u$ and suppose by contradiction $x \notin H$. In particular $x \ne u$, so $d(x,u) \ge 1$. Now $(x,u,v)$ is an induced path with endpoints in $H$ containing a vertex of $\overline{H}$, contradicting $H \in \scHm(G)$. A symmetric argument applies to $A^v$.
	
	(ii) First, let us prove that $\Gamma(H) \cup \Gamma(\overline H) \subseteq \square_{uv} \cup \triangle^-_{uv}$. Let $x \in \Border(H)$. If $x=u$ then $x \in \square_{uv}$, so we may assume $x \in \Border(H) \setminus \{u\}$, which by Lemma~\ref{lem:H_charact} implies $x \in N(u)$. Consider then any edge $xy \in \Cut(H)$. If $y=v$ then $u \in \triangle^-_{uv}$. If instead $y \ne v$, then $y \in N(v)$ by Lemma~\ref{lem:H_charact}; so $(u,x,y,v)$ is a $4$-cycle, and $x,y \in \square_{uv}$.%
	We conclude that $\Gamma(H) \cup \Gamma(\overline H) \subseteq \square_{uv} \cup \triangle^-_{uv}$. This implies that $T$ is an edge-subgraph of $R = G \setminus E(G[\Gamma(H) \cup \Gamma(\overline H)])$. Therefore, every connected component $T_i \in \CC(T)$ is entirely contained in some connected component $R_i \in \CC(R)$. Thus, it is sufficient to prove the claim for $R_i$. Suppose for contradiction that $R_i$ intersects both $H$ and $\overline H$. Since $R_i$ is connected there is an edge $\{x,y\} \in E(R_i)$ with $x \in H$ and $y \in \overline H$, so $\{x,y\} \in E(R_i) \cap \Cut(H)$. But $\Cut(H) \subseteq E(G[\Gamma(H) \cup \Gamma(\overline H)])$ and thus $E(R) \cap \Cut(H) = \emptyset$, a contradiction.
\end{proof}

\polyconsistency*
{\renewcommand{\proofname}{Proof of Theorem~\ref{thm:polycheck}.}
\begin{proof}
    First of all, note that $\satvar_{uv}$ is a 2-SAT instance, and that it can be computed in polynomial time (see above). The same is then obviously true for $\satvar_{uv}(S_-,S_+)$, too. It remains to prove that $H \in \scHm(G)$ consistent with $(S_-,S_+)$ exists if and only if $\satvar_{uv}(S_-,S_+)$ is satisfied for some $uv \in E$. We prove the two directions separately.
    
    \medskip
    \noindent \textbf{Forward direction.} Suppose $H \in \scHm(G)$ is consistent with $(S_-,S_+)$. We check~\eqref{apx:constr:conv}--\eqref{constr:Au} and show $H$ satisfies $\satvar_{uv}$ for every $uv \in \Cut(H)$. This implies that $H$ satisfies $\satvar_{uv}(S_-,S_+)$ for every $uv \in \Cut(H)$ and, thus, for some $uv \in E$.
    \begin{itemize}[itemsep=-2pt]
        \item $\satvar_{\conv}$. First, $A^u \subseteq H$ and $A^v \subseteq \overline H$ by Lemma~\ref{lem:square_labels}. Since $H$ is closed under $\conv$ this implies $\conv(A^u) \subseteq H$ and $\conv(A^v) \subseteq \overline H$, too.
        \item $\satvar_{\triangle^-_{uv}}$. Note that $\triangle^-_{uv} \subseteq \Border(H) \cup \Border(\overline H)$ and apply Lemma~\ref{lem:neg_bipartite}.
        \item $\satvar_T$. Apply Lemma~\ref{lem:ext_cc}.
        \item $\satvar_{u,3}$ and $\satvar_{v,3}$. Suppose by contradiction $y \notin H$ and $z \in H$. Since $(x,y,z)$ is induced, this violates the convexity of $H$. Thus, $\satvar_{u,3}$ is satisfied. A symmetric argument applies to $\satvar_{v,3}$.
        \item $\satvar_{u,4}$. The arguments are similar to those for $\satvar_{u,3}$ and $\satvar_{v,3}$.
        \item $\satvar_{A^u}$ and $\satvar_{A^v}$. Suppose the formula fails, so there exists $\{x,y\} \in E$ with $x \in \conv(A^u)$ and $y \notin H$, as well as $z \in \triangle^-_{uv} \setminus N(x)$ with $z \in H$. As $x \in H, y \notin H$, and $\{x,y\}\in E$, then $\{x,y\} \in \Cut(H)$. As $z \in H, v \notin H$, and $z \in \triangle^-_{uv}$, then $z  \in \Cut(H)$, too. But then by Lemma~\ref{lem:H_charact} we have $\{x,y\} \in E$, a contradiction. A symmetric argument applies to $\satvar_{A^v}$.
    \end{itemize}
    
    \medskip
    \noindent \textbf{Backward direction.}  Suppose $H \subseteq V$ satisfies $\satvar_{uv}(S_-,S_+)$ for some $uv \in E$. This implies that $H$ satisfies $\satvar_{uv}$, which in turn implies $uv \in \Cut(H)$ by the constraints of~\eqref{apx:constr:conv}. We now show that $H \in \scHm(G)$. Suppose indeed by contradiction $H \notin \scHm(G)$. Note how this implies $\conv(H) \ne H$ or $\conv(\overline H) \ne \overline{H}$. Without loss of generality we assume $\conv(H) \ne H$. Therefore there exists an induced path $P = G[\{x_1,\ldots,x_k\}]$ in $G$ such that $x_1,x_k \in H$ and $x_i \in \overline H$ for all $i=2,\ldots,k-1$. We will use the path $P$ to prove a contradiction.
    
    First, since $\satvar_T$ is satisfied, then $\{x_1,x_2\} \in E(G[\square_{uv} \cup \triangle^-_{uv}]) = E(G[A \cup \triangle^-_{uv}])$. By the same argument $\{x_{k-1},x_k\} \in E(G[A \cup \triangle^-_{uv}])$. Moreover, since $\satvar_{\conv}$ is satisfied then $\{x_1,x_k\} \cap A^v = \emptyset$ and $\{x_2,x_{k-1}\} \cap A^u = \emptyset$; hence, $x_1,x_k \in A^u \cup \triangle^-_{uv}$ and $x_2,x_{k-1} \in A^v \cup \triangle^-_{uv}$.
    
    Now, if $x_1,x_k \in A^u$, then $V(P) \subseteq H$ since $\satvar_{\conv}$ is satisfied; hence $|\{x_1,x_k\} \cap A^u| \le 1$. If instead $x_1,x_k \in \triangle^-_{uv}$, then since $\satvar_{\triangle^-_{uv}}$ is satisfied $\{x_1,x_k\} \notin E(\neg G[\triangle^-_{uv}])$, implying the absurd $\{x_1,x_k\} \in E(G)$; hence $|\{x_1,x_k\} \cap \triangle^-_{uv}| \le 1$.
    We conclude that precisely one of $x_1,x_k$ is in $A^u$ and the other one in $\triangle^-_{uv}$.
    Without loss of generality say $x_1 \in A^u$ and $x_k \in \triangle^-_{uv}$. Then $k \ge 5$, for otherwise $\pi=P$ would falsify $\satvar_{u,3}$ or $\satvar_{u,4}$. As $k \ge 5$ and $P$ is induced we deduce $x_2,x_{k-1}$ are distinct and $\{x_1,x_{k-1}\},\{x_2,x_{k-1}\} \notin E$.
    
    Next we show that $x_2 \in \triangle^-_{uv}$ and $x_{k-1} \in A^v$. Suppose by contradiction $x_2 \in A^v$. Letting $x,y=x_1,x_2$ and $z=x_k$, then, $\satvar_{A^u}$ fails. Thus $x_2 \in \triangle^-_{uv}$.
    Now suppose by contradiction $x_{k-1} \in \triangle^-_{uv}$. Then $\{x_2,x_{k-1}\} \in E(\neg G[\triangle^-_{uv}])$, and since $\satvar_{\triangle^-_{uv}}$ is satisfied then $x_2 \in H$ or $x_{k-1} \in H$, a contradiction. Thus $x_{k-1} \in A^v$.
    Now let $x,y=x_{k-1},x_k$ and $z=x_2$. Then $\satvar_{A^v}$ fails. We conclude that $P$ does not exist. Hence $H = \conv(H)$, and $H \in \scHm(G)$. %
\end{proof}
}

\subsection{Proof of Theorem~\ref{thm:polydelayversionspace}} \label{apx:polydelay}
\polydelayversionspace*
The pseudocode of \PolyDelay is given in Algorithm~\ref{alg:polydelaylist}. It is straightforward to see that the algorithm constructs a binary search tree whose leaves are precisely the elements of the version space $\VS((S,y),\scHm(G))$, which are the only ones that are output. Moreover, by Theorem~\ref{thm:polycheck} \PolyDelay spends time $\poly$ for every node in the search tree. The claim of Theorem~\ref{thm:polydelayversionspace} follows.
\begin{algorithm2e}
\DontPrintSemicolon
\caption{\PolyDelay}
\label{alg:polydelaylist}
\KwInput{$(S,y),G=(V,E)$}
\KwOutput{the version space $\VS((S,y),\scHm(G))$}
let $H^+=y^{-1}(1)$ and $H^-=y^{-1}(0)$\;
\uIf{$H^+\cup H^-=V$}{
  \textbf{print} $H^+$ and \textbf{return}\;
}
pick any $x \in V \setminus (H^+ \cup H^-)$\;
\lIf{ $\PolyChecker(G, H^+ \cup \{x\}, H^-)$}{
    $\PolyDelay(G,H^+ \cup \{x\},H^-)$
}
\lIf{ $\PolyChecker(G, H^+, H^- \cup \{x\})$}{
    $\PolyDelay(G,H^+,H^- \cup \{x\})$
}
\end{algorithm2e}

\section{\AlgoList, and a proof of Theorem~\ref{thm:algo_listing}} \label{apx:fpt_list}

\begin{algorithm2e}[h]
	\DontPrintSemicolon
	\caption{\AlgoList}
	\label{alg:fpt_enum}
	\KwInput{$G=(V,E)$}
	\KwOutput{$\scHm(G)$}
	let $B=\emptyset$\\
	\textbf{print} $\emptyset, V$\\
	\For{$\{u,v\}\in E$\label{line:fpt_for_uv}}{
		let $\triangle_{uv} = (N(u)\cap N(v)) \cup \{u,v\}$\\
		\If{$\neg{G[\triangle_{uv}]}$ is bipartite\label{line:fpt_check_bipartite}}{
			let $R \subseteq \triangle_{uv}$ contain one vertex $x_C$ from each connected component $C$ of $\neg{G[\triangle_{uv}]}$\\
			let $S = \emptyset$\\
			\For{each subset $U \subseteq R$\label{line:fpt_for_U}}{
				\For{each connected component $C$ of $\neg{G[\triangle_{uv}]}$\label{line:fpt_for_C}}{
					\leIf{$x_C \in U$}{$S = S \cup C(x_C)$}{$S = S \cup V(C) \setminus C(x_C)$}
				}
				let $X = \bigcup_{z \in S} z/v$\\
				\lIf{$X$ and $V \setminus X$ are m-convex and $\Border(X)\cap B = \emptyset$\label{line:fpt_if_mp}}{\textbf{print} $X$}
			}
			$B = B\cup \{\{u,v\}\}$
		}
	}
\end{algorithm2e}

Before proving the theorem we need one auxiliary lemma.

\begin{lemma}\label{lem:neg_bipartite}
Let $H \in \scHm(G)$ and let $F = \neg {G[\Border(H) \cup \Border(\overline H)]}$. Then, every connected component $\hat F$ of $F$ is bipartite with sides $V(\hat F) \cap H$ and $V(\hat F) \cap \overline H$.
\end{lemma}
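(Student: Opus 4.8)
The plan is to show that $F$ is in fact \emph{globally} bipartite, with a bipartition read off directly from $H$ and $\overline H$, so that the claim about each connected component $\hat F$ is just the restriction of this global bipartition. First I would invoke Lemma~\ref{lem:H_charact}: since $H \in \scHm(G)$, both $G[\Border(H)]$ and $G[\Border(\overline H)]$ are cliques in $G$. Passing to the complement on the vertex set $\Border(H) \cup \Border(\overline H)$, this says precisely that $\Border(H)$ and $\Border(\overline H)$ are each \emph{independent} sets in $F = \neg G[\Border(H) \cup \Border(\overline H)]$.

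Next I would record two elementary facts about the two candidate sides. On the one hand $\Border(H) \cap \Border(\overline H) = \emptyset$, because $\Border(H) \subseteq H$ and $\Border(\overline H) \subseteq \overline H$ by definition of the border, and $H \cap \overline H = \emptyset$; on the other hand $V(F) = \Border(H) \cup \Border(\overline H)$ by construction. Hence $\{\Border(H), \Border(\overline H)\}$ is a partition of $V(F)$ into two independent sets, i.e.\ $F$ is bipartite and every edge of $F$ joins a vertex of $\Border(H)$ to a vertex of $\Border(\overline H)$. Restricting to any connected component $\hat F$, the pair $V(\hat F) \cap \Border(H)$ and $V(\hat F) \cap \Border(\overline H)$ partitions $V(\hat F)$ into independent sets, so $\hat F$ is bipartite with exactly these sides (a component contained entirely in one side is the degenerate case of a single isolated vertex, with the other side empty). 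Finally, since $V(\hat F) \subseteq \Border(H) \cup \Border(\overline H)$ with $\Border(H) \subseteq H$, $\Border(\overline H) \subseteq \overline H$ and $H \cap \overline H = \emptyset$, we may rewrite $V(\hat F) \cap \Border(H) = V(\hat F) \cap H$ and $V(\hat F) \cap \Border(\overline H) = V(\hat F) \cap \overline H$, which is the stated conclusion.

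I do not expect a real obstacle here; the only points that need to be stated with some care are (i) that the two candidate sides genuinely partition $V(\hat F)$, so that ``both independent'' forces ``bipartite with exactly these sides'' rather than merely ``bipartite'', which is why the disjointness $\Border(H) \cap \Border(\overline H) = \emptyset$ and the covering $V(F) = \Border(H) \cup \Border(\overline H)$ should be spelled out, and (ii) the harmless degenerate case of a component lying entirely inside one border, where one side of the bipartition is empty. Everything else is a direct consequence of Lemma~\ref{lem:H_charact} together with complementation.
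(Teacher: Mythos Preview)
Your proposal is correct and follows essentially the same route as the paper: invoke Lemma~\ref{lem:H_charact} to see that $\Border(H)$ and $\Border(\overline H)$ are cliques in $G$, hence independent in $F$, and conclude that every edge of $F$ crosses between $H$ and $\overline H$. The paper's proof is considerably terser (it simply notes that an edge of $F$ is a non-edge of $G$, so by Lemma~\ref{lem:H_charact} its endpoints cannot both lie in $H$ nor both in $\overline H$), but the underlying argument is identical; your version just spells out the partition and the degenerate case more carefully.
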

\begin{proof}
	The first claims follows immediately from Lemma~\ref{lem:H_charact}. For the second claim observe that if $\{x,y\} \in E(F)$ then $\{x,y\} \notin E(G)$ and thus, again by Lemma~\ref{lem:H_charact}, $\{x,y\} \not\subseteq H$ and $\{x,y\} \not\subseteq \overline H$; therefore, precisely one of $x$ and $y$ is in $H$.
\end{proof}

\noindent\textbf{Proof of Theorem~\ref{thm:algo_listing}.}
For the correctness, the condition at line~\ref{line:fpt_if_mp} ensures \AlgoList\ lists only elements of $\scHm(G)$; thus we need just to prove that every $H \in \scHm(G)$ is listed exactly once. Let $\{u,v\}$ be the first edge of $\Cut(H)$ considered at line~\ref{line:fpt_for_uv} (recall that we assume $G$ to be connected, hence such an edge exists). Note that every $x \in N(u) \cap N(v)$ is incident with an edge in $\Cut(H)$ (the other endpoint being $u$ if $x \in \overline H$ and $v$ if $x \in H$), and the same holds obviously for $x \in \{u,v\}$. Therefore $\triangle_{uv} \subseteq \Border(H) \cup \Border(\overline H)$. Lemma~\ref{lem:neg_bipartite} then implies $\neg {G[\triangle_{uv}]}$ is bipartite, so the condition at line~\ref{line:fpt_check_bipartite} holds. Next, consider the loop at line~\ref{line:fpt_for_U}. Clearly, at some iteration $U = R \cap H$. At that point, again by Lemma~\ref{lem:neg_bipartite} the loop at line~\ref{line:fpt_for_C} ensures $S = \triangle_{uv} \cap H$. By Lemma~\ref{lem:all_shadows}, then, $X = \bigcup_{z \in S} z/v = H$. The condition at line~\ref{line:fpt_if_mp} is then satisfied, thus $H$ is printed; moreover in every subsequent iteration of the loop of line~\ref{line:fpt_for_uv} $\{u,v\} \in B$ and thus $H$ will not be printed.

For the running time, note that the loop at line~\ref{line:fpt_for_U} is executed only if $\neg G[\triangle_{uv}]$ is bipartite, in which case the loop performs $2^{N_{uv}}$ iterations where $N_{uv}$ is the number of connected components of $\neg G[\triangle_{uv}]$. By Lemma~\ref{lem:clique_connectedcomps} $N_{uv}\leq\omega(G)$.
Hence the loop at line~\ref{line:fpt_for_uv} makes at most $2^{\omega(G)}$ iterations. To conclude the proof note that every other step performed by \AlgoList\ (including computing $X$, see Lemma~\ref{lem:shadows_in_P}) takes time $\poly(n)$.

\subsection{Proof of Theorem~\ref{thm:halfspaces_count}}
\label{apx:halfspaces_count}
\halfspaceCount*
\begin{proof}
    For any $e \in E$, let $\scHm(e) = \{H \in \scHm(G) \mid e \in \Cut(H)\}$. Recall that each $x \in \Border(H)\cup\Border(\overline H)$ is incident with some edge in $\Cut(H)$ by definition, and therefore
	\begin{align} \label{eq:hcount_prop1}
		|\Cut(H)| \ge \max\{|\Border(H)|,|\Border(\overline H)|\} \ge \frac{|\Border(H)\cup\Border(\overline H)|}{2} \enspace.
	\end{align}
	Moreover, for any $e \in E$ let $\omega(e) = \omega(G[\triangle_{uv}])$ where $e=\{u,v\}$, see Algorithm~\ref{alg:fpt_enum} and the proof of Theorem~\ref{thm:algo_listing}. Note that, since $\omega(G[\triangle_{uv}]) \ge \frac{|\triangle_{uv}|}{2}$, then
	\begin{align} \label{eq:hcount_prop2}
		\frac{2^{\omega(e)}}{|\triangle_{uv}|} = \frac{2^{\omega(G[\triangle_{uv}])}}{|\triangle_{uv}|} \le \frac{2^{1+\omega(G[\triangle_{uv}])}}{\omega(G[\triangle_{uv}])} \le \frac{2^{1+\omega(G)}}{\omega(G)} \enspace,
	\end{align}
	where the last inequality follows by monotonicity of $2^x/x$ for $x \ge 2$ and since $\omega(G[\triangle_{uv}])\ge 2$ as $\{u,v\} \subseteq \triangle_{uv}$.
	We then obtain (not counting the two trivial halfspaces $\emptyset, V$):
	\begin{align}
		|\scHm(G)| -2 &= \sum_{H \in \scHm(G)} \sum_{e \in \Cut(H)} \frac{1}{|\Cut(H)|} \\
		&= \sum_{e \in E} \sum_{H \in \scHm(e)} \frac{1}{|\Cut(H)|} && \text{by rearranging terms} \\
		& \le \sum_{e \in E} \sum_{H \in \scHm(e)} \frac{2}{|\Border(H)\cup\Border(\overline H)|} && \text{by \eqref{eq:hcount_prop1}}\\
		&\le \sum_{e \in E} |\scHm(e)| \frac{2}{|\triangle_{e}|} && \text{as $\triangle_{e} \subseteq \Border(H)\cup\Border(\overline H)$}\\
		&\le \sum_{e \in E} 2^{\omega(e)} \frac{2}{|\triangle_{e}|} && \text{see proof of Theorem~\ref{thm:algo_listing}}\\
		& \le \sum_{e \in E} \frac{2^{2+\omega(G)}}{\omega(G)} && \text{by \eqref{eq:hcount_prop2}}\\
		&= 4 m \frac{2^{\omega(G)}}{\omega(G)} \enspace.
	\end{align}
	The proof is complete.
\end{proof}

\section{Supervised Learning}
We study the standard PAC (probably approximately correct) setting with a given finite instance space $V$ and a known hypothesis space $\scH\subseteq 2^V$. 
In our case $\scH=\scHm(G)$ is defined implicitly by the input graph $G$ given to the algorithm.
The standard approach to PAC learning is to return a hypothesis, a m-halfspace in our case, that is as close as possible to being consistent with the given labeled sample $(S,y)$.
In particular, we say that it is possible to $(\varepsilon, \delta)$-PAC learn $(V, \scHm)$ if we can provide a m-halfspace $H=H(S,y)$ that has error bounded by $\varepsilon$ with probability at least $1-\delta$, for any $\varepsilon,\delta\in(0,1)$, under some unknown joint distribution over vertices and their labels.
The aim of PAC learning is to guarantee this property with the smallest sample complexity possible.

While standard PAC learning formulations typically measure the running time as a function of the sample size (see, e.g., \citet{valiant1984theory,shalev2014understanding}), we measure running time as a function of $n$. Note that a dependence on $n$ is unavoidable: even just to check whether two vertices $u,v \in V$ are separated by some $H \in \scHm(G)$, one needs to distinguish between $G$ being a path or a cycle, which takes time $\Omega(n)$.

We start with a bound on the VC dimension of $\scHm(G)$, thus determining learnability in the supervised setting. We write $\VC(V,\scH)$, or simply $\VC(\scH)$, for the VC dimension of the hypothesis space $\scH\subseteq 2^V$.
\monovc*
\begin{proof}
We first provide a proof for the upper bound by showing that the chain of inequalities $\VC(V,\scHm(G)) \le r(G) \le \Tilde{\omega}(G)$ holds. The first inequality holds for halfspaces in arbitrary convexity spaces, as any set shattered by halfspaces must be Radon independent \citep{moran2020weak}. The second inequality is a classic result in monophonic graph convexity theory \citep{duchet1988convex}.
Now, we show that the lower bound holds too. Indeed, if $G \in S_4$ then every clique in $G$ is shattered by $\scHm(G)$, since (i) it is Radon independent, and (ii) any of its partitions can be extended to a halfspace \citep{thiessen2021active}. Hence, $\VC(V,\scHm(G))\geq\omega(G)$.
\end{proof}
We now move on to the study of the sample complexity for PAC learning $(V,\scHm)$ in both the realizable and the agnostic settings.

\paragraph{Realizable case.} 
Since we are in the realizable setting, we know there exists some m-halfspace $H^*\in\scHm(G)$ that is consistent with $(S,y)$. While it is known that computing m-convex $p$-partitions of a graph for $p\ge 3$ is NP-hard \citep{gonzalez2020covering}, the complexity of computing m-halfspaces ($p=2$) is not known. A first step towards closing this gap is to use our listing algorithm to obtain an FPT algorithm for this problem. We go beyond that and fully close this gap by providing, in Theorem~\ref{thm:polycheck}, a polynomial-time algorithm relying on a non-trivial reduction to the 2-SAT problem.

By standard PAC results \citep{blumer1989learnability} we obtain the following near-optimal sample complexity bound, save possibly for the $\log(1/\varepsilon)$ factor.
\realizablePACpoly*
\begin{proof}
    \citet{blumer1989learnability} give a sample complexity of $\scO\Bigl(\frac{\VC(V,\scHm)\log(1/\varepsilon)+\log(1/\delta)}{\varepsilon}\Bigr)$ and the stated bound follows from the fact that $\VC(V,\scHm)\leq \omgtil(G)$ (Proposition~\ref{pro:VC_mono}). The runtime follows  by Theorem~\ref{thm:polycheck}.
\end{proof}

\paragraph{Agnostic case.} 
In the agnostic case we are given a sample $(S,y)$ that is not necessarily realizable; that is, there may be no $H^* \in \scHm(G)$ that is consistent with $(S,y)$. The classic goal is to compute an ERM (empirical risk minimizer), i.e., some $H_{\mathrm{ERM}} \in \scHm$ that minimizes the empirical risk $L(H,(S,y)) = \frac1{|S|} \sum_{s \in S} \mathbbm{1}\{y(s) \ne \mathbbm{1}_{s\in H}\}$ over all $H\in\scHm$. As a further corollary of our results above we obtain:
\ermfpt*
\begin{proof}
    We list all m-halfspaces using \PolyDelay\ and return any of the listed m-halfspaces that minimizes the number of mistakes on $S$.
\end{proof}
Similarly to the realizable case, we get the following agnostic PAC bound.
\agnosticPACFPT*
\begin{proof}
It is well known that an ERM yields the optimal agnostic sample complexity of the order $\scO\Bigl(\frac{\VC(V,\scHm)+\log(1/\delta)}{\varepsilon^2}\Bigr)$ \citep{vapnik1974theory, talagrand1994sharper}.
    The sample complexity follows by $\VC(V,\scHm)\leq \omgtil(G)$ (Proposition~\ref{pro:VC_mono}). We can compute an ERM in time $|\scHm|\poly(n)\leq 2^{\omega(G)}\poly(n)$ by Corollary~\ref{cor:erm-fpt}.
\end{proof}

\section{Active Learning}\label{apx:active}
This section presents an efficient algorithm for the problem of actively learning monophonic halfspaces, defined as follows. The algorithm is given a graph $G=(V,E)$, and nature selects a concept $H\in\scHm(G)$. The algorithm can query any vertex $x \in V$ for its label $\mathbbm{1}_{x \in H}$. The goal of the algorithm is to output $H$ by making as few queries as possible.
This problem is a special case of realizable transductive active learning on a graph \citep{afshani2007complexity, guillory2009label,cesa2010active,dasarathy2015s2}, and can be seen as a variant of query learning \citep{angluin1988queries, hegedHus1995generalized} with an additional fixed graph given where vertices correspond to the instance space.
Now let $\scH\subseteq 2^V$ be a concept class. The \emph{query complexity} of $(V,\scH)$ is the maximum number of queries an optimal algorithm makes over $H \in \scH$. More precisely, for any algorithm $A$ and any $H \in \scH$, let $\qc(A,H)$ denote the number of queries $A$ make on $G$ when nature chooses $H$. The query complexity of $A$ on $\scH$ is $\qc(A,\scH)=\max_{H\in\scH} \qc(A,H)$. The query complexity of $\scH$ is $\qc(\scH)=\min_A \qc(A,\scH)$.

Relying on Lemma~\ref{lem:all_shadows} we get the main result in this section: a polynomial-time algorithm achieving a graph-dependent near-optimal query complexity.
\polyactive*
{\renewcommand{\proofname}{Proof of Theorem~\ref{thm:active_upper_bound}.}
\begin{proof}
We compute a minimum monophonic hull set $S$ of size $h(G)$ in polynomial time using the algorithm of \citet{dourado2010complexity}. We query all vertices of $S$. If they have all the same label, all vertices in the graph have the same label, by the definition of hull sets. Otherwise, we take two vertices with different labels and find a shortest path, which is induced, in particular, between them. As each label class corresponds to a halfspace, the path has exactly one cut edge $e=\{u,v\}$. We identify $e$ using $\scO(\log\diam_g(G))$ queries through binary search on the path. Using the labels of $u$ and $v$ we can determine the labels of $u/v$ (same as $u$) and $v/u$ (same as $v$). We compute these sets efficiently  using Lemma~\ref{lem:shadows_in_P}. We only have to query one vertex in each connected component of $\neg G[\Delta_{uv}]$, which are at most $\omega(G)$ by Lemma~\ref{lem:clique_connectedcomps}. By Lemma~\ref{lem:all_shadows} the labels of the remaining vertices can be determined through the m-shadows of vertices in $\Delta_{uv}$ and $u$ or $v$.
\end{proof}
}
Along the previously mentioned separation axioms from Definition~\ref{def:sep_axioms} we achieve increasingly tighter lower bounds on the query complexity, eventually matching our algorithmic upper bound from Theorem~\ref{thm:active_upper_bound} for all $S_4$ graphs, the strongest separability assumption. 
\activeLowerBounds*
{\renewcommand{\proofname}{Proof of Theorem~\ref{thm:lower_bounds}.}
\begin{proof} For $S_2$ graphs note that any edge on a fixed induced path $P$ can be a cut edge of a m-halfspace. Locating this cut edge takes $\Omega(\log|V(P)|)$ queries in the worst case. For $S_3$ graphs take any minimum hull set $S$ (of size $h(G)$). Having queried only a proper subset $S'\subsetneq S$, the algorithm cannot decide between the halfspace $V$ and a halfspace separating $\conv(S')$ and some $s\in S\setminus S'$. Note that by minimality of $S$, the vertex $s\notin \conv(S')$.
For $S_4$ graphs any clique can be shattered by halfspaces. Hence, we fix a clique of maximum size and force the learner to query it completely.
\end{proof}
}

\section{Online Learning} \label{apx:online}
In this section, we devise near-optimal efficient algorithms for the problem of online learning of monophonic halfspaces. The classical realizable online learning problem of \citet{littlestone1988learning} (while the agnostic/unrealizable setting has been studied by, e.g., \citet{ben2009agnostic}) can be modelled as an iterative game between our learner and the environment over a finite number $T$ of rounds. The instance space $V$ and a hypothesis space $\scH\subseteq 2^V$ is known and fixed. First, the environment chooses a hypothesis $h$ from a hypothesis space $\scH$. Then, in each round $t=1,\dots,T$:
\begin{enumerate}[itemsep=2pt,parsep=0pt,topsep=4pt]
    \item the environment chooses a point $v_t\in V$,
    \item the learner predicts the label $\hat{y}_t\in\{0,1\}$ of $v_t$,
    \item the environment reveals the true label $y_t=\mathbbm{1}_{v_t\in h}$,
    \item the learner made a mistake if $\hat{y}_t \neq y_t$.
\end{enumerate}
The goal of the learner is to minimize the total number of mistakes. More precisely let $A$ be an algorithm for this online learning problem. Then, let $M(A,h)$ for $h\in\scH$ denote the worst-case number of mistakes $A$ would make on any sequence labeled by $h$.
The mistake bound of $A$ on $\scH$ is thus defined as $M(A,\scH)=\max_{h\in\scH} M(A,h)$ and we are interested in the optimal mistake bound $M(\scH)=\min_A M(A,\scH)$, also known as the Littlestone dimension of $\scH$. 

The node classification (or node labeling) variant of this problem is well studied \citep{herbster2005online,cesa2013random,herbster2015online}. As in the active learning variant the main parameter is the (potentially effective resistance weighted) cutsize, linearly determining the mistake bounds.
In this section, we mainly study the variant of the above realizable online learning problem over our hypothesis class $\scHm$.

We prove the main result in this section (Theorem~\ref{thm:polyAndFPTonline}) below, discussing the strategy using Halving and Winnow individually.

\subsection{Halving over Monophonic Halfspaces}
The Halving algorithm is a classical algorithm for online binary classification over a finite hypothesis class, used by \citet{littlestone1988learning} for providing mistake bounds, which implements an ``iterative halving'' procedure.
The main idea behind Halving is to maintain a set of hypotheses from the original class $\scH$ that are consistent with the labels observed so far; these consistent hypotheses form the so called \emph{version space}.
The version space of Halving begins with the entire class $\scH$ and it gradually shrinks while guaranteeing that the target hypothesis fixed by the environment belongs to the version space (by consistency).
The prediction of Halving at each round consists of a majority vote for the current point over the entire version space.

Our realizable online learning problem perfectly fits the requirements for running Halving, especially considering the fact that we can construct the nontrivial hypothesis class $\scHm(G)$ by running \AlgoList\ (Algorithm~\ref{alg:fpt_enum}).
Therefore, we can derive the following guarantees as a consequence of well-known results on Halving (e.g., \citet[Theorem~1]{littlestone1988learning}).

\begin{proposition}\label{thm:halving}
    Halving applied to $(V,\scHm(G))$ achieves a mistake bound of $\scO\Bigl(\omega(G) + \log \frac{n}{w(G)}\Bigr)$ and can be run in time $2^{\omega(G)}\poly(n)$ per round.
\end{proposition}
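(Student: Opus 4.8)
The plan is to combine the classical analysis of the Halving algorithm with our two enumeration results. Recall that on any sequence realizable by a finite class $\scC$, Halving---which predicts by majority vote over the set of hypotheses still consistent with the observed labels, the version space---makes at most $\log_2|\scC|$ mistakes, since each mistake eliminates at least half of the version space \citep[Theorem~1]{littlestone1988learning}. Applying this with $\scC=\scHm(G)$, the mistake bound reduces to bounding $\log_2|\scHm(G)|$, and the running time reduces to being able to maintain and poll the version space efficiently.

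First I would plug in Theorem~\ref{thm:halfspaces_count}. Since $m\le\binom n2$ and $\omega(G)\ge 2$ (as $G$ is connected and we may assume $n\ge 2$, so $m\ge 1$ as well), Theorem~\ref{thm:halfspaces_count} gives
\[
\log_2 |\scHm(G)| \;\le\; \log_2\!\Bigl(\tfrac{4m\,2^{\omega(G)}}{\omega(G)}+2\Bigr) \;\le\; \omega(G) + \log_2\!\Bigl(\tfrac{m}{\omega(G)}\Bigr) + \scO(1).
\]
It then remains a short calculation to absorb the term $\log_2(m/\omega(G))\le 2\log_2 n-\log_2\omega(G)$ into $\scO(\omega(G)+\log(n/\omega(G)))$: writing $\log_2(m/\omega(G))\le \log_2(n/\omega(G))+\log_2 n$ and using $\log_2 n\le \omega(G)+\log_2(n/\omega(G))$, which is just $\log_2\omega(G)\le\omega(G)$, yields $\log_2|\scHm(G)|=\scO(\omega(G)+\log(n/\omega(G)))$, the claimed mistake bound.

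For the per-round running time I would materialize the class once: run \AlgoList\ (Algorithm~\ref{alg:fpt_enum}) at the start to produce the explicit list of all m-halfspaces in time $2^{\omega(G)}\poly(n)$ by Theorem~\ref{thm:algo_listing}; this list has length $|\scHm(G)|=2^{\omega(G)}\poly(n)$. Maintaining the version space is then pure bookkeeping---a consistency flag per listed halfspace. In round $t$ one scans the still-consistent halfspaces, counts how many contain $v_t$ (a $\poly(n)$ membership test per halfspace), predicts the majority label, and after seeing $y_t$ clears the flag of every halfspace disagreeing on $v_t$. Each round therefore costs $\scO(|\scHm(G)|)\cdot\poly(n)=2^{\omega(G)}\poly(n)$. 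Equivalently, one can recompute the version space each round via \PolyDelay\ of Theorem~\ref{thm:polydelayversionspace}, in total time $|\VS|\poly(n)\le 2^{\omega(G)}\poly(n)$.

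I do not expect a real obstacle here: everything rests on results already in hand (Littlestone's mistake bound, Theorem~\ref{thm:halfspaces_count}, Theorem~\ref{thm:algo_listing}), and the Halving recipe applies essentially verbatim once the class is explicitly materialized. The only point needing care is the elementary manipulation that turns $\log_2|\scHm(G)|$ into the $\log(n/\omega(G))$ form---specifically, verifying that the $2\log n$ contribution coming from $m\le n^2$ is subsumed by the $\scO(\omega(G)+\log(n/\omega(G)))$ target, which is exactly the inequality $\log_2\omega(G)\le\omega(G)$.
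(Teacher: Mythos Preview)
Your proposal is correct and matches the paper's own argument: invoke Littlestone's $\log_2|\scHm(G)|$ mistake bound for Halving, bound $|\scHm(G)|$ via Theorem~\ref{thm:halfspaces_count}, and obtain the per-round time by enumerating the version space using the listing algorithm. Your write-up is in fact more explicit than the paper's, which leaves the reduction of $\log_2|\scHm(G)|$ to the $\scO(\omega(G)+\log(n/\omega(G)))$ form implicit; your calculation filling this in is fine.
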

A straightforward implementation of Halving requires the enumeration of the version space in each round. By Theorem~\ref{thm:halfspaces_count}, we know that $|\scHm|\leq \frac{4m 2^{\omega(G)}}{\omega(G)}+2$ and that this inequality can be almost tight. Hence, if we run Halving by enumerating the version space we cannot hope for a better runtime.
Halving's mistake bound is given by the fact that it corresponds to $\scO(\log|\scHm|)$.

\subsection{Winnow over Monophonic Halfspaces} \label{apx:online_winnow}
The main downside of running the Halving algorithm is its running time.
In general we cannot hope to improve upon it unless we can manage to further restrict the initial size of the hypothesis class.
Nonetheless, if the target hypothesis can be represented as a sparse disjunction, the Winnow algorithm \citep{littlestone1988learning} can achieve a similar mistake bound as Halving without the need to enumerate the version space. A similar usage of Winnow for online node classification problems was discussed by \citet{gentile2013online}.
\begin{proposition}[\citet{littlestone1988learning}]
    Winnow achieves a mistake bound of $\scO(k\log d)$ in $\scO(\poly(kd))$ time to online learn monotone $k$-literal disjunctions on $d$ variables.
\end{proposition}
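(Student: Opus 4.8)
The plan is to instantiate and analyze the standard Winnow algorithm for monotone disjunctions, exactly as in \citet{littlestone1988learning}. The algorithm keeps a nonnegative weight vector $w\in\R^d$, initialized to the all-ones vector, together with a fixed threshold $\theta=d$, and on input $x_t\in\{0,1\}^d$ predicts $\hat y_t=\mathbbm{1}\{w\cdot x_t\geq\theta\}$. On a \emph{promotion} mistake ($\hat y_t=0$, $y_t=1$) it doubles $w_i$ for every $i$ with $x_{t,i}=1$; on a \emph{demotion} mistake ($\hat y_t=1$, $y_t=0$) it zeroes $w_i$ for every such $i$. Each round costs $\scO(d)$ time for the inner product and the update, so the running-time claim is immediate, and the work is entirely in the mistake bound, which I would obtain by bounding the number of promotions and the number of demotions separately.

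First I would bound the promotions. Let $R=\{i_1,\dots,i_k\}$ be the relevant variables of the target disjunction. A relevant weight is never zeroed: a demotion occurs only when $y_t=0$, that is, the disjunction is false on $x_t$, which forces $x_{t,i}=0$ for all $i\in R$, so no relevant coordinate is ever touched by a demotion. On a promotion, $y_t=1$ means some $i\in R$ has $x_{t,i}=1$, and that $w_i$ is doubled; moreover a promotion happens only when $w\cdot x_t<\theta$, and since weights are nonnegative, $w_i\leq w\cdot x_t<\theta$ whenever $x_{t,i}=1$. Hence each relevant weight stays below $2\theta$ for all time, so starting from $1$ it is doubled at most $1+\log_2\theta=\scO(\log d)$ times. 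As every promotion doubles at least one relevant weight, the number of promotions is at most $k\,(1+\log_2 d)=\scO(k\log d)$.

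Next I would bound the demotions by a potential argument on $\Phi=\sum_{i=1}^d w_i$. Initially $\Phi=d$, and always $\Phi\geq 0$. A promotion increases $\Phi$ by exactly $\sum_{i:\,x_{t,i}=1}w_i=w\cdot x_t<\theta=d$, while a demotion decreases $\Phi$ by exactly $w\cdot x_t\geq\theta=d$. If $p$ and $q$ are the total numbers of promotions and demotions, then $\Phi\leq d+pd-qd$; combined with $\Phi\geq 0$ this gives $q\leq p+1$, and therefore the total number of mistakes is $p+q\leq 2p+1=\scO(k\log d)$, as claimed.

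The only delicate point --- classical, but worth getting right --- is the joint choice of the threshold $\theta$ and the multiplicative factor: the potential argument needs $\theta$ at least the initial total weight $d$, so that each demotion removes at least as much mass as a promotion adds and hence $q=\scO(p)$ rather than unbounded, whereas the promotion count depends on $\theta$ only through $\log_2\theta$. Taking $\theta=d$ and factor $2$ makes both halves fit together. I would fix these values up front, verify the two inequalities above, and conclude.
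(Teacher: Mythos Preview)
Your argument is correct and is precisely the classical Winnow1 analysis from \citet{littlestone1988learning}. Note, however, that the paper does not give its own proof of this proposition: it is stated as a cited result, so there is nothing in the paper to compare your proof against; your write-up would simply serve as a self-contained supplement.
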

By leveraging on this property of the Winnow algorithm, we can show the following result.
\begin{theorem}
    Online learning of monophonic halfspaces is possible in time $\poly(n)$ with a mistake bound of $\scO(\omega(G)\log n)$.
\end{theorem}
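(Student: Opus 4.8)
The plan is to reduce online learning of m-halfspaces to online learning of a sparse monotone disjunction and then invoke the Winnow algorithm. The reduction rests on Lemma~\ref{lem:atmost_clique_cutedges}, which says that every $H \in \scHm(G)$ can be written as $H = \bigcup_{\{z,v\}\in C} z/v$ for some $C \subseteq \Cut(H)$ with $|C| \le \omega(G)$. The two other ingredients are Lemma~\ref{lem:shadows_in_P} (m-shadows over edges are computable in time $\poly(n)$) and the stated Winnow guarantee ($\scO(k\log d)$ mistakes and $\poly(kd)$ time for $k$-literal monotone disjunctions on $d$ variables).

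First I would fix, once and for all — it depends only on $G$ — a feature map $\phi\colon V \to \{0,1\}^D$ with one coordinate for each ordered pair $(z,v)$ with $\{z,v\}\in E$, so $D = 2m \le n^2$; the coordinate of $\phi(x)$ indexed by $(z,v)$ is $1$ iff $x \in z/v$. By Lemma~\ref{lem:shadows_in_P} every edge-shadow $z/v$ is computable in time $\poly(n)$, so each vector $\phi(v_t)$ is computable in time $\poly(n)$ on demand. I would also append one extra, always-$1$ coordinate to $\phi$ so that the trivial halfspace $V$ is representable; this leaves $D = \scO(n^2)$. The key observation is then that, under this encoding, the target $H$ is exactly a monotone disjunction of at most $k := \omega(G)+1$ of the $D$ Boolean variables: by Lemma~\ref{lem:atmost_clique_cutedges}, $x \in H \iff \bigvee_{(z,v)\in C}\phi(x)_{(z,v)}$ (with the dummy coordinate covering $H=V$ and the empty disjunction covering $H=\emptyset$), and $|C| \le \omega(G)$.

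Consequently, running Winnow on the stream $\phi(v_1),\phi(v_2),\dots$ is literally online learning of a $k$-literal monotone disjunction on $D$ variables, so it incurs $\scO(k\log D) = \scO(\omega(G)\log n)$ mistakes, using $\log D = \scO(\log n)$. Per round, the learner spends $\poly(n)$ time to assemble $\phi(v_t)$ via Lemma~\ref{lem:shadows_in_P} and $\poly(D) = \poly(n)$ time for Winnow's prediction and multiplicative weight update, for $\poly(n)$ time per round overall. This establishes both claims of the theorem.

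I do not expect a genuinely hard step here; the care needed is (i) to make explicit that the feature map is fixed before any label is revealed — it is, since it is a function of $G$ alone — and (ii) to check that \emph{every} $H \in \scHm(G)$, the trivial halfspaces $\emptyset$ and $V$ included, is captured as a monotone disjunction over the chosen variables, which the always-$1$ dummy coordinate and the empty-disjunction convention handle. Beyond that, the argument is a direct composition of Lemma~\ref{lem:atmost_clique_cutedges}, Lemma~\ref{lem:shadows_in_P}, and the standard Winnow analysis.
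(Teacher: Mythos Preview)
Your proposal is correct and follows essentially the same route as the paper: encode each vertex by its membership in the $2m$ edge-shadows, invoke Lemma~\ref{lem:atmost_clique_cutedges} to realize any $H\in\scHm(G)$ as a monotone disjunction of at most $\omega(G)$ of these features, and run Winnow. Your added always-$1$ coordinate to cover $H=V$ is a nice bit of care that the paper omits (Lemma~\ref{lem:atmost_clique_cutedges} gives an empty union when $\Cut(H)=\emptyset$), but otherwise the arguments coincide.
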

\begin{proof}
    Let $E=\{\{u_1,v_1\},\dots,\{u_m,v_m\}\}$ and $S = (u_1/v_1,\dots,u_m/v_m,v_1/u_1,\dots,v_m/u_m)$. Each $S_i=u_i/v_i$ can be computed in polynomial time by computing the distance matrix once (all pairs shortest path). We represent each vertex $v\in V$ as a binary vector $\phi(v)\in\{0,1\}^{2m}$ (where, we recall, $m=|E(G)|$) such that $\phi(v)_i = \mathbbm{1}_{\{v\in S_i\}}$. 
    Using Lemma~\ref{lem:atmost_clique_cutedges}, we know that any monophonic halfspace can be represented as the union of at most $k\leq\omega(G)$ sets in the sequence $S$. Hence, any monophonic halfspace can be represented as a disjunction 
    \begin{equation}
        \phi(\cdot)_{i_1}\vee\dots\vee \phi(\cdot)_{i_k}
    \end{equation}
    of at most $k\leq\omega(G)$ of the $2m$ variables in $\phi(\cdot)$. Thus, we can run Winnow on $\phi(V)=\{\phi(v) \mid v\in V\}$ and achieve a mistake bound of $\scO(\omega(G)\log n)$ where we note that $\log(2m)=\scO(\log n)$. As each step of Winnow takes time $\scO(m)=\scO(n^2)$, the overall runtime is dominated by the construction of $\phi(V)$.
\end{proof}
Winnow nearly achieves the mistake bound as Halving in Theorem~\ref{thm:halving} (worse by only a logarithmic factor), yet in polynomial time without the need to enumerate the version space. The achieved mistakes is near-optimal. %
In particular we have the following simple result.
\mistakelowerbound*
{\renewcommand{\proofname}{Proof of Proposition~\ref{prop:online_lower_bound}.}
\begin{proof}
    For $S_4$ graphs $\omega(G)\leq\VC(V,\scHm(G))$ by Proposition~\ref{prop:vc_s4_lower}, which is a lower bound on the optimal number of mistakes \citep{littlestone1988learning}.
\end{proof}
}

\subsection{Agnostic Online Learning}\label{sec:agnostic_online}
Compared to the realizable setting studied so far, in the agnostic case we have no guarantee that there exists a hypothesis $H \in \scHm$ that makes no mistakes with respect to the true labels.
In other words, the true labels revealed by the environment during the online learning process can be determined according to any (fixed) arbitrary $f \in 2^V$.
Even though we now cannot show generally good mistake bounds as the ones in the realizable setting, we can at least provide guarantees on the performance with respect to the best fixed hypothesis, i.e., $H^* \in \argmin_{H \in \scHm(G)} M(H,f)$.
Indeed, if we let $M^* = M(H^*,f)$ be the minimum number of mistakes achieved by some hypothesis in $\scHm(G)$, we can design algorithms $A$ whose mistake bound $M(A,f)$ is a function of $M^*$.
We can show, in particular, that $M(A,f)-M^*$ can be upper bounded to provide guarantees for online learning in the agnostic case.\footnote{Note that $M(A,f)-M^*$, or better $\mathbb{E}[M(A,f)]-M^*$ if $A$ is randomized, takes the name of (expected) \emph{regret}.}

In order to achieve such guarantees, though, we can easily incur in large running times if we adopt one of the common strategies for agnostic online learning.
For example, adopting a learning-with-expert-advice algorithm $A$ such as Hedge \citep{cesa1997use}---also see, e.g., \citet[Chapter~2]{cesabianchi2006prediction}---where the experts are the hypotheses $\scHm(G)$ listed by \AlgoList, would provide an expected mistake bound of
\begin{equation}
    \mathbb{E}[M(A,f)] = M^* + \scO\left(\sqrt{T \log|\scHm(G)|}\right) = M^* + \scO\left(\sqrt{T\Bigl(\omega(G)+ \log\frac{m}{\omega(G)}\Bigr)}\right)
\end{equation}
for any $f \in 2^V$, without prior knowledge on $\omega(G)$ or $M^*$.
Moreover, one could provide an improved guarantee with a finer tuning for the learning rate \citep{ben2009agnostic} of $A$ to further show that
\begin{align}
    \mathbb{E}[M(A,f)] &\le M^* + \sqrt{2 M^* \ln|\scHm(G)|} + \ln|\scHm(G)| \\
    &= M^* + \scO\left(\sqrt{M^* \Bigl(\omega(G) + \log\frac{m}{\omega(G)}\Bigr)} + \omega(G) + \log\frac{m}{\omega(G)}\right) \enspace,
\end{align}
which can similarly be achieved by lifting prior knowledge on $M^*$ via a doubling trick.
However, approaches that implement learning with expert advice in a straightforward way would require to maintain weights over all $|\scHm(G)| \le \frac{4m 2^{\omega(G)}}{\omega(G)}+2$ hypotheses, thus incurring in $\scO(2^{\omega(G)} \poly(n))$ running time per round.

If we instead run the Winnow algorithm over the same task, it is possible to achieve similar rates (only loosing a factor of roughly $\omega(G)$) for the mistake bound while keeping computational efficiency even in the agnostic case.
As shown by \citet{blum1996online}, Winnow has the following mistake bound in the agnostic case.
\begin{proposition}[{\citet[Theorem~6]{blum1996online}}]
    For any sequence of examples consistent with any $f \in 2^{[n]}$ and any $r$-literal disjunction $h$ over $n$ variables, the number of mistakes made by Winnow is $\scO\bigl(r(M(h,f) + \log n)\bigr)$.
\end{proposition}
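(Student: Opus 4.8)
\textbf{Proof proposal for Theorem~\ref{thm:agnosticwinnow}.}

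The plan is to combine the sparse-disjunction representation of monophonic halfspaces from Lemma~\ref{lem:atmost_clique_cutedges} with the agnostic mistake bound for Winnow due to \citet[Theorem~6]{blum1996online}, which was quoted just above. First I would recall the feature map used in the realizable Winnow result (Section~\ref{apx:online_winnow}): enumerate the edges $E=\{\{u_1,v_1\},\dots,\{u_m,v_m\}\}$ and form the sequence of m-shadows $S=(u_1/v_1,\dots,u_m/v_m,v_1/u_1,\dots,v_m/u_m)$, each computable in $\poly(n)$ time via Lemma~\ref{lem:shadows_in_P} (or a single all-pairs shortest-path computation). For each vertex $v\in V$ define $\phi(v)\in\{0,1\}^{2m}$ by $\phi(v)_i=\mathbbm{1}\{v\in S_i\}$. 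By Lemma~\ref{lem:atmost_clique_cutedges}, every m-halfspace $H\in\scHm(G)$ equals a union of at most $\omega(G)$ of the sets in $S$, hence it corresponds exactly to a monotone $r$-literal disjunction over the $2m$ features $\phi(\cdot)$ with $r\le\omega(G)$. This is the key structural fact that makes the reduction go through.

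Next I would set up the agnostic reduction. We run Winnow on the stream $\phi(v_1),\phi(v_2),\dots$ with true labels $y_1,y_2,\dots$ determined by the (arbitrary, adversarial) target $f\in 2^V$. Fix $H^*\in\argmin_{H\in\scHm(G)} M(H,f)$ and let $M^*=M(H^*,f)$. Let $h^*$ be the monotone disjunction over the $2m$ features that represents $H^*$; then $h^*$ has $r\le\omega(G)$ literals, and the number of rounds on which $h^*$ predicts wrongly on the transformed stream equals exactly $M(H^*,f)=M^*$, because $\phi$ is a faithful encoding ($v\in H^* \iff h^*(\phi(v))=1$). Applying \citet[Theorem~6]{blum1996online} with this $f$ (lifted to the feature space), this $h=h^*$, $n$ replaced by $2m$ variables, and $r=\omega(G)$, Winnow makes at most $\scO\bigl(\omega(G)(M^*+\log(2m))\bigr)=\scO\bigl(\omega(G)(M^*+\log n)\bigr)$ mistakes, where we used $\log(2m)=\scO(\log n)$. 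Finally, each Winnow update touches $\scO(m)=\scO(n^2)$ coordinates, and the one-time cost of building $\phi(V)$ is $\poly(n)$, so the overall runtime is $\poly(n)$.

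I do not expect a serious obstacle here, since the heavy lifting is already done: Lemma~\ref{lem:atmost_clique_cutedges} supplies the $\le\omega(G)$-literal representation and the Blum bound is quoted as a black box. The one point that needs care is making sure the reduction preserves the \emph{comparator}: the agnostic guarantee of Blum's theorem is stated relative to an arbitrary disjunction $h$, and we must argue that the \emph{best} m-halfspace $H^*$ maps to such an $h$ with literal count $r\le\omega(G)$ and with $M(h,\text{encoded stream})=M^*$ — this is immediate from Lemma~\ref{lem:atmost_clique_cutedges} and the exactness of $\phi$, but it is the step that deserves an explicit sentence. A secondary subtlety is that Winnow is usually tuned with knowledge of $r$; if we do not want to assume $\omega(G)$ is known in advance we can either pass an upper bound (e.g. $r=n$, degrading the bound only by constants in the $\log$) or wrap Winnow in a standard doubling/guessing schedule over $r$, at the cost of an extra $\scO(\log\omega(G))$ factor; I would mention this but keep the clean statement for the case where $\omega(G)$ (or any polynomial-time-computable upper bound, such as the size of a greedily found clique) is available.
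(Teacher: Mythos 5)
Your proposal is correct and matches the paper's route exactly: the quoted statement is an external result of \citet{blum1996online} that the paper uses as a black box, and the paper's entire argument for Theorem~\ref{thm:agnosticwinnow} is the single sentence that the proposition ``immediately implies'' it via the same feature map $\phi$ and the $\le\omega(G)$-literal disjunction representation from Lemma~\ref{lem:atmost_clique_cutedges} used in the realizable Winnow proof. Your elaboration of the comparator step (that $H^*$ maps to a monotone disjunction $h^*$ with $r\le\omega(G)$ literals making exactly $M^*$ mistakes on the encoded stream) is precisely the detail the paper leaves implicit, and your remarks on tuning $r$ are a reasonable extra caveat but not needed for the stated bound.
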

This immediately implies Theorem~\ref{thm:agnosticwinnow} for agnostic online learning with $\scHm(G)$.

\end{document}